\definecolor{darkred}{RGB}{150,0,0}
\definecolor{darkgreen}{RGB}{0,150,0}
\definecolor{darkblue}{RGB}{0,0,200}
\newtheorem{theorem}{Theorem}[section]
\newtheorem{lemma}[theorem]{Lemma}
\newtheorem{corollary}[theorem]{Corollary}
\newtheorem{definition}[theorem]{Definition}
\newcommand{\eps}{\varepsilon}
\newcommand{\beq}{\begin{equation}}
\newcommand{\eeq}{\end{equation}}
\newcommand{\nn}{\nonumber}
\newcommand{\A}{{\mtx{A}}}
\newcommand{\B}{{{\mtx{B}}}}
\newcommand{\Gb}{{\mtx{G}}}
\newcommand{\F}{{\mtx{F}}}
\newcommand{\Db}{{\mtx{D}}}
\newcommand{\Sb}{{\mtx{S}}}
\newcommand{\Rb}{{\mtx{R}}}
\newcommand{\Kh}{{\hat{K}}}
\newcommand{\Iden}{{\bf{I}}}
\newcommand{\z}{{\mtx{z}}}
\newcommand{\Cc}{\mathcal{C}}
\newcommand{\Bc}{\mathcal{B}}
\newcommand{\Sc}{\mathcal{S}}
\newcommand{\Nn}{\mathcal{N}}
\newcommand{\vb}{\mtx{v}}
\newcommand{\w}{\mtx{w}}
\newcommand{\li}{\left<}
\newcommand{\ri}{\right>}
\newcommand{\ab}{\vct{a}}
\newcommand{\bb}{\vct{b}}
\newcommand{\ub}{\vct{u}}
\newcommand{\h}{\vct{h}}
\newcommand{\g}{\vct{g}}
\newcommand{\onenorm}[1]{\left\|#1\right\|_{\ell_1}}
\newcommand{\infnorm}[1]{\left\|#1\right\|_{\ell_\infty}}
\newcommand{\twonorm}[1]{\left\|#1\right\|_{\ell_2}}
\newcommand{\x}{{\vct{x}}}
\newcommand{\y}{{\vct{y}}}
\definecolor{emmanuel}{RGB}{255,127,0}
\newcommand{\R}{\mathbb{R}}
\newcommand{\Pro}{\mathbb{P}}
\newcommand{\sgn}[1]{\textrm{sgn}(#1)}
\newcommand{\E}{\operatorname{\mathbb{E}}}
\newcommand{\vct}[1]{\bm{#1}}
\newcommand{\mtx}[1]{\bm{#1}}
\newcommand{\ml}{m_{lin}}
\newcommand{\bke}{{\bar{K}_{\eps}}}
\newcommand{\order}[1]{{\cal{O}}( #1)}
\newcommand{\tn}{\twonorm}
\newcommand{\on}{\onenorm}
\newcommand{\ino}{\infnorm}
\numberwithin{equation}{section} 
\def \endprf{\hfill {\vrule height6pt width6pt depth0pt}\medskip}
\newenvironment{proof}{\noindent {\bf Proof} }{\endprf\par}
\newcommand*\samethanks[1][\value{footnote}]{\footnotemark[#1]}
\newcommand{\ang}{\text{ang}}
\newcommand{\ham}[2]{{{\|#1,#2\|_H}}}
\title{Near-Optimal Bounds for Binary Embeddings of Arbitrary Sets} 
\author{Samet Oymak\thanks{Simons Institute, UC Berkeley}~\thanks{Department of Electrical Engineering and Computer Science, UC Berkeley}\quad \quad\quad\quad\quad Benjamin Recht\samethanks[2]~\thanks{Department of Statistics, UC Berkeley, Berkeley CA}}
\date{}
\begin{document}
\maketitle

\begin{abstract} We study embedding a subset $K$ of the unit sphere to the Hamming cube $\{-1,+1\}^m$. We characterize the tradeoff between distortion and sample complexity $m$ in terms of the Gaussian width $\omega(K)$ of the set. For subspaces and several \emph{structured-sparse} sets we show that Gaussian maps provide the optimal tradeoff $m\sim \delta^{-2}\omega^2(K)$, in particular for $\delta$ distortion one needs $m\approx\delta^{-2}{d}$ where $d$ is the subspace dimension. For general sets, we provide sharp characterizations which reduces to $m\approx{\delta^{-4}}{\omega^2(K)}$ after simplification. We provide improved results for local embedding of points that are in close proximity of each other which is related to locality sensitive hashing. We also discuss faster binary embedding where one takes advantage of an initial sketching procedure based on Fast Johnson-Lindenstauss Transform. Finally, we list several numerical observations and discuss open problems. 
\end{abstract}

\section{Introduction}
Thanks to applications in high-dimensional statistics and randomized linear algebra, recent years saw a surge of interest in dimensionality reduction techniques. As a starting point, Johnson-Lindenstauss lemma considers embedding a set of points in a high dimensional space to a lower dimension while approximately preserving pairwise $\ell_2$-distances. While $\ell_2$ distance is the natural metric for critical applications, one can consider embedding with arbitrary norms/functions. In this work, we consider embedding a high-dimensional set of points $K$ to the hamming cube in a lower dimension, which is known as binary embedding. Binary embedding is a natural problem arising from quantization of the measurements and is connected to $1$-bit compressed sensing as well as locality sensitive hashing \cite{wang2014hashing,plan2013robust,boufounos20081,kushilevitz2000efficient,thrampoulidis2015lasso,plan2014high,ai2014one}. In particular, given a subset $K$ of the unit sphere in $\R^n$, and a dimensionality reduction map $\A\in\R^{m\times n}$, we wish to ensure that
\beq
\left|\frac{1}{m}\|\sgn{\A\x},\sgn{\A\y}\|_H-\ang(\x,\y)\right|\leq \delta.\label{main bound}
\eeq
Here $\ang(\x,\y)\in [0,1]$ is the geodesic distance between the two points which is obtained by normalizing the smaller angle between $\x,\y$ by $\pi$. $\|\ab,\bb\|_H$ returns the Hamming distance between the vectors $\ab$ and $\bb$ i.e. the number of entries for which $\ab_i\neq \bb_i$. Relation between the geodesic distance and Hamming distance arises naturally when one considers a Gaussian map. For a vector $\g\sim\Nn(0,\Iden_n)$, $\|\sgn{\g^T\x},\sgn{\g^T\y}\|_H$ has a Bernoulli distribution with mean $\ang(\x,\y)$. This follows from the fact that $\g$ corresponds to a uniformly random hyperplane and $\x,\y$ lies on the opposite sides of the hyperplane with probability $\ang(\x,\y)$. Consequently, when $\A$ has independent standard normal entries, a standard application of Chernoff bound shows that \eqref{main bound} holds with probability $1-\exp(-2\delta^2m)$ for a given $(\x,\y)$ pair. This argument is sufficient to ensure binary embedding of finite set of points in a similar manner to Johnson-Lindenstrauss Lemma i.e.~$m\geq \order{\frac{\log p}{\delta^2}}$ samples are sufficient to ensure \eqref{main bound} for a set of $p$ points. 

{\bf{Embedding for arbitrary sets:}} When dealing with sets that are not necessarily discrete, highly nonlinear nature of the sign function makes the problem more challenging. Recent literature shows that for a set $K$ with infinitely many points, the notion of size can be captured by mean width $\omega(K)$
\beq
\omega(K)=\E_{\g\sim\Nn(0,\Iden_n)}[\sup_{\vb\in K}\g^T\vb].\label{gauss width}
\eeq
For the case of linear embedding, where the goal is preserving $\ell_2$ distances of the points, it is well-known that $m\sim\order{\delta^{-2}\omega^2(K)}$ samples ensure $\delta$-distortion for a large class of random maps. The reader is referred to a growing number of literature for results on $\ell_2$ embedding of continuous sets \cite{bourgain2013toward,oymak2015isometric,mendelson2008uniform,rudelson2008sparse}. There is much less known about the the randomized binary embeddings of arbitrary sets and the problem was initially studied by Plan and Vershynin \cite{plan2014dimension} when the mapping $\A$ is Gaussian. Remarkably, the authors show that embedding \eqref{main bound} is possible if $m\gtrsim \delta^{-6}{\omega^2(K)}{}$. This result captures the correct relation between the sample complexity $m$ and the set size $\omega(K)$ in a similar manner to $\ell_2$-embedding. On the other hand, it is rather weak when it comes to the dependence on the distortion $\delta$. Following from the result on finite set of points, intuitively, this dependence should be $\delta^{-2}$ instead of $\delta^{-6}$. For the set of sparse vectors Jacques et al.~obtained near-optimal sample complexity $\order{\delta^{-2}s\log(n/\delta)}$ \cite{jacques2013robust}. Related to this, Yi et al. \cite{yi2015binary} considered the same question where they showed that $\delta^{-4}$ is achievable for simpler sets such as intersection of a subspace and the unit sphere. More recently in \cite{jacques2015small} Jacques studies a related problem in which samples $\A\x$ are uniformly quantized instead of being discretized to $\{+1,-1\}$. There is also a growing amount of literature related to binary embedding and one-bit compressed sensing \cite{boufounos20081,thrampoulidis2015lasso,plan2014high,ai2014one}. In this work, we significantly improve the distortion dependence bounds for the binary embedding of arbitrary sets. For structured set such as subspaces, sparse signals, and low-rank matrices we obtain the optimal dependence $\order{\delta^{-2}\omega^2(K)}$. In particular, we show that a $d$ dimensional subspace can be embedded with $m=\order{\delta^{-2}{d}{}}$ samples. For general sets, we find a bound on the embedding size in terms of the covering number and local mean-width which is a quantity always upper bounded by mean width. While the specific bound will be stated later on, in terms of mean-width we show that $m=\order{\delta^{-4}{\omega^2(K)}{}}$ samples are sufficient for embedding. For important sets such as subspaces, set of sparse vectors and low-rank matrices our results have a simple message: Binary embedding works just as well as linear embedding.

{\bf{Locality sensitive properties:}} In relation to locality sensitive hashing \cite{andoni2006near,charikar2002similarity}, we might want $\x,\y$ to be close to each other if and only if $\sgn{\A\x},\sgn{\A\y}$ are close to each other. Formally, we consider the following questions regarding a binary embedding.
\begin{itemize}
\item How many samples do we need to ensure that for all $\x,\y$ satisfying $\ang(\x,\y)>\delta$, we have that $\frac{1}{m}\|\sgn{\A\x},\sgn{\A\y}\|_H\gtrsim\order{\delta}$?
\item How many samples do we need to ensure that for all $\x,\y$ satisfying $\frac{1}{m}\|\sgn{\A\x},\sgn{\A\y}\|_H>\delta$, we have that $\ang(\x,\y)\gtrsim\order{\delta}$?
\end{itemize}
These questions are less restrictive compared to requiring \eqref{main bound} for all $\x,\y$ as a result they should  intuitively require less sample complexity. Indeed, we show that the distortion dependence can be improved by a factor of $\delta$ e.g. for a subspace one needs only $m=\order{\frac{\omega^2(K)}{\delta}\log\delta^{-1}}$ samples and for general sets one needs only $m=\order{\frac{\omega^2(K)}{\delta^3}\log\delta^{-1}}$ samples. Our distortion dependence is an improvement over the related results of Jacques \cite{jacques2015small}.

{\bf{Sketching for binary embedding:}} While providing theoretical guarantees are beneficial, from an application point of view, efficient binary embedding is desirable. A good way to accomplish this is to make use of matrices with fast multiplication. Such matrices include subsampled Hadamard or Fourier matrices as well as super-sparse ensembles. Efficient sketching matrices found applications in a wide range of problems to speed up machine learning algorithms \cite{cohen2014dimensionality,woodruff2014sketching,gittens2013revisiting}. For binary embedding, recent works in this direction include \cite{yi2015binary,yu2014circulant} which have limited theoretical guarantees for discrete sets. In Section \ref{sec:sketch} we study this procedure for embedding arbitrary sets and provide guarantees to achieve faster embeddings.

\section{Main results on binary embedding}

Let us introduce the main notation that will be used for the rest of this work. $\Sc^{n-1}$ and $\Bc^{n-1}$ are the unit Euclidian sphere and ball respectively. $\Nn(\boldsymbol{\mu},\boldsymbol{\Sigma})$ denotes a Gaussian vector with mean $\boldsymbol{\mu}$ and covariance $\boldsymbol{\Sigma}$. $\Iden_n$ is the identity matrix of size $n$. A matrix with independent $\Nn(0,1)$ entries will be called standard Gaussian matrix. Gaussian width $\omega(\cdot)$ is defined above in \eqref{gauss width} and will be used to capture the size of our set of interest $K\subset\R^n$. $c,C,\{c_i\}_{i=0}^\infty,\{C_i\}_{i=0}^\infty$ denote positive constants that may vary from line to line. $\ell_p$ norm is denoted by $\|\cdot\|_{\ell_p}$ and $\|\cdot\|_0$ returns the sparsity of a vector. Given a set $K$ denote $\text{cone}(K)=\{\alpha \vb~|~\alpha\geq 0,~\vb\in K\}$ by $\Kh$ and define the binary embedding as follows.

\begin{definition} [$\delta$-binary embedding] \label{bin embed}Given $\delta\in (0,1)$, $f:\R^n\rightarrow\{0,1\}^m$ is a $\delta$-binary embedding of the set $\Cc$ if for all $\x,\y\in\Cc$, we have that
\beq
|\frac{1}{m}\|f(\x),f(\y)\|_H-\ang(\x,\y)|\leq \delta.\nn 
\eeq
\end{definition}

With this notation, we are in a position to state our first result which provides distortion bounds on binary embedding in terms of the Gaussian width $\omega(K)$. 

\begin{theorem} \label{thm main 1} Suppose $\A\in \R^{m\times n}$ has independent $\Nn(0,1)$ entries. Given set $K\subset\Sc^{n-1}$ and a constant $1>\delta>0$, there exists positive absolute constants $c_1,c_2$ such that the followings hold.
\begin{itemize}
\item {\bf{$\Kh$ is a subspace:}} Whenever $m\geq c_1 \frac{\omega^2(K)}{\delta^2}$, with probability $1-\exp(-c_2\delta^2 m)$, $\A$ (i.e.~$f:\x\rightarrow\sgn{\A\x}$) is a $\delta$-binary embedding for $K$.
\item {\bf{$K$ is arbitrary:}} Whenever $m\geq c_1 \frac{\omega^2(K)}{\delta^4}\log\frac{1}{\delta}$, with probability $1-\exp(-c_2 \delta^2m)$, $\A$ is a $\delta$-binary embedding for $K$.
\end{itemize}
\end{theorem}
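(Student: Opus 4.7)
The plan is to combine a net argument with a uniform bound on the oscillation of the map $\x\mapsto \sgn{\A\x}$. Fix a scale $\eps>0$ and let $K_\eps\subset K$ be an $\eps$-net. By Sudakov's inequality $\log|K_\eps|\lesssim \omega^2(K)/\eps^2$; when $\Kh$ is a subspace of dimension $d$, the volumetric bound $\log|K_\eps|\lesssim d\log(1/\eps)$ is sharper. For a single pair $(\x',\y')\in K_\eps\times K_\eps$ the quantity $\tfrac{1}{m}\|\sgn{\A\x'},\sgn{\A\y'}\|_H$ is the empirical mean of $m$ i.i.d.\ Bernoulli$(\ang(\x',\y'))$ random variables, so Chernoff's bound places it within $\delta/2$ of $\ang(\x',\y')$ with probability at least $1-\exp(-c\delta^2 m)$. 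A union bound over $K_\eps\times K_\eps$ therefore handles the net step as soon as $m\gtrsim \log|K_\eps|/\delta^2$.

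The core technical step is to extend this from $K_\eps$ to $K$ by bounding, uniformly over $\x\in K$ and its nearest net point $\x'\in K_\eps$, the number of sign flips $N(\x,\x')=|\{i:\sgn{\iprod{\ab_i}{\x}}\neq \sgn{\iprod{\ab_i}{\x'}}\}|$, where $\ab_i^T$ is the $i$th row of $\A$. A flip at row $i$ forces $|\iprod{\ab_i}{\x'}|\leq |\iprod{\ab_i}{\x-\x'}|$, so for any threshold $t>0$ we have
\[
N(\x,\x')\leq \bigl|\{i:|\iprod{\ab_i}{\x'}|\leq t\}\bigr| + \bigl|\{i:|\iprod{\ab_i}{\x-\x'}|\geq t\}\bigr|.
\]
Hence it suffices to control the small-ball count $S_1(t)=\sup_{\x'\in K}|\{i:|\iprod{\ab_i}{\x'}|\leq t\}|$ together with the deviation count $S_2(t,\eps)=\sup_{\x,\x'\in K,\,\|\x-\x'\|\leq \eps}|\{i:|\iprod{\ab_i}{\x-\x'}|\geq t\}|$, and to choose $t,\eps$ so that $S_1+S_2\leq \delta m/2$ with the failure probability already absorbed at the net step.

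When $\Kh$ is a subspace, rotational invariance reduces the analysis to $K\subset \Sc^{d-1}$ with $\A$ an $m\times d$ standard Gaussian matrix. Gaussian concentration on the sphere combined with a chaining argument yields $S_1(t)\lesssim tm + d$ and $S_2(t,\eps)\lesssim m\exp(-ct^2/\eps^2)+d$, so choosing $t,\eps\asymp \delta$ gives $N(\x,\x')\leq \delta m/2$ whenever $m\gtrsim d/\delta^2$; a chaining refinement of the Chernoff union bound absorbs the spurious $\log(1/\eps)$ factor, producing the first assertion. For arbitrary $K$ the same decomposition is in force, but the uniform bounds are weaker: generic chaining or the small-ball method gives $S_1(t)\lesssim tm + \omega^2(K)/t$ and $S_2(t,\eps)\lesssim m\exp(-ct^2/\eps^2) + \omega^2(K)/\eps^2$. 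Balancing these forces $\eps\asymp \delta^2/\sqrt{\log(1/\delta)}$ and $t\asymp \delta$, which after combining with the net union bound yields the claimed $m\gtrsim \omega^2(K)\delta^{-4}\log(1/\delta)$.

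The main obstacle, in both parts, is the uniform control of the small-ball quantity $S_1(t)$: its leading term must actually shrink with $t$ rather than saturate at a constant fraction of $m$, since otherwise the sign-flip count can never be squeezed below $\delta m$. Rotational symmetry of the subspace makes this tight, whereas for arbitrary $K$ an empirical-process detour is required, and this is precisely what costs the two extra powers of $\delta^{-1}$ (plus a logarithmic factor) in the general-set sample complexity.
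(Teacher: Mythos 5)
Your high-level architecture---a Chernoff-plus-union-bound step on an $\eps$-net followed by a uniform bound on the number of sign flips between each point and its nearest net point---is the same as the paper's (Lemma \ref{outer lemma} combined with Theorem \ref{prop second}); the paper implements the local step via a different mechanism, comparing the bottom-$\delta m$ $\ell_1$-mass $\|\A\x'\|_{\delta m-}$ against the top-$\delta m$ $\ell_1$-mass $\sup_{\vb\in K_\eps}\|\A\vb\|_{\delta m+}$ (Lemma \ref{lemma basic}, Lemma \ref{lemma minus}, and the Slepian bound of Lemma \ref{slepian}), whereas you split on a threshold $t$, and either mechanism can in principle work. However, your parameter choices do not produce the rates you claim. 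In the subspace case, taking $t,\eps\asymp\delta$ makes the deviation term $m\exp(-ct^2/\eps^2)=\Theta(m)$, which can never be pushed below $\delta m/2$; you are forced to $\eps\lesssim t/\sqrt{\log(1/\delta)}\asymp\delta/\sqrt{\log(1/\delta)}$, and then $\log|K_\eps|\lesssim d\log(1/\delta)$ reintroduces, through the net step $m\gtrsim\delta^{-2}\log|K_\eps|$, exactly the $\log(1/\delta)$ factor that the first bullet of the theorem does not have. In the general case, your choice $\eps\asymp\delta^2/\sqrt{\log(1/\delta)}$ together with Sudakov's bound makes the net requirement $m\gtrsim\delta^{-2}\log|K_\eps|\gtrsim\omega^2(K)\delta^{-6}\log(1/\delta)$, i.e.\ $\delta^{-6}$, not the claimed $\delta^{-4}\log(1/\delta)$; the correct scale is again $\eps\asymp\delta/\sqrt{\log(1/\delta)}$ (the paper's choice), under which the net term $\omega^2(K)\delta^{-4}\log(1/\delta)$ dominates and your $S_2$ requirement (if its claimed form holds) only costs $m\gtrsim\omega^2(K)\delta^{-3}\log(1/\delta)$, so the second bullet is salvageable along your route.

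The more serious gap is the subspace bullet without the logarithm. The sentence asserting that ``a chaining refinement of the Chernoff union bound absorbs the spurious $\log(1/\eps)$ factor'' is precisely the hardest content of the theorem, and you do not supply it: with any single-scale net argument of the type you describe the best available bound is $m\gtrsim d\,\delta^{-2}\log(1/\delta)$. The paper removes the logarithm in Theorem \ref{improved embed} by a genuinely multiscale argument: a chain of covers at scales $2^{-i}$, a decomposition of the Hamming distance into increments such as $d(\x_i,\x_{i-1},\y_{i-1},\y_i)$ whose summands are $\{-1,0,+1\}$-valued with nonzero-probability $\lesssim 2^{-i}$, and a tailored Bernstein-type bound (Lemma \ref{lem triple}) yielding deviation $\eps_i=\sqrt{i}\,2^{-i/2}\delta$ at scale $i$ with failure probability $\exp(-i\delta^2m/8)$, so that the deviations sum to $O(\delta)$ while the union bound over $\exp(O(iC(K)))$ quadruples is absorbed. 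Without an argument of this kind your proof does not establish the first bullet at all. Separately, your uniform bounds $S_1(t)\lesssim tm+\omega^2(K)/t$ and $S_2(t,\eps)\lesssim m\exp(-ct^2/\eps^2)+\omega^2(K)/\eps^2$ are asserted rather than proved; the indicators involved are not Lipschitz, so ``Gaussian concentration plus chaining'' needs care here, and the exact form of these bounds is what drives the final exponents. Note that $S_1$ is only needed over the finite net, where a per-point Chernoff bound plus a union bound suffices (this is what Lemma \ref{lemma minus} does in $\ell_1$ form); the continuum part $S_2$ is where the paper instead uses the Slepian comparison and Gaussian Lipschitz concentration to get $\sup_{\vb\in K_\eps}\|\A\vb\|_{\delta m+}\lesssim \eps\delta\sqrt{\log(1/\delta)}\,m+\sqrt{\delta m}\,\omega(K_\eps)$, which you could adopt to make your $S_2$ step rigorous.
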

For arbitrary sets, we later on show that the latter bound can be improved to $\frac{\omega^2(K)}{\delta^4}$. This is proven by applying a sketching procedure and is deferred to Section \ref{sec:sketch}.

The next theorem states our results on the local properties of binary embedding namely it characterizes the behavior of the embedding in a small neighborhood of points.
\begin{definition} [Local $\delta$-binary embedding] \label{loc bin embed}Given $\delta\in (0,1)$, $f:\R^n\rightarrow\{0,1\}^m$ is a local $\delta$-binary embedding of the set $\Cc$ if there exists constants $c_{up}, c_{low}$ such that
\begin{itemize} 
\item For all $\x,\y\in\Cc$ satisfying $\tn{\x-\y}\geq \delta$: $|\frac{1}{m}\|f(\x),f(\y)\|_H-\ang(\x,\y)|\geq c_{low}\delta$.
\item For all $\x,\y\in\Cc$ satisfying $\tn{\x-\y}\leq \delta/\sqrt{\log \delta^{-1}}$: $|\frac{1}{m}\|f(\x),f(\y)\|_H-\ang(\x,\y)|\leq c_{up}\delta$.
\end{itemize}
\end{definition}
\begin{theorem} \label{thm main 2} Suppose $\A\in \R^{m\times n}$ is a standard Gaussian matrix. Given set $K\subset\Sc^{n-1}$ and a constant $1>\delta>0$, there exist positive absolute constants $c_1,c_2$ such that the followings hold.
\begin{itemize}
\item {\bf{$\Kh$ is a subspace:}} Whenever $m\geq c_1 \frac{\omega^2(K)}{\delta}\log\frac{1}{\delta}$, with probability $1-\exp(-c_2 \delta m)$, $\A$ is a local $\delta$-binary embedding for $K$.
\item {\bf{$K$ is arbitrary:}} Whenever $m\geq c_1 \frac{\omega^2(K)}{\delta^3}\log\frac{1}{\delta}$, with probability $1-\exp(-c_2 \delta m)$, $\A$ is a local $\delta$-binary embedding for $K$.
\end{itemize}
\end{theorem}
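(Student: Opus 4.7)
The plan is to follow the same Sudakov-net and union-bound skeleton used for Theorem~\ref{thm main 1}, but to replace the worst-case Chernoff bound by a Bernstein-type bound that exploits the fact that the pairs of interest have small angle, so the underlying Bernoulli indicators have small variance.

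For a fixed pair $\x,\y\in\Sc^{n-1}$ with $\theta=\ang(\x,\y)$, the Hamming distance $\|\sgn{\A\x},\sgn{\A\y}\|_H$ is a sum of $m$ independent Bernoulli$(\theta)$ indicators. Since their variance is $\theta(1-\theta)\leq\theta$, Bernstein's inequality gives
\[
\Pro\!\left(\left|\tfrac{1}{m}\|\sgn{\A\x},\sgn{\A\y}\|_H-\theta\right|>t\right)\leq 2\exp\!\left(-\frac{c\, m\, t^2}{\theta+t}\right).
\]
In the regime $\theta,t=\Theta(\delta)$, which is the only one relevant for the local property, this tail is $\exp(-c\delta m)$ instead of $\exp(-c\delta^2 m)$, and a full factor $\delta$ is recovered in the exponent; this is the source of the $\delta^{-1}$ improvement over Theorem~\ref{thm main 1}. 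The same bound handles both halves of Definition~\ref{loc bin embed}: for $\tn{\x-\y}\geq\delta$ (hence $\theta\gtrsim\delta$) it prevents the empirical Hamming distance from collapsing below $c_{low}\delta$, and for $\tn{\x-\y}\leq\delta/\sqrt{\log\delta^{-1}}$ (hence $\theta\lesssim\delta$) it prevents it from exceeding $c_{up}\delta$.

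To lift the pointwise estimate to a uniform one, I introduce a Euclidean $\eta$-net $\Nn_\eta$ of $K$; Sudakov's inequality gives $\log|\Nn_\eta|\leq C\omega^2(K)/\eta^2$. Apply the Bernstein bound to every pair $(\x_0,\y_0)\in\Nn_\eta\times\Nn_\eta$ at distance $O(\delta)$ and take a union bound: the exponent becomes $-c\delta m+C\omega^2(K)/\eta^2$, which is negative provided $m\gtrsim\omega^2(K)/(\delta\eta^2)$. The discretization error is the number of rows where $\sgn{\A\x}$ differs from $\sgn{\A\x_0}$; such a row must satisfy $|(\A\x_0)_i|\leq|(\A(\x-\x_0))_i|$, so the flip count is controlled by how many entries of $\A\x_0$ are near zero. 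Gaussian anti-concentration bounds the density of near-zero entries at a single $\x_0$, and a union bound over $\Nn_\eta$, combined with a Gordon-type bound on $\sup_{\x\in K}\tn{\A(\x-\x_0)}$, makes this uniform, at the cost of a $\sqrt{\log\delta^{-1}}$ factor coming from the tail union bound on the Gaussian small-ball probabilities. In the subspace case one may take $\eta\approx\delta/\sqrt{\log\delta^{-1}}$, since $\A$ acts nearly isometrically on $\Kh$ and therefore $\tn{\A(\x-\x_0)}=O(\eta\sqrt{m})$; substituting into $m\gtrsim\omega^2(K)/(\delta\eta^2)$ yields the advertised $m\gtrsim\omega^2(K)\log\delta^{-1}/\delta$.

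For arbitrary $K$, the supremum $\sup_{\x\in K}\tn{\A(\x-\x_0)}$ is no longer governed by the Euclidean radius of the net; a Gordon/Mendelson-type bound introduces an extra term of order $\omega(K)/\sqrt{m}$ that forces $\eta$ to be smaller by a factor $\delta$, which costs $\delta^{-2}$ in the budget and yields $m\gtrsim\omega^2(K)\log\delta^{-1}/\delta^3$. The hardest step is this uniform sign-flip control: a single Gaussian coordinate $g$ satisfies $\Pro(|g|\leq t)\leq t$ trivially, but uniformizing this small-ball estimate over $\Nn_\eta$ and further over all of $K$ without losing the factor $\delta$ gained from Bernstein is precisely where the extra $\log\delta^{-1}$ arises and where the bulk of the technical work lies.
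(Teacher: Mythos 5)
Your skeleton — small-mean Chernoff/Bernstein bounds with failure probability $\exp(-c\delta m)$ on a net, plus uniform control of the number of sign flips between a point of $K$ and its nearest net point — is indeed the structure of the paper's proof (Lemma \ref{outer lemma}, Lemma \ref{lemma minus}, Theorem \ref{prop second}, assembled in Section \ref{sec local}). The genuine gap is in how you quantify the flip count. A flip at row $i$ forces $|(\A\x_0)_i|\leq|(\A(\x-\x_0))_i|$, and you propose to threshold: count rows with $|(\A\x_0)_i|\leq\tau$ by Gaussian anti-concentration and rows with $|(\A(\x-\x_0))_i|>\tau$ by an $\ell_2$/Gordon bound. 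Keeping the first count at $O(\delta m)$ forces $\tau\sim\delta$; but at your net scale $\eta\approx\delta/\sqrt{\log\delta^{-1}}$, with $\tn{\A(\x-\x_0)}\approx\eta\sqrt m$, the Markov count of rows of the second kind is $\eta^2 m/\tau^2\approx m/\log\delta^{-1}$, vastly larger than $\delta m$, so the discretization step fails as described. Shrinking the net to $\eta\sim\delta^{3/2}$ rescues the threshold argument but then your Sudakov-based union bound costs $\delta^{-1}\log N_\eta\sim\omega^2(K)\delta^{-4}$ for general sets, losing the claimed $\delta^{-3}\log\delta^{-1}$. The paper sidesteps this by comparing aggregated $\ell_1$ masses rather than thresholding row by row: Lemma \ref{lemma basic} reduces the flip count to the comparison $\|\A\x_0\|_{\delta m-}\gtrsim\delta^2 m$ (Lemma \ref{lemma minus}) versus $\sup_{\vb\in K_\eps}\|\A\vb\|_{\delta m+}\lesssim\eps\delta\sqrt{\log\delta^{-1}}\,m+\sqrt{\delta m}\,\omega(K_\eps)$ (Slepian comparison, Lemmas \ref{gauss k+} and \ref{slepian}); the Gaussian-specific fact that the top $\delta m$ entries carry only a $\delta\sqrt{\log\delta^{-1}}$ fraction of the $\ell_1$ mass (rather than the Cauchy--Schwarz $\sqrt{\delta}$) is exactly what allows the coarse net $\eps\sim\delta/\sqrt{\log\delta^{-1}}$ and hence the stated exponents.

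Two further problems. First, your subspace bound does not follow from your own estimates: substituting $\eta=\delta/\sqrt{\log\delta^{-1}}$ into $m\gtrsim\omega^2(K)/(\delta\eta^2)$ gives $\omega^2(K)\log\delta^{-1}/\delta^{3}$, not $\omega^2(K)\log\delta^{-1}/\delta$. To obtain $\delta^{-1}$ for subspaces you must replace Sudakov by the volumetric bound $\log N_\eta\lesssim d\log(1/\eta)$ and, more importantly, exploit that the local set is itself low-dimensional, $\omega(K_\eta)\lesssim\eta\sqrt d$, which is how the paper's term $\delta^{-3}\omega^2(K_\eps)$ collapses to $\delta^{-1}d/\log\delta^{-1}$ (Corollary \ref{for num 2}); "near isometry of $\A$ on $\Kh$" is not the mechanism. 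Second, the half of Definition \ref{loc bin embed} asserting that far pairs cannot have small Hamming distance requires its own uniform argument over all of $K$: a Chernoff lower bound on the net for pairs with angle $\gtrsim\delta$ combined, via the triangle inequality, with the same flip-count control (this is the contradiction argument of Section \ref{sec local}); Bernstein at net points alone does not deliver it, although your sketch does gesture in that direction.
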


We should emphasize that the second statements of Theorems \ref{thm main 1} and \ref{thm main 2} are simplified versions of a better but more involved result. To state this, we need to introduce two relevant definitions.
\begin{itemize}
\item {\bf{Covering number:}} Let $N_\eps$ be the $\eps$-covering number of $K$ with respect to the $\ell_2$ distance.
\item {\bf{Local set:}} Given $\alpha>0$, define the local set $K_\eps$ to be
\beq
K_\eps=\{\vb\in\R^n~\big|~\vb=\ab-\bb,~\ab,\bb\in K,~\tn{\ab-\bb}\leq \eps\}.\nn
\eeq
\end{itemize}
In this case, our tighter sample complexity estimate is as follows.
\begin{theorem}\label{thm main 3} Suppose $\A\in \R^{m\times n}$ is a standard Gaussian matrix. Given set $K\subset\Sc^{n-1}$ and a constant $1>\delta>0$, there exist positive absolute constants $c,c_1,c_2$ such that the followings hold. Setting $\eps=c\delta/\sqrt{\log\delta^{-1}}$, 
\begin{itemize}
\item whenever $m\geq   c_1\max\{\delta^{-2}\log N_\eps,\delta^{-3}\omega^2(K_\eps)\}$, with probability $1-\exp(-c_2 \delta^2 m)$, $\A$ is a $\delta$-binary embedding for $K$.
\item whenever $m\geq   c_1\max\{\delta^{-1}\log N_\eps,\delta^{-3}\omega^2(K_\eps)\}$, with probability $1-\exp(-c_2 \delta m)$, $\A$ is a local $\delta$-binary embedding for $K$.
\end{itemize}
\end{theorem}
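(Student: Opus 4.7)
I would handle both bullets by a single discretization-plus-perturbation argument. Fix $\eps=c\delta/\sqrt{\log\delta^{-1}}$ with $c$ small, let $\Nn\subset K$ be a minimal $\eps$-net (of cardinality $N_\eps$), and let $\pi:K\to\Nn$ send each $\x$ to its nearest net point. The triangle inequality applied both to the Hamming distance and to the geodesic distance $\ang(\cdot,\cdot)$ yields, for every $\x,\y\in K$,
\begin{equation*}
\Bigl|\tfrac{1}{m}\|\sgn{\A\x},\sgn{\A\y}\|_H-\ang(\x,\y)\Bigr|\le \Bigl|\tfrac{1}{m}\|\sgn{\A\pi(\x)},\sgn{\A\pi(\y)}\|_H-\ang(\pi(\x),\pi(\y))\Bigr|+2Z+2\eps,
\end{equation*}
where $Z:=\sup_{\x\in K}\tfrac{1}{m}|\{i:\sgn{\A_i^T\x}\neq\sgn{\A_i^T\pi(\x)}\}|$. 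It therefore suffices to control the net term and the perturbation term $Z$ separately.

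\textbf{Net step.} For fixed $\x',\y'\in\Nn$ the entrywise indicators $\mathbb{1}\{\sgn{\A_i^T\x'}\neq\sgn{\A_i^T\y'}\}$ are i.i.d.\ Bernoulli with mean $\ang(\x',\y')$. For the first bullet, Hoeffding and a union bound over the $N_\eps^2$ ordered pairs bound the net error by $\delta/3$ with probability $1-\exp(-c\delta^2 m)$ as soon as $m\gtrsim\delta^{-2}\log N_\eps$. For the second (local) bullet the Bernoulli variance is smaller: when $\tn{\x'-\y'}\le\eps$ the Bernoulli mean is $\lesssim\delta$ and Bernstein gives absolute deviation $\delta$ from $m\gtrsim\delta^{-1}\log N_\eps$; when $\tn{\x'-\y'}\ge\delta$, a one-sided multiplicative Chernoff yields $\tfrac{1}{m}\|\sgn{\A\x'},\sgn{\A\y'}\|_H\ge c_{low}\delta$ under the same $m\gtrsim\delta^{-1}\log N_\eps$.

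\textbf{Perturbation step.} A sign flip $\sgn{\A_i^T\x}\neq\sgn{\A_i^T\pi(\x)}$ forces $|\A_i^T\pi(\x)|<|\A_i^T\vb|$ with $\vb:=\x-\pi(\x)\in K_\eps$, so for any threshold $t>0$,
\begin{equation*}
Z\le\sup_{\x'\in\Nn}\tfrac{1}{m}|\{i:|\A_i^T\x'|\le t\}|+\sup_{\vb\in K_\eps}\tfrac{1}{m}|\{i:|\A_i^T\vb|> t\}|.
\end{equation*}
Set $t\asymp\delta$. The first supremum is an anti-concentration quantity: $|\A_i^T\x'|$ has bounded density near zero so $\P(|\A_i^T\x'|\le t)\lesssim t$, and a Chernoff-union bound over $\Nn$ keeps it below $c\delta$ under the net budget $m\gtrsim\delta^{-1}\log N_\eps$. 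For the second supremum, the pointwise bound is favourable: since $\tn{\vb}\le\eps$, $\P(|\A_i^T\vb|>t)\le 2\exp(-t^2/(2\eps^2))\le\delta^2$ for $c$ small, so each empirical fraction has mean $\lesssim\delta^2$. Upgrading this pointwise bound to a uniform bound on $K_\eps$ is done by a Gaussian-process / generic-chaining argument applied to the indicator class $\{\mathbb{1}\{|\A_i^T\vb|>t\}\}_{\vb\in K_\eps}$, whose complexity is controlled by the local mean width $\omega(K_\eps)$. This produces a uniform bound of size $\le c\delta/4$ once $m\gtrsim\omega^2(K_\eps)/\delta^3$. Combining with the net step yields $Z\le\delta/4$ and hence both claims.

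\textbf{Main obstacle.} The delicate point is the uniform tail bound on $\sup_{\vb\in K_\eps}\tfrac{1}{m}|\{i:|\A_i^T\vb|>t\}|$ at $t\asymp\delta$, $\eps\asymp\delta/\sqrt{\log\delta^{-1}}$. A blunt Markov-plus-matrix-deviation estimate is too lossy here because $\eps^2/t^2$ does not shrink with $\delta$; one must actually exploit the Gaussian decay of $\A_i^T\vb/\tn{\vb}$ at the chosen threshold, which forces a chaining (or contraction + Talagrand) argument rather than a pure moment bound. The exact calibration $\eps=\Theta(\delta/\sqrt{\log\delta^{-1}})$ is precisely what balances the pointwise Gaussian tail against the geodesic rounding error $2\eps$ in the triangle inequality; any larger $\eps$ breaks the perturbation bound while any smaller $\eps$ blows up $\log N_\eps$ in the net step.
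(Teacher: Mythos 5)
Your overall architecture (an $\eps$-net with $\eps=c\delta/\sqrt{\log\delta^{-1}}$, a union bound on net pairs, and a uniform control of sign flips caused by perturbations in $K_\eps$) is exactly the paper's, and your net step matches the paper's Lemma \ref{outer lemma} and the Chernoff argument used for the local statement. The gap is in the perturbation step, and it is precisely the step you flag as the ``main obstacle'' but then assert rather than prove. You bound the flip fraction by thresholding each row at a fixed level $t\asymp\delta$ and claim that $\sup_{\vb\in K_\eps}\frac{1}{m}\#\{i:|\A_i^T\vb|>t\}\le c\delta$ follows from ``generic chaining applied to the indicator class'' with complexity $\omega(K_\eps)$ under $m\gtrsim\delta^{-3}\omega^2(K_\eps)$. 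Chaining does not apply to indicator classes directly (the increments are not subgaussian in $\tn{\vb-\vb'}$), and the standard repair---replace the indicator by a $(2/t)$-Lipschitz ramp and use symmetrization plus contraction---gives an empirical-process bound of order $\omega(K_\eps)/(t\sqrt{m})$, i.e.\ it forces $m\gtrsim\delta^{-4}\omega^2(K_\eps)$, one full factor of $\delta$ worse than the theorem. So as written your argument does not reach the stated sample complexity.

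The paper closes exactly this gap by never counting exceedances of a fixed per-row threshold. Via Lemma \ref{lemma basic} the flip count is controlled by comparing aggregate sorted $\ell_1$ sums: it suffices that $\|\A\x'\|_{\delta m-}>\sup_{\vb\in K_\eps}\|\A\vb\|_{\delta m+}$. The left side is $\gtrsim\delta^2 m$ for every net point (Lemma \ref{lemma minus}), while the right side is a Lipschitz functional of $\A$ whose mean is bounded through the Slepian comparison (Lemma \ref{slepian}) by $c\,\eps\,\delta m\sqrt{\log\delta^{-1}}+\sqrt{\delta m}\,\omega(K_\eps)$; here the dual vectors are $\delta m$-sparse with $\ell_\infty\le 1$, so the width term carries the multiplier $\sqrt{\delta m}$ rather than your $\sqrt{m}/t$, and comparing with the budget $\delta^2 m$ yields $m\gtrsim\delta^{-3}\omega^2(K_\eps)$. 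The diameter term requires the Gaussian order-statistics estimate $\E\|\g\|_{\delta m+}\lesssim\delta m\sqrt{\log\delta^{-1}}$ (Lemma \ref{gauss k+}), which is where the calibration $\eps\asymp\delta/\sqrt{\log\delta^{-1}}$ you correctly intuit actually enters; you would need to supply this estimate as well. Note that your plan can be rescued at the $\delta^{-3}$ rate by observing that more than $\delta m$ exceedances of level $t\asymp\delta$ forces $\|\A\vb\|_{\delta m+}\gtrsim\delta^2 m$, i.e.\ by reducing your counting bound to exactly the sorted-sum bound above---but that reduction, together with the Slepian and order-statistics lemmas, is the substantive content that is missing from your proposal.
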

\noindent {\bf{Implications for structured sets:}} This result is particularly beneficial for important low-dimensional sets for which we have a good understanding of the local set $K_\eps$ and covering number $N_\eps$. Such sets include
\begin{itemize}
\item $\Kh$ is a subspace,
\item $\Kh$ is a union of subspaces for instance set of $d$-sparse signals (more generally signals that are sparse with respect to a dictionary),
\item $\Kh$ is set of rank-$d$ matrices in $\R^{n_1\times n_2}$ where $n=n_1\times n_2$,
\item $\Kh$ is the set of $(d,l)$ group-sparse signals. This scenario assumes small entry-groups $\{G_i\}_{i=1}^N\subset\{1,2,\dots,n\}$ which satisfy $|G_i|\leq l$. A vector is considered group sparse if its nonzero entries lie in at most $d$ of these groups. Sparse signals is a special case for which each entry is a group and group size $l=1$.
\end{itemize}
These sets are of fundamental importance for high-dimensional statistics and machine learning. They also have better distortion dependence. In particular, as a function of the model parameters (e.g. $d,n,l$) there exists a number $C(K)$ such that (e.g.~\cite{candes2011tight})
\beq
\log N_\eps\leq C(K)\log\frac{1}{\eps},~\omega^2(K_\eps)\leq  \eps^2C(K)\label{structured dependence}.
\eeq
For all of the examples above, we either have that $C(K)\sim \omega^2(K)$ or the simplified closed form upper bounds of these quantities (in terms of $d,l,n$) are the same. \cite{candes2011tight,vershynin2010introduction}. Consequently, Theorem \ref{thm main 3} ensures that for such low-dimensional sets we have the near-optimal dependence for binary embedding namely $m\sim \frac{\omega^2(K)}{\delta^2}\log\frac{1}{\delta}$. This follows from the improved dependencies $\omega^2(K_\eps)\sim \delta^2\omega^2(K)/\log\delta^{-1}$ and $\log N_\eps\sim \omega^2(K)\log\delta^{-1}$.

While these bounds are near-optimal they can be further improved and can be matched to the bounds for linear embedding namely $m\sim \delta^{-2}\omega^2(K)$. The following theorem accomplishes this goal and allows us to show that binary embedding performs as good as linear embedding for a class of useful sets..
\begin{theorem} \label{optimal struct}Suppose \eqref{structured dependence} holds for all $\eps>0$. Then, there exists $c_1,c_2$ such that if $m>c_1\delta^{-2} C(K)$, $\A$ is a $\delta$-binary embedding for $K$ with probability $1-\exp(-c_2\delta^2m)$.
\end{theorem}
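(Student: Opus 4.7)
The plan is to refine Theorem~\ref{thm main 3} by a multi-scale chaining argument, since a direct application of that theorem under hypothesis \eqref{structured dependence} plugs in $\eps\asymp\delta/\sqrt{\log\delta^{-1}}$ and yields $m\gtrsim \delta^{-2}\log N_\eps \gtrsim \delta^{-2}C(K)\log\delta^{-1}$, losing an extra $\log\delta^{-1}$ factor at that single discretization scale. The hypothesis \eqref{structured dependence} gives $\log N_\eps\leq C(K)\log(1/\eps)$ at \emph{every} resolution $\eps$, so the strategy is to spread this cost across a dyadic chain of nets and amortize it against the Bernstein variance that also shrinks with $\eps$.

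I would fix $\eps_k=2^{-k}$ and $\eps_k$-nets $N_k\subseteq K$ with $\log|N_k|\leq C(K)\,k\log 2$, and for $\x\in K$ let $\pi_k(\x)$ denote its closest point in $N_k$. Telescoping gives
\[
\tfrac{1}{m}\|\sgn{\A\x},\sgn{\A\y}\|_H-\ang(\x,\y)
= \mathrm{base}+\sum_{k=1}^{K_\star}\bigl(\Delta_k(\x)+\Delta_k(\y)\bigr)+\mathrm{residual},
\]
where $\Delta_k(\x)$ is the centered change in normalized Hamming distance between $\sgn{\A\pi_{k-1}(\x)}$ and $\sgn{\A\pi_k(\x)}$, the chain is truncated at $K_\star$ with $\eps_{K_\star}\asymp\sqrt{\delta}$, the base term is trivial on a constant-size $0$-scale net of $K\subset\Sc^{n-1}$, and the residual captures the leftover perturbation from $\pi_{K_\star}(\x)$ to $\x$. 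Each increment $\Delta_k(\x)$ is a mean-zero Bernoulli-difference sum across the $m$ independent rows of $\A$, with per-sample flip probability $\ang(\pi_k(\x),\pi_{k-1}(\x))\lesssim \eps_{k-1}$, so Bernstein's inequality yields $\Pro(|\Delta_k(\x)|>\lambda_k)\leq \exp(-c\,m\lambda_k^2/(\eps_{k-1}+\lambda_k))$. The residual is controlled by the perturbation portion of Theorem~\ref{thm main 3} at scale $\eps_{K_\star}$, which under \eqref{structured dependence} gives $\omega^2(K_{\eps_{K_\star}})\lesssim \delta\,C(K)$ and thus reduces to $m\gtrsim \delta^{-3}\cdot\delta\,C(K)=\delta^{-2}C(K)$.

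For the chaining sum I choose $\lambda_k=c_0\max\{\sqrt{\eps_{k-1}\,k\,C(K)/m},\,k\,C(K)/m\}$ with a sufficiently large absolute constant $c_0$, so that at every scale the Bernstein tail overcomes the union bound of size $|N_k|\,|N_{k-1}|\leq \exp(\Theta(k\,C(K)))$ over consecutive-projection pairs, with slack left for a union bound across $k$. The total chaining deviation then satisfies
\[
\sum_{k=1}^{K_\star}\lambda_k\;\lesssim\;\sqrt{C(K)/m}\sum_{k\ge 1}\sqrt{k}\,2^{-(k-1)/2}+\frac{K_\star^{\,2}C(K)}{m}\;=\;O\!\bigl(\sqrt{C(K)/m}\bigr),
\]
where the geometric sum converges and the sub-Poisson piece is negligible once $m\gtrsim C(K)/\delta^2$ (recall $K_\star=O(\log\delta^{-1})$). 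Requiring this total to be at most $\delta/2$ yields the advertised $m\gtrsim\delta^{-2}C(K)$.

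The main technical obstacle is the simultaneous calibration of $\{\lambda_k\}$: each $\lambda_k$ must be small enough that $\sum_k\lambda_k\lesssim \delta$ yet large enough that the per-scale Bernstein tail overcomes the growing union bound $\exp(\Theta(k\,C(K)))$ at every level. This balance is only possible because each increment enjoys variance $\sim\eps_{k-1}/m$ that shrinks linearly with $\eps_{k-1}$; the extra $\eps_{k-1}$ factor in the Bernstein denominator exactly cancels the geometric growth of the net sizes at fine scales and is precisely the ingredient missing from the single-scale analysis in Theorem~\ref{thm main 3}.
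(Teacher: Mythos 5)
Your chaining core is essentially the paper's own argument for Theorem \ref{improved embed}: dyadic nets with $\log|N_k|\lesssim k\,C(K)$, increments that are sums of $m$ i.i.d.\ $\{-1,0,1\}$-valued variables with nonzero probability $\lesssim 2^{-k}$, a Bernstein-type tail as in Lemma \ref{lem triple}, a union bound over consecutive-projection pairs, and weights summing geometrically to $O(\delta)$. The genuine gap is the handover from the finest net to the continuum. You truncate the chain at $\eps_{K_\star}\asymp\sqrt{\delta}$ and claim the residual is controlled by ``the perturbation portion of Theorem \ref{thm main 3} at scale $\eps_{K_\star}$.'' That result is Theorem \ref{prop second}, and it is stated --- and can only ever be true --- for perturbation radius $\eps\lesssim\delta/\sqrt{\log\delta^{-1}}$: its conclusion is the uncentered bound $m^{-1}\|\A\x,\A\y\|_H\leq\delta$ for $\tn{\x-\y}\leq\eps$, and since $\E[m^{-1}\|\A\x,\A\y\|_H]=\ang(\x,\y)$ can be of order $\sqrt{\delta}$ when $\tn{\x-\y}\asymp\sqrt{\delta}$, no choice of $m$ makes that residual $O(\delta)$ at your truncation scale; likewise the angle mismatch $|\ang(\x,\y)-\ang(\pi_{K_\star}(\x),\pi_{K_\star}(\y))|$ is itself potentially of order $\sqrt{\delta}\gg\delta$. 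Concretely, in the proof of Theorem \ref{prop second} the Slepian estimate of Lemma \ref{slepian} forces $c\,\eps\,\delta\sqrt{\log(1/\delta)}\,m\leq m\delta^{2}/8$, i.e.\ $\eps\lesssim\delta/\sqrt{\log(1/\delta)}$, and with $\eps=\sqrt{\delta}$ this fails regardless of $m$. A centered version of the residual would be a uniform deviation statement over the uncountable local set $K_{\sqrt{\delta}}$, which no result in the paper supplies and which is essentially the difficulty you set out to resolve.

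The repair is what the paper actually does: continue the chain down to $\eps\asymp\delta^{3/2}$ (your weights still sum to $O(\delta)$ and the union bound still closes, since the net size enters only through the per-scale exponent $k\,C(K)$ and $K_\star=O(\log\delta^{-1})$), and only then control the last, fine-scale perturbation by the small-deviation argument of Theorem \ref{prop second} via Lemma \ref{lemma basic}, whose hypotheses $m\gtrsim\delta^{-3}\omega^{2}(K_\eps)$ and $m\gtrsim\delta^{-1}\log N_\eps$ hold comfortably under \eqref{structured dependence} at that scale. A secondary calibration point: your choice $\lambda_k\asymp\sqrt{\eps_{k-1}k\,C(K)/m}$ yields per-scale failure probability $\exp(-\Theta(k\,C(K)))$, which does not improve as $m$ grows and hence does not deliver the advertised $1-\exp(-c_2\delta^{2}m)$; choosing $\lambda_k\asymp\sqrt{k}\,2^{-k/2}\delta$ as in the paper's proof gives tails $\exp(-c\,k\,\delta^{2}m)$ that beat the $\exp(\Theta(k\,C(K)))$ union bound precisely when $m\gtrsim\delta^{-2}C(K)$.
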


The rest of the paper is organized as follows. In Section \ref{small dev} we give a proof of Theorem \ref{thm main 1}. Section \ref{bin chain} specifically focuses on obtaining optimal distortion guarantees for structured sets and provides the technical argument for Theorem \ref{optimal struct}. Section \ref{sec local} provides a proof for Theorem \ref{thm main 2} which is closely connected to the proof of Theorem \ref{thm main 1}. Section \ref{sec:sketch} focuses on sketching and fast binary embedding techniques for improved guarantees. Numerical experiments and our observations are presented in Section \ref{sec:numerics}.
\section{Main proofs}\label{small dev}

Let $K$ be an arbitrary set over the unit sphere, $\bke$ be an $\eps$ covering of $K$ and $N_\eps=|\bke|$. Due to Sudakov Minoration there exists an absolute constant $c>0$ for which $\log N_\eps\leq \frac{c\omega^2(K)}{\eps^2}$. This bound is often suboptimal for instance, if $\Kh$ is a $d$ dimensional subspace, then we have that
\beq
\log N_\eps\leq \omega^2(K)\log( \frac{c}{\eps})\nn
\eeq
where $\omega^2(K)\approx d$. Recall that our aim is to ensure that for near optimal values of $m$, all $\x,\y\in K$ obeys
\beq
|\ang(\x,\y)-\frac{1}{m}\|\sgn{\A\x},\sgn{\A\y}\|_H|\leq \delta\label{rule}.
\eeq
This task is simpler when $K$ is a finite set. In particular, when $\A$ has standard normal entries we have that $\|\sgn{\A\x},\sgn{\A\y}\|_H$ is sum of $m$ i.i.d.~Bernoulli random variables with mean $\ang(\x,\y)$. This ensures
\begin{align}
&\ang(\x,\y)=\E[\frac{1}{m}\|\sgn{\A\x},\sgn{\A\y}\|_H],\nn\\
&\Pro(|\ang(\x,\y)-\frac{1}{m}\|\sgn{\A\x},\sgn{\A\y}\|_H|> \delta)\leq \exp(-2\delta^2m).\nn
\end{align}
Consequently, as long as we are dealing with finite sets one can use a union bound. This argument yields the following lemma for $\bke$.
\begin{lemma} \label{outer lemma}Assume $m\geq \frac{2}{\delta^2} \log N_\eps$. Then, with probability $1-\exp(-\delta^2 m)$ all points $\x,\y$ of $\bke$ obeys \eqref{rule}.
\end{lemma}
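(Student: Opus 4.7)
The plan is a direct union-bound argument, essentially spelled out in the paragraph just above the lemma. The statement concerns only the finite set $\bke$, so no chaining or covering-based reduction is needed at this stage.

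First I would fix a pair $\x,\y\in\bke$ and look at one row of $\A$, say $\g\sim\Nn(0,\Iden_n)$. Since $\g$ determines a uniformly random hyperplane through the origin, the Grothendieck/rotation identity $\Pro(\sgn{\g^T\x}\neq\sgn{\g^T\y})=\ang(\x,\y)$ holds exactly. Therefore
\[
\|\sgn{\A\x},\sgn{\A\y}\|_H = \sum_{i=1}^{m}\mathbf{1}\{\sgn{\g_i^T\x}\neq \sgn{\g_i^T\y}\}
\]
is a sum of $m$ i.i.d.\ Bernoulli random variables with mean $\ang(\x,\y)\in[0,1]$. A standard Hoeffding/Chernoff bound then yields, for any fixed $\x,\y$,
\[
\Pro\!\left(\Big|\ang(\x,\y)-\tfrac{1}{m}\|\sgn{\A\x},\sgn{\A\y}\|_H\Big|>\delta\right)\leq 2\exp(-2\delta^2 m),
\]
which is exactly the one-pair inequality already cited in the introduction of the paper.

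Next I would take a union bound over all (unordered) pairs in $\bke$. There are at most $\binom{N_\eps}{2}\leq N_\eps^2$ such pairs, so the probability that \eqref{rule} fails for some pair is bounded by
\[
2 N_\eps^2 \exp(-2\delta^2 m)=2\exp\!\bigl(2\log N_\eps-2\delta^2 m\bigr).
\]
Under the hypothesis $m\geq \tfrac{2}{\delta^2}\log N_\eps$ we have $2\log N_\eps\leq \delta^2 m$, so the bound collapses to $2\exp(-\delta^2 m)\leq \exp(-\delta^2 m+\log 2)$. Absorbing the additive $\log 2$ into the constant (equivalently, using $\exp(-2\delta^2 m)$ for one pair but only a factor $N_\eps^2$ on pairs and a factor $\leq e^{-\delta^2 m}$ to spare) produces the claimed probability $1-\exp(-\delta^2 m)$, and the case $\x=\y$ is trivial since both sides of \eqref{rule} vanish.

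There is no real obstacle here: the only subtlety is keeping the constants clean after the union bound, and verifying that the deterministic angle identity $\Pro(\sgn{\g^T\x}\neq\sgn{\g^T\y})=\ang(\x,\y)$ is applied correctly (it requires $\x,\y\in\Sc^{n-1}$, which is guaranteed since $\bke\subset K\subset\Sc^{n-1}$). The genuinely hard work of the paper—controlling all of $K$ rather than just the $\eps$-net $\bke$—is postponed to later lemmas that bound the perturbation $\|\sgn{\A\x},\sgn{\A\y}\|_H-\|\sgn{\A\x'},\sgn{\A\y'}\|_H$ when $(\x,\y)$ is replaced by its nearest net point $(\x',\y')$.
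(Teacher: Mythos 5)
Your proposal is correct and follows exactly the route the paper sketches in the paragraph preceding the lemma: the per-pair Bernoulli/Chernoff bound $\exp(-2\delta^2 m)$ followed by a union bound over at most $N_\eps^2$ pairs, with $m\geq 2\delta^{-2}\log N_\eps$ absorbing the $N_\eps^2$ factor to leave $\exp(-\delta^2 m)$. Your bookkeeping of the extra factor $2$ from the two-sided bound (which the paper silently drops) is a harmless constant-level refinement, not a different argument.
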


Using this as a starting point, we will focus on the effect of continuous distortions to move from $\bke$ to $K$. The following theorem considers the second statement of Definition \ref{loc bin embed} and states our main result on local deviations.
\begin{theorem} \label{prop second} Suppose $\A\in\R^{m\times n}$ is a standard Gaussian matrix. Given $1>\delta>0$, pick $c>0$ to be a sufficiently large constant, set $c\eps= \delta(\log\frac{1}{\delta})^{-1/2}$ and assume that
\beq
m\geq  c\max\{\delta^{-3}\omega^2(K_\eps),\frac{1}{\delta}\log N_\eps\}.\nn
\eeq
Then, with probability $1-2\exp(-\delta m/64)$, for all $\x,\y\in K$ obeying $\tn{\y-\x} \leq \eps$ we have
 \beq
m^{-1}\|\A\x,\A\y\|_H\leq \delta.\nn
\eeq
\end{theorem}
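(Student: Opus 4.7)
The plan is to bound the number of sign mismatches row-by-row via an elementary inequality. For any threshold $t>0$ and any row $\ab_i$ of $\A$, a flip $\sgn{\ab_i^T\x}\neq\sgn{\ab_i^T\y}$ forces $|\ab_i^T\x|\leq|\ab_i^T(\y-\x)|$, and hence
$$\mathbf{1}_{\sgn{\ab_i^T\x}\neq\sgn{\ab_i^T\y}} \;\leq\; \mathbf{1}_{|\ab_i^T\x|\leq t} + \mathbf{1}_{|\ab_i^T(\y-\x)|>t}.$$
Summing over $i$ and writing $\h=\y-\x\in K_\eps$ gives $m^{-1}\|\sgn{\A\x},\sgn{\A\y}\|_H\leq S_1(\x;t)+S_2(\h;t)$, where $S_1(\x;t)=m^{-1}\sum_i\mathbf{1}_{|\ab_i^T\x|\leq t}$ and $S_2(\h;t)=m^{-1}\sum_i\mathbf{1}_{|\ab_i^T\h|>t}$. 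I would fix $t=\delta/4$ and prove each of $\sup_{\x\in K}S_1(\x;t)$ and $\sup_{\h\in K_\eps}S_2(\h;t)$ is at most $\delta/2$ with the required probability.

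The control of $S_2$ is where the choice $\eps\asymp\delta/\sqrt{\log\delta^{-1}}$ pays off. For fixed $\h$ with $\tn{\h}\leq\eps$, $\ab_i^T\h\sim\Nn(0,\tn{\h}^2)$, so $\Pro[|\ab_i^T\h|>t]\leq 2\exp(-t^2/(2\eps^2))\leq\delta^{p}$ for any desired $p$ once the constant $c$ in the definition of $\eps$ is taken large. A Chernoff bound then yields $\Pro[S_2(\h;t)>\delta/4]\leq\exp(-c'\delta m)$ at each fixed $\h$. To make this uniform over $\h\in K_\eps$, I would replace the hard indicator by a piecewise-linear upper bound $\phi$ (equal to $1$ on $\{|u|\geq t\}$, $0$ on $\{|u|\leq t/2\}$, interpolating linearly), which is $O(1/t)$-Lipschitz; then $\h\mapsto m^{-1}\sum_i\phi(\ab_i^T\h)$ is a Lipschitz function of the Gaussian matrix $\A$ on $K_\eps$, and standard Gaussian-concentration/chaining bounds control its supremum by a multiple of $\omega(K_\eps)/(t\sqrt m)$. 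The hypothesis $m\geq c\delta^{-3}\omega^2(K_\eps)$ is exactly what is needed to push this deviation below $\delta/4$.

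For $S_1$, I would take the nearest point $\vb\in\bke$ to each $\x\in K$ and use $|\ab_i^T\x|\geq|\ab_i^T\vb|-|\ab_i^T(\x-\vb)|$ to derive
$$\mathbf{1}_{|\ab_i^T\x|\leq t}\;\leq\;\mathbf{1}_{|\ab_i^T\vb|\leq 2t}+\mathbf{1}_{|\ab_i^T(\x-\vb)|>t},$$
which gives $S_1(\x;t)\leq S_1(\vb;2t)+S_2(\x-\vb;t)$. The second term is absorbed into the $S_2$ analysis since $\x-\vb\in K_\eps$. For the first term, for fixed $\vb\in\Sc^{n-1}$ we have $\Pro[|\ab_i^T\vb|\leq 2t]\leq 2t\sqrt{2/\pi}<\delta/4$, so Chernoff gives $\Pro[S_1(\vb;2t)>\delta/4]\leq\exp(-c''\delta m)$, and a union bound over $\bke$ is absorbed by the assumption $m\geq c\delta^{-1}\log N_\eps$.

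The main obstacle is the uniform bound on $S_2$: because the indicator $\mathbf{1}_{|\cdot|>t}$ is discontinuous, Gaussian concentration cannot be invoked directly on $\h\mapsto S_2(\h;t)$. The soft-indicator relaxation fixes this at the cost of inflating the Lipschitz constant by $1/t\asymp\delta^{-1}$, which is precisely what forces the $\delta^{-3}$ (rather than $\delta^{-1}$) scaling in the hypothesis $m\geq c\delta^{-3}\omega^2(K_\eps)$. Combining the three estimates and collecting failure probabilities via a union bound produces the advertised $1-2\exp(-\delta m/64)$ guarantee.
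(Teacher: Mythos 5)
Your decomposition into a ``small margin'' count $S_1$ and an ``exceedance'' count $S_2$ with threshold $t\asymp\delta$ is the natural tessellation-style argument, but the quantitative step that carries the whole load is wrong. The contraction/chaining bound you invoke gives $\E\sup_{\h\in K_\eps}m^{-1}\sum_i\phi(\ab_i^T\h)\lesssim \sup_{\h\in K_\eps}\Pro\left(|\ab_i^T\h|>t/2\right)+\omega(K_\eps)/(t\sqrt m)$, and with $t=\delta/4$ the requirement $\omega(K_\eps)/(t\sqrt m)\leq\delta/4$ forces $m\gtrsim\delta^{-4}\omega^2(K_\eps)$, not $\delta^{-3}$: under the theorem's hypothesis $m\asymp\delta^{-3}\omega^2(K_\eps)$ your bound is of order $\sqrt{\delta}$, which is useless. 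So the sentence ``the hypothesis $m\geq c\delta^{-3}\omega^2(K_\eps)$ is exactly what is needed'' is an arithmetic error, and as written your argument proves the statement only with $\delta^{-4}\omega^2(K_\eps)$ in place of $\delta^{-3}\omega^2(K_\eps)$ (which would also degrade the downstream Theorems 2.2--2.3). The uniformity step has a second, related problem: the soft count is $\frac{2\eps}{t\sqrt m}$-Lipschitz in $\A$, so at the required probability level $1-2\exp(-\delta m/64)$ plain Gaussian concentration only controls deviations of order $\eps\sqrt{\delta}/t\asymp\sqrt{\delta/\log\delta^{-1}}$, again much larger than $\delta$; you would need a Talagrand/Bousquet-type inequality for bounded empirical processes with small variance, or a localized (second-moment) chaining bound --- only the latter could conceivably recover $\delta^{-3}$, and neither is what you cite. (There is also a minor constant slip in $S_1$: $\Pro(|\ab_i^T\vb|\leq 2t)\approx 2t\sqrt{2/\pi}\approx 0.4\,\delta$ for $t=\delta/4$, which is not $<\delta/4$; this is fixable by shrinking $t$.)

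The paper gets the extra factor of $\delta$ precisely by avoiding per-row thresholding at scale $\delta$. It compares aggregated $\ell_1$ quantities: by Lemma \ref{lemma basic}, $\|\A\x,\A\y\|_H<\delta m$ as soon as $\|\A\x\|_{\delta m-}>\|\A(\y-\x)\|_{\delta m+}$. Lemma \ref{lemma minus} plus a union bound over the net $\bke$ gives $\|\A\x\|_{\delta m-}\geq m\delta^2/8$ (this is where $m\gtrsim\delta^{-1}\log N_\eps$ enters), while the Slepian comparison of Lemma \ref{slepian} together with the $\eps\sqrt{\delta m}$-Lipschitz concentration of $\sup_{\vb\in K_\eps}\|\A\vb\|_{\delta m+}$ yields $\sup_{\vb\in K_\eps}\|\A\vb\|_{\delta m+}\leq c\eps\delta\sqrt{\log\delta^{-1}}\,m+\sqrt{\delta m}\,\omega(K_\eps)$. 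In this comparison the width enters multiplied by $\sqrt{\delta m}$ and is measured against the signal level $m\delta^2$, so the condition is $\sqrt{\delta m}\,\omega(K_\eps)\lesssim m\delta^2$, i.e.\ $m\gtrsim\delta^{-3}\omega^2(K_\eps)$ --- a factor $\delta^{-1}$ better than what your counting scheme can deliver with first-order (contraction plus Lipschitz-concentration) tools. To salvage your route you would have to replace the $\omega(K_\eps)/(t\sqrt m)$ estimate by a genuinely localized bound on the exceedance process; absent that, the gap is real.
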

This theorem is stated in terms of Gaussian width of $K_\eps$ and the covering number $N_\eps$. Making use of the fact that $\omega(K_\eps)\leq 2\omega(K)$ and $\log N_\eps\leq c\frac{\omega^2(K)}{\eps^2}$ (and similar simplification for subspaces), we arrive at the following corollary.

\begin{corollary} \label{for num 2}Suppose $\A\in\R^{m\times n}$ is a standard Gaussian matrix.
\begin{itemize} 
\item When $K$ is a general set, set
\beq
m\geq  c\delta^{-3}\log\frac{1}{\delta}\omega^2(K),\nn
\eeq
\item When $\Kh$ is a $d$-dimensional subspace, set
\beq
m\geq  c\delta^{-1}\log\frac{1}{\delta}d,\nn
\eeq
\end{itemize}
 for sufficiently large constant $c>0$. Then, with probability $1-2\exp(-\delta m/64)$, for all $\x,\y\in K$ obeying $\tn{\y-\x} \leq c^{-1}\delta(\log\frac{1}{\delta})^{-1/2}$ we have
 \beq
m^{-1}\|\A\x,\A\y\|_H\leq \delta.\nn
\eeq
\end{corollary}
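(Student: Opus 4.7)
The plan is to derive Corollary \ref{for num 2} as a direct specialization of Theorem \ref{prop second}, by inserting standard bounds on the two ingredients $\omega(K_\eps)$ and $\log N_\eps$ that appear in its hypothesis $m\geq c\max\{\delta^{-3}\omega^2(K_\eps),\delta^{-1}\log N_\eps\}$, then showing that with our choice $\eps\sim \delta(\log \delta^{-1})^{-1/2}$ the two terms simplify to the claimed sample complexities. The probability bound and the conclusion $m^{-1}\|\A\x,\A\y\|_H\le \delta$ for all close pairs $\x,\y\in K$ are inherited verbatim from Theorem \ref{prop second}, so the entire task is to verify the numerical bounds on $m$.

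For the general-set case I would proceed as follows. First, observe $\omega(K_\eps)\le 2\omega(K)$: any $\vb=\ab-\bb\in K_\eps$ satisfies $\g^T\vb=\g^T\ab-\g^T\bb\le \sup_{\ab\in K}\g^T\ab+\sup_{\bb\in K}\g^T(-\bb)$, and since $K$ is symmetric-free we can still bound the Gaussian width of the Minkowski difference by twice $\omega(K)$ up to absolute constants (the standard argument via $\E\sup\g^T(\ab-\bb)\le 2\E\sup_{\ab\in K}\g^T\ab$ using independence of the two suprema and adding a median-shift if needed). Second, invoke Sudakov minoration, which is already noted in the paper: $\log N_\eps\le c\omega^2(K)/\eps^2$. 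Substituting $\eps^{-2}\asymp c^2\delta^{-2}\log\delta^{-1}$ yields
\[
\delta^{-3}\omega^2(K_\eps)\lesssim \delta^{-3}\omega^2(K),\qquad \delta^{-1}\log N_\eps\lesssim \delta^{-3}\log(\delta^{-1})\,\omega^2(K),
\]
so the maximum is the second quantity, matching the stated bound $m\geq c\delta^{-3}\log(\delta^{-1})\omega^2(K)$.

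For the subspace case I would instead exploit the fact that $\Kh$ lives in a $d$-dimensional subspace $V\subset\R^n$. Then $K_\eps\subset V\cap \eps\Bc^{n-1}$, so
\[
\omega(K_\eps)\le \eps\,\omega(V\cap \Sc^{n-1})\le c\eps\sqrt{d},
\]
and a standard volumetric covering bound for $K\subset V\cap \Sc^{n-1}$ gives $\log N_\eps\le d\log(c/\eps)\lesssim d\log\delta^{-1}$. Plugging into the two terms of Theorem \ref{prop second} with $\eps^2\asymp \delta^2/\log\delta^{-1}$ produces
\[
\delta^{-3}\omega^2(K_\eps)\lesssim \delta^{-3}\eps^2 d\lesssim \delta^{-1}d/\log\delta^{-1},\qquad \delta^{-1}\log N_\eps\lesssim \delta^{-1}d\log\delta^{-1},
\]
so the maximum is $c\delta^{-1}\log(\delta^{-1})\,d$, as claimed. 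Since $\omega^2(K)\asymp d$ for a $d$-dimensional subspace, this matches the stated form.

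There is no real technical obstacle: the corollary is a routine bookkeeping step that combines Sudakov minoration (resp.\ the subspace volumetric estimate) with the symmetric-difference width bound, then picks up the sharpened scaling from the specific choice $\eps=c^{-1}\delta(\log\delta^{-1})^{-1/2}$. The only point that deserves care is the factor-of-two constant in $\omega(K_\eps)\le 2\omega(K)$ and making sure that absorbing $\log\delta^{-1}$ factors into $c$ does not break the probability statement, which it does not because the exponent $\delta m/64$ depends only on $m$ and $\delta$.
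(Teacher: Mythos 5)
Your proposal is correct and follows essentially the same route as the paper: both simply substitute the standard estimates $\omega(K_\eps)\le 2\omega(K)$ and $\log N_\eps\le c\,\omega^2(K)/\eps^2$ (Sudakov) for general $K$, and $\omega(K_\eps)\le \eps\sqrt{d}$, $\log N_\eps\le cd\log\eps^{-1}$ with $\log\eps^{-1}\sim\log\delta^{-1}$ for subspaces, into the hypothesis of Theorem \ref{prop second} with $\eps\asymp\delta(\log\delta^{-1})^{-1/2}$. The only cosmetic remark is that $\omega(K_\eps)\le 2\omega(K)$ needs no ``median-shift'' argument---it follows directly from $\sup_{\ab,\bb\in K}\g^T(\ab-\bb)\le\sup_{\ab\in K}\g^T\ab+\sup_{\bb\in K}(-\g)^T\bb$ and the symmetry of $\g$.
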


\subsection{Preliminary results for the proof}
We first define relevant quantities for the proof. Given a vector $\x$ let $\tilde{\x}$ denote the vector obtained by sorting absolute values of $\x$ decreasingly. Define
\beq
\|\x\|_{k+}=\sum_{i=1}^k \tilde{\x},~\|\x\|_{k-}=\sum_{i=n-k+1}^n \tilde{\x}.\nn
\eeq
In words, $k+$ and $k-$ functions returns the $\ell_1$ norms of the top $k$ and bottom $k$ entries respectively. The next lemma illustrates why $k+$ and $k-$ functions are useful for our purposes.

\begin{lemma} \label{lemma basic}Given vectors $\x$ and $\y$, suppose $\|\x\|_{k-}> \|\y\|_{k+}$. Then, we have that
\beq
\|\x,\x+\y\|_H< k.\nn
\eeq
\end{lemma}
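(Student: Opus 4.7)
The plan is to proceed by contradiction. We interpret $\|\x,\x+\y\|_H$ as counting the indices $i$ at which $\sgn(\x_i)\neq\sgn(\x_i+\y_i)$, i.e., the sign flips induced by adding $\y$ to $\x$. The key observation, to be stated first, is that at any index $i$ where a sign flip occurs, we must have $|\y_i|\geq|\x_i|$: the perturbation $\y_i$ has to be large enough in magnitude to push $\x_i+\y_i$ across (or onto) zero.

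Assume toward contradiction that the Hamming distance is at least $k$, so the sign-flip set $S$ satisfies $|S|\geq k$. Shrink $S$ if necessary to an arbitrary subset of size exactly $k$ (the inequality $|\y_i|\geq|\x_i|$ still holds on this subset). Summing entrywise over $S$ gives
\[
\sum_{i\in S}|\y_i| \;\geq\; \sum_{i\in S}|\x_i|.
\]
Now I bound the two sides using the definitions of $\|\cdot\|_{k+}$ and $\|\cdot\|_{k-}$. By definition $\|\y\|_{k+}$ is the maximum over all $k$-subsets of the sum of absolute entries, so $\sum_{i\in S}|\y_i|\leq\|\y\|_{k+}$. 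Symmetrically $\|\x\|_{k-}$ is the minimum over all $k$-subsets, so $\sum_{i\in S}|\x_i|\geq\|\x\|_{k-}$. Combining these with the entrywise bound yields
\[
\|\y\|_{k+} \;\geq\; \sum_{i\in S}|\y_i| \;\geq\; \sum_{i\in S}|\x_i| \;\geq\; \|\x\|_{k-},
\]
which directly contradicts the hypothesis $\|\x\|_{k-}>\|\y\|_{k+}$. Hence the number of sign flips is strictly less than $k$, which is the desired conclusion.

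No step is really a serious obstacle here: the only place to be careful is the boundary case $\x_i=0$ or $\x_i+\y_i=0$, which must be handled consistently with whatever sign convention makes the embedding inequality in the paper true (treating $\sgn(0)$ so that a flip still forces $|\y_i|\geq|\x_i|$). With this convention fixed, the contradiction argument is completely elementary; the lemma is essentially a packaging of the pigeonhole fact that creating many sign flips requires a large-magnitude perturbation on many coordinates, which the $\|\cdot\|_{k\pm}$ gap rules out.
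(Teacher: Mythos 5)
Your proof is correct and follows essentially the same route as the paper: at every index where the sign flips one has $|\y_i|\geq|\x_i|$, and summing over a size-$k$ subset of flip indices gives $\|\y\|_{k+}\geq\onenorm{\y_S}\geq\onenorm{\x_S}\geq\|\x\|_{k-}$, contradicting the hypothesis. Your extra remark on the sign convention at zero is a reasonable clarification but does not change the argument.
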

\begin{proof} Suppose at $i$th location $\sgn{\x_i}\neq \sgn{\x_i+\y_i}$. This implies $|\y_i|\geq |\x_i|$. If $\|\x,\x+\y\|_H\geq k$ it implies that there is a set $S\subset\{1,2,\dots,n\}$ of size $k$. Over this subset $\onenorm{\y_S}\geq \onenorm{\x_S}$ which yields
\beq
\|\y\|_{k+}\geq \onenorm{\y_S}\geq \on{\x_S}\geq \|\x\|_{k-}.\nn
\eeq
This contradicts with the initial assumption.
\end{proof}
The next two subsections obtains bounds on $k+$ and $k-$ functions of a Gaussian vector in order to be able to apply Lemma \ref{lemma basic} later on.
\subsubsection{Obtain an estimate on $k+$}
The reader is referred to Lemma \ref{append k+} for a proof of the following result.
\begin{lemma} \label{gauss k+}Suppose $\g\sim\Nn(0,\Iden_n)$ and $0<\delta<1$ is a sufficiently small constant. Set $k=\delta n$. There exists constants $c,C>0$ such that for $n>C\delta^{-1}$ we have that
\beq
\E[\|\g\|_{k+}]=\E[\sum_{i=1}^{\delta n} \tilde{\g}_i]\leq c\delta n\sqrt{\log\frac{1}{\delta}}.\nn
\eeq
\end{lemma}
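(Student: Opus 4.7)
}
The plan is a standard thresholding-and-peeling argument followed by a balanced choice of the truncation level. For any $t>0$ one has the deterministic inequality
\beq
\sum_{i=1}^k \tilde{\g}_i \;\le\; kt + \sum_{i=1}^n (|\g_i|-t)_+,\nn
\eeq
since each of the $k$ largest absolute values obeys $\tilde{\g}_i \le t + (\tilde{\g}_i-t)_+$, and the sum of positive parts over all $n$ coordinates dominates the same sum restricted to any subset of size $k$. Taking expectations with $\g\sim\Nn(0,\Iden_n)$ reduces the problem to a one-parameter optimization:
\beq
\E[\|\g\|_{k+}] \;\le\; kt + n\,\E[(|g|-t)_+], \qquad g\sim\Nn(0,1).\nn
\eeq

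The second step is to evaluate the Gaussian truncation integral. A direct computation using $\int_t^\infty x\phi(x)\,dx=\phi(t)$ gives $\E[(|g|-t)_+]=2\phi(t)-2t\bar{\Phi}(t)$, where $\phi$ and $\bar{\Phi}$ denote the standard normal density and tail, respectively. Plugging in the Mills-ratio lower bound $\bar{\Phi}(t)\ge\phi(t)(t^{-1}-t^{-3})$ (informative for $t\ge 1$) simplifies this to $\E[(|g|-t)_+]\le 2\phi(t)/t^2$.

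The final step balances the two terms by choosing the threshold $t=\sqrt{2\log(1/\delta)}$, at which $\phi(t)=\delta/\sqrt{2\pi}$. Substituting $k=\delta n$ gives
\beq
\E[\|\g\|_{k+}] \;\le\; \delta n\sqrt{2\log(1/\delta)} + \frac{n\delta}{\sqrt{2\pi}\,\log(1/\delta)}.\nn
\eeq
For $\delta$ small the first term clearly dominates (since $1/\log(1/\delta)\ll\sqrt{\log(1/\delta)}$), so the right-hand side is at most $c\,\delta n\sqrt{\log(1/\delta)}$ for an absolute constant $c$. The hypothesis $n>C\delta^{-1}$ is used only to guarantee that $k=\delta n\ge 1$ after integer rounding, so that the sum on the left is non-empty.

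I do not anticipate a genuine obstacle: the proof reduces to an elementary Gaussian tail integral combined with a single-parameter optimization of the peeling bound. The one sanity check is that $t=\sqrt{2\log(1/\delta)}\ge 1$ so that the Mills-ratio lower bound is non-trivial, but this is automatic in the ``$\delta$ sufficiently small'' regime the statement already stipulates.
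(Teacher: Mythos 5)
Your proposal is correct, and it takes a genuinely different and more elementary route than the paper. The paper (Lemma \ref{append k+}) argues probabilistically: it counts the number of coordinates exceeding the threshold $\gamma_{2\delta}=Q^{-1}(2\delta)$, shows via Chernoff that this count concentrates around $2\delta n$, bounds the conditional expectation of the exceedances through the tail integral of Lemma \ref{q tail}, and then must separately control the low-probability event where the count is too large (this is where the hypothesis $n>C\delta^{-1}$ does real work, suppressing a term of order $\delta n(\sqrt{\log n}+\sqrt{\delta n})\exp(-\delta n/8)$). You instead use the deterministic peeling inequality $\sum_{i=1}^k\tilde{\g}_i\leq kt+\sum_{i=1}^n(|\g_i|-t)_+$, valid pointwise for every threshold $t$, and then only need linearity of expectation and the one-dimensional computation $\E[(|g|-t)_+]=2\phi(t)-2t\bar{\Phi}(t)\leq 2\phi(t)/t^2$ (the Mills-ratio lower bound $\bar{\Phi}(t)\geq\phi(t)(t^{-1}-t^{-3})$ holds for all $t>0$, so no case analysis is needed); the choice $t=\sqrt{2\log(1/\delta)}$ then balances the two terms exactly as you state. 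Your argument buys simplicity and robustness: it avoids the conditioning step, the Chernoff bound, and the complement-event analysis entirely, and it exposes that $n>C\delta^{-1}$ is needed only so that $k=\delta n$ survives integer rounding, whereas the paper's proof needs that hypothesis more substantially. The paper's conditional argument, on the other hand, yields as a by-product a high-probability description of the order statistics (the count of large entries and their conditional law), which is in the same spirit as the $k-$ estimate of Lemma \ref{lemma minus}; but for the expectation bound actually claimed in Lemma \ref{gauss k+}, your route is sufficient and cleaner.
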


The next lemma provides an upper bound for $\sup_{\x\in\Cc}\|\A\x\|_{k+}$ in expectation.
\begin{lemma} \label{slepian}Let $\A$ be a standard Gaussian matrix and $\Cc\subset\R^n$. Define $\text{diam}(\Cc)=\sup_{\vb\in\Cc}\tn{\vb}$. Set $k=\delta m$ for a small constant $0<\delta<1$. Then, there exists constants $c,C>0$ such that for $m>C\delta^{-1}$
\beq
\E[\sup_{\x\in\Cc}\|\A\x\|_{k+}]\leq c\text{diam}(\Cc)m\delta\log\frac{1}{\delta}+\sqrt{m\delta}\omega(\Cc).\nn
\eeq
\end{lemma}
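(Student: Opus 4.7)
The plan is to dualize the $k+$-norm so that the quantity of interest becomes the supremum of a bilinear Gaussian process, and then to apply a Gaussian comparison. The starting observation is that for every $\vb\in\R^m$,
\[
\|\vb\|_{k+}=\sup_{\w\in T_k}\w^T\vb,\qquad T_k:=\{\w\in\R^m:\infnorm{\w}\le 1,\ \onenorm{\w}\le k\}.
\]
Indeed, a linear functional on the polytope $T_k$ attains its maximum at an extreme point, and the extreme points of $T_k$ are the $\{-1,0,1\}$-valued vectors with at most $k$ nonzeros; the one supported on the top-$k$ entries of $|\vb|$ with matching signs realizes $\|\vb\|_{k+}$. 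With this representation,
\[
\sup_{\x\in\Cc}\|\A\x\|_{k+}=\sup_{(\x,\w)\in\Cc\times T_k}\w^T\A\x.
\]

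For this bilinear process I would invoke Chevet's inequality (a standard consequence of Gordon / Sudakov--Fernique comparison), which yields
\[
\E\sup_{(\x,\w)\in\Cc\times T_k}\w^T\A\x\le\omega(\Cc)\cdot\sup_{\w\in T_k}\twonorm{\w}+\omega(T_k)\cdot\sup_{\x\in\Cc}\twonorm{\x}.
\]
The two radii are elementary. For any $\w\in T_k$, $\twonorm{\w}^2\le\infnorm{\w}\onenorm{\w}\le k=\delta m$, so $\sup_{\w\in T_k}\twonorm{\w}\le\sqrt{\delta m}$, which produces the $\sqrt{m\delta}\,\omega(\Cc)$ term. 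The radius of $\Cc$ is $\mathrm{diam}(\Cc)$ by definition. For the remaining factor $\omega(T_k)$, notice that $\omega(T_k)=\E\sup_{\w\in T_k}\g^T\w=\E\|\g\|_{k+}$ for $\g\sim\Nn(0,\Iden_m)$, so Lemma \ref{gauss k+} (applied with $n$ replaced by $m$ and the same $\delta$) gives $\omega(T_k)\le c\delta m\sqrt{\log(1/\delta)}\le c\delta m\log(1/\delta)$ as soon as $m>C\delta^{-1}$. Substituting these three bounds into the Chevet estimate yields the advertised inequality.

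The only non-routine step is the invocation of Chevet's inequality, and that is where I expect the bookkeeping to live. A self-contained derivation can be obtained by applying Sudakov--Fernique directly to the centered Gaussian processes $X_{(\x,\w)}=\w^T\A\x$ and $Y_{(\x,\w)}=\twonorm{\w}\,\h^T\x+\twonorm{\x}\,\g^T\w$ on $\Cc\times T_k$, with independent $\g\sim\Nn(0,\Iden_m)$ and $\h\sim\Nn(0,\Iden_n)$: a short algebraic manipulation shows $\E(X_{(\x,\w)}-X_{(\x',\w')})^2\le\E(Y_{(\x,\w)}-Y_{(\x',\w')})^2$, from which $\E\sup X\le\E\sup Y$ follows and the right-hand side decouples into exactly the two summands used above. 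This direct route keeps the proof within the ``Slepian''-style framework that names this lemma and avoids any appeal to an external statement of Chevet.
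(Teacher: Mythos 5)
Your proposal is correct and follows essentially the same route as the paper: it dualizes the top-$k$ sum as $\sup_{\w}\w^T\A\x$ over a set whose linear maxima give $\|\cdot\|_{k+}$, applies a Slepian/Sudakov--Fernique (Chevet-type) comparison to decouple into $\mathrm{rad}\cdot\omega$ terms, and then bounds these by $\sqrt{\delta m}\,\omega(\Cc)$ and $\mathrm{diam}(\Cc)\,\E\|\g\|_{k+}$ via Lemma \ref{gauss k+}, exactly as the paper does with its Lemma \ref{slep var}. The only cosmetic differences are your use of the polytope $\{\|\w\|_\infty\le1,\ \|\w\|_1\le k\}$ in place of the paper's $k$-sparse set $S_k$ and your comparison without the auxiliary $\tn{\ub}\tn{\vb}g$ term (which spares you the extra $\mathrm{diam}(\Cc)\sqrt{k}$ summand the paper absorbs); neither changes the substance.
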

\begin{proof} Let $S_k=\{\vb\in\R^n~\big|~\|\vb\|_0\leq k,~\ino{\vb}\leq 1\}$. Observe that
\beq
\sup_{\x\in\Cc}\|\A\x\|_{k+}=\sup_{\x\in\Cc,\vb\in S_k}\vb^T\A\x.\nn
\eeq
Applying Slepian's Lemma \ref{slep var} (also see \cite{OnGordon} or \cite{Gor}), we find that, for $\g\sim\Nn(0,\Iden_m),\h\sim\Nn(0,\Iden_n),g\sim \Nn(0,1)$,
\begin{align}
\E[\sup_{\x\in\Cc}\sup_{\vb\in S_k}\vb^T\A\x]&\leq \E[\sup_{\x\in\Cc,\vb\in S_k}\tn{\x}\vb^T\g+\x^T\h\tn{\vb}]+\E[\sup_{\x\in\Cc,\vb\in S_k}\tn{\x}\tn{\vb}|g|]\nn\\
&\leq \text{diam}(\Cc)\E[\|\g\|_{k+}]+\sqrt{k}\omega(\Cc)+\text{diam}(\Cc)\sqrt{k}\nn\\
&\leq c\text{diam}(\Cc)m\delta\sqrt{\log\frac{1}{\delta}}+\sqrt{m\delta}\omega(\Cc)\nn
\end{align}
which is the advertised result. For the final line we made use of the estimate obtained in Lemma \ref{gauss k+}.
\end{proof}
\subsubsection{Obtain an estimate on $k-$}
As the next step we obtain an estimate of $\|\g\|_{k-}$ for $\g\sim\Nn(0,\Iden_m)$ by finding a simple deviation bound.

\begin{lemma}\label{lemma minus} Suppose $\g\sim\Nn(0,\Iden_m)$. Let $\gamma_{\alpha}$ be the number for which $\Pro(|g|\leq \gamma_{\alpha})=\alpha$ where $g\sim\Nn(0,1)$ (i.e.~the inverse cumulative density function of $|g|$). Then,
\beq
\Pro(\|\g\|_{\delta m-}\geq m\frac{\delta\gamma_{\delta/2}}{4})\geq 1-\exp(-\frac{\delta m}{32}).\nn
\eeq
$\gamma_{\alpha}$ trivially obeys $\gamma_{\alpha}\geq \sqrt{\frac{\pi}{2}}\alpha$. This yields
\beq
\Pro(\|\g\|_{\delta m-}\geq \frac{m\delta^2}{8})\geq 1-\exp(-\frac{\delta m}{32}).\nn
\eeq
\end{lemma}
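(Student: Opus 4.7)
\textbf{Proof plan for Lemma \ref{lemma minus}.} The plan is to lower bound $\|\g\|_{\delta m-}$ by reducing it to a counting question about how many coordinates $|g_i|$ fall below a fixed threshold. Set $t=\gamma_{\delta/2}$ and let $N=|\{i\in[m]:|g_i|\leq t\}|$. By the definition of $\gamma_\alpha$, each indicator $\mathbf{1}\{|g_i|\leq t\}$ is Bernoulli with parameter $\delta/2$, so $N\sim\text{Bin}(m,\delta/2)$ with mean $\mu=\delta m/2$.

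The key geometric observation is the following: whenever $N<\delta m$, the bottom $\delta m$ entries of $|\g|$ (in the order underlying the $\|\cdot\|_{k-}$ functional) necessarily include the $N$ coordinates with $|g_i|\leq t$ together with exactly $\delta m - N$ additional coordinates, each of which must then satisfy $|g_i|>t$. Consequently
\[
\|\g\|_{\delta m-}\;\geq\;(\delta m-N)\,t.
\]
In particular, on the event $\{N\leq 3\delta m/4\}$ we obtain $\|\g\|_{\delta m-}\geq (\delta m/4)\,\gamma_{\delta/2}$, which is the claimed bound.

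Next I would control $N$ by the multiplicative Chernoff inequality for the binomial distribution. Writing the target event as $\{N\geq (1+\tfrac12)\mu\}$, the standard bound $\P(N\geq(1+\theta)\mu)\leq \exp(-\theta^2\mu/3)$ at $\theta=1/2$ gives $\P(N\geq 3\delta m/4)\leq \exp(-\delta m/24)\leq \exp(-\delta m/32)$. Combining this with the displayed inequality yields the first statement.

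Finally, for the simplified form, I would just note the elementary estimate on the quantile function: using $\phi(x)\leq \phi(0)=(2\pi)^{-1/2}$,
\[
\alpha=\P(|g|\leq\gamma_\alpha)=2\int_0^{\gamma_\alpha}\phi(x)\,dx\leq \gamma_\alpha\sqrt{2/\pi},
\]
so $\gamma_\alpha\geq\sqrt{\pi/2}\,\alpha$. Applying this at $\alpha=\delta/2$ produces $\gamma_{\delta/2}\geq\delta/2$ (up to the $\sqrt{\pi/2}\geq 1$ factor), and hence $(\delta m/4)\gamma_{\delta/2}\geq m\delta^2/8$. There is no real obstacle here — the only subtle step is the counting argument that converts a sum of smallest absolute values into a tail-count that Chernoff can handle, and everything else is a routine quantile estimate.
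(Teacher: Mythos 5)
Your proposal is correct and follows essentially the same route as the paper: threshold at $\gamma_{\delta/2}$, apply a Chernoff bound to the binomial count of coordinates below the threshold to ensure at most $3\delta m/4$ of them fall below it, deduce that at least $\delta m/4$ of the bottom $\delta m$ entries exceed $\gamma_{\delta/2}$, and finish with the elementary quantile bound $\gamma_\alpha\geq\sqrt{\pi/2}\,\alpha$. The only cosmetic difference is that you invoke the multiplicative Chernoff form giving $\exp(-\delta m/24)$ and relax it to $\exp(-\delta m/32)$, whereas the paper states the $1/32$ constant directly.
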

\begin{proof}
We show this by ensuring that, with high probability, among the bottom $m\delta$ entries at least $m\delta/4$ of them are greater than $\gamma_{\delta/2}$. This is a standard application of Chernoff bound. Let $a_i$ be a random variable which is $1$ if $|\g_i|\leq \beta$ and $0$ else. Then, if $s:=\sum_{i=1}^n a_i\leq \frac{3\delta}{4}$, it would imply that at most $\frac{3\delta}{4}$ of entries of $|\g|$ are less than $\beta$. We will argue that, this is indeed the case with the advertised probability for $\beta=\gamma_{\delta/2}$. For $\beta=\gamma_{\delta/2}$, we have that $\Pro(a_i=1)=\frac{\delta}{2}$. Hence, applying a standard Chernoff bound, we obtain
\beq
\Pro(\sum_{i=1}^m a_i>m\frac{3\delta}{4})\leq \exp(-\frac{\delta m}{32}).\nn
\eeq
With this probability, we have that out of the bottom $\delta m$ entries of $\g$ at least $\frac{\delta}{4}$ of them are greater than or equal to $\gamma_{\delta/2}$. This implies $\|\g\|_{\delta m-}\geq m\frac{\delta\gamma_{\delta/2}}{4}$. To conclude we use the standard fact that inverse absolute-gaussian cumulative function obeys $\gamma_{\alpha}\geq \sqrt{\frac{\pi}{2}}\alpha$.
\end{proof}

\subsection{Proof of Theorem \ref{prop second}}

We are in a position to prove Theorem \ref{prop second}. Without losing generality, we may assume $\delta<\delta'$ where $\delta'$ is a sufficiently small constant. The result for $\delta\geq \delta'$ is implied by the case $\delta=\delta'$.


\begin{proof} Set $\eps>0$ to be $c'\eps=\delta/\sqrt{\log\delta^{-1}}$ for a constant $c'\geq 1$ to be determined. Using Lemma \ref{lemma basic}, the proof can be reduced to showing the following claim: Under given conditions, all $\x\in \bke,~\y\in K$ satisfying $\tn{\x-\y}\leq \eps$ obey
\beq
\|\A\x\|_{\delta m-}\geq \|\A(\y-\x)\|_{\delta m+}.\label{relation 1}
\eeq 

To show this, we shall apply a union bound. For a particular $\x\in \bke$ applying Lemma \ref{lemma minus}, we know that
\beq
\Pro(\|\A\x\|_{\delta m-}\geq \frac{m\delta^2}{8})\geq 1-\exp(-\frac{\delta m}{32}).\label{part 1}
\eeq
Using a union bound, we find that if $N_\eps<\exp(\frac{\delta m}{64})$, with probability $1-\exp(-\frac{\delta m}{64})$, all $\x\in \bke$ obeys the relation above. This requires $ m\geq \frac{64}{\delta}\log N_{\eps}$ which is satisfied by assumption.

We next show that given $\x\in \bke$ and $\y\in K$ that is in the $\eps$ neighborhood of $\x$ we have that
\beq
 \|\A(\y-\x)\|_{\delta m+}\leq  \frac{m\delta^2}{8}.\nn
\eeq
Observe that $\x-\y\in K_\eps$, consequently, applying Lemma \ref{slepian} and using $\eps\sqrt{\delta m}$-Lipschitzness of $f(\A)=\sup_{\vb\in K_\eps}\|\A\vb\|_{\delta m+}$, with probability $1-\exp(-\delta m)$, for an absolute constant $c>0$ we have that
\begin{align}
\sup_{\vb\in K_\eps}\|\A\vb\|_{\delta m+}&\leq c\eps \delta\sqrt{\log \frac{1}{\delta}} m+\sqrt{\delta m}\omega(K_\eps)\label{part 2}.
\end{align}

Following \eqref{part 1} and \eqref{part 2}, we simply need to determine the conditions for which 
\beq
c\eps \delta\sqrt{\log \frac{1}{\delta}} m+\sqrt{m\delta}\omega(K_\eps)\leq \frac{m\delta^2}{8}.\nn
\eeq
This inequality holds if
\begin{itemize}
\item $cc'\leq 1/16$,
\item $m\geq 256\delta^{-3}\omega^2(K_\eps)$.
\end{itemize}
To ensure this we can pick $m,c'$ to satisfy the conditions while keeping initial assumptions intact. With these, \eqref{relation 1} is guaranteed to hold concluding the proof.
\end{proof}
\subsubsection{Proof of Corollary \ref{for num 2}}
We need to substitute the standard covering and local-width bounds to obtain this result while again setting $c\eps=\delta/\sqrt{\log\delta^{-1}}$. For general $K$, simply use the estimates $\omega(K_\eps)\leq 2\omega(K)$, $\log N_\eps \leq c\omega(K)^2/\eps^2$. When $\Kh$ is a $d$ dimensional subspace we use the estimates $\omega(K_\eps)\leq \eps \sqrt{d}$ and $\log N_\eps \leq cd\log\eps^{-1}$ and use the fact that $\log \eps^{-1}\sim\log\delta^{-1}$. 



\subsection{Proof of Theorem \ref{thm main 3}: First Statement}
Proof of the first statement of Theorem \ref{thm main 3} follows by merging the discrete embedding and ``small deviation'' arguments. We restate it below.
\begin{theorem}  \label{for num 3}Suppose $\A\in\R^{m\times n}$ is a standard Gaussian matrix. Set $c\eps=\delta/\sqrt{\log\delta^{-1}}$. $\A$ provides a $\delta$-binary embedding of $K$ with probability $1-\exp(-c_2\delta^2m)$ if the number of samples satisfy the bound
\beq
m\geq  c_1\max\{\delta^{-2}\log N_\eps,\delta^{-3}\omega^2(K_\eps)\}.\nn
\eeq
\end{theorem}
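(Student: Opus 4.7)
The plan is to prove Theorem \ref{for num 3} by stitching together the two building blocks already developed in this section: the union-bound argument for the discrete $\eps$-net $\bke$ (Lemma \ref{outer lemma}), and the small-deviation bound of Theorem \ref{prop second} which controls how much the sign pattern $\sgn{\A\x}$ can flip when $\x$ is perturbed by an amount $\eps$. Given an arbitrary pair $\x,\y \in K$, the strategy is to approximate each by its nearest element $\x',\y'$ in $\bke$ (so that $\tn{\x-\x'},\tn{\y-\y'}\leq \eps$), apply the discrete embedding to $(\x',\y')$, and then use the small-deviation bound twice to pay for the excursion $\x'\to\x$ and $\y'\to\y$.

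First I would verify that the hypothesis $m \gtrsim \max\{\delta^{-2}\log N_\eps,\,\delta^{-3}\omega^2(K_\eps)\}$ implies both (i) the hypothesis $m \geq \tfrac{2}{\delta^2}\log N_\eps$ required by Lemma \ref{outer lemma}, so that with probability at least $1-\exp(-\delta^2 m)$ every pair $(\x',\y')\in\bke\times\bke$ obeys
\[
\Bigl|\ang(\x',\y') - \tfrac{1}{m}\|\sgn{\A\x'},\sgn{\A\y'}\|_H\Bigr|\leq \delta/4,
\]
and (ii) the hypothesis of Theorem \ref{prop second} (applied with $\delta$ replaced by $\delta/4$, say), since $\delta^{-2}\log N_\eps \geq \delta^{-1}\log N_\eps$. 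The small-deviation statement then gives, with probability $1-2\exp(-c\delta m)$, that
\[
\tfrac{1}{m}\|\sgn{\A\vb},\sgn{\A\vb'}\|_H \leq \delta/4
\]
simultaneously for all $\vb\in\bke$, $\vb'\in K$ with $\tn{\vb-\vb'}\leq \eps$.

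The next step is to combine these on a single event. By the triangle inequality for Hamming distance,
\[
\bigl|\tfrac{1}{m}\|\sgn{\A\x},\sgn{\A\y}\|_H - \tfrac{1}{m}\|\sgn{\A\x'},\sgn{\A\y'}\|_H\bigr|\leq \tfrac{1}{m}\|\sgn{\A\x},\sgn{\A\x'}\|_H + \tfrac{1}{m}\|\sgn{\A\y},\sgn{\A\y'}\|_H \leq \delta/2.
\]
Separately, I would invoke the elementary Lipschitz estimate $|\ang(\x,\y)-\ang(\x',\y')|\leq \tfrac{1}{\pi}(\tn{\x-\x'}+\tn{\y-\y'})\leq 2\eps/\pi$, valid for unit-norm vectors via $\arccos$ (or by an explicit geometric argument). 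Since $\eps = O(\delta/\sqrt{\log\delta^{-1}})\ll \delta$, this contribution is at most $\delta/4$. A final triangle inequality combining the angle perturbation with the Hamming-distance perturbation and the discrete embedding bound for $(\x',\y')$ yields the full $\delta$-embedding condition for $(\x,\y)$.

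The only real obstacle is the bookkeeping: making sure the small-deviation event and the net event together have failure probability $\exp(-c_2\delta^2 m)$ (which is fine because $\exp(-c\delta m)\leq \exp(-c\delta^2 m)$), and choosing the absolute constant $c_1$ large enough that the hypothesis is strong enough to feed Theorem \ref{prop second} at scale $\delta/4$ without changing the form of the bound. Once the constants are matched, collecting the three error contributions $\delta/4 + \delta/2 + \delta/4 \leq \delta$ closes the argument.
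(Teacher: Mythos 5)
Your proposal is correct and follows essentially the same route as the paper's own proof: the paper likewise applies Lemma \ref{outer lemma} to the $\eps$-net, invokes Theorem \ref{prop second} to control the Hamming perturbation from $\x'$ to $\x$ and $\y'$ to $\y$, uses the local Lipschitzness of $\ang(\cdot,\cdot)$ together with the Hamming triangle inequality, and absorbs the accumulated error by a constant rescaling of $\delta$ (the paper collects the terms into $4\delta$ and rescales, while you budget $\delta/4+\delta/2+\delta/4$ upfront, which is equivalent).
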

\begin{proof}
Given $\delta>0$, $c\eps=\delta/\sqrt{\log\delta^{-1}}$, applying Lemma \ref{outer lemma} we have that all $\x',\y'\in \bke$ obeys \eqref{rule} whenever $m\geq 2\delta^{-2}\log N_\eps$. Next whenever the conditions of Theorem \ref{prop second} are satisfied all $\x,\y\in K,\x',\y'\in \bke$ satisfying $\tn{\x'-\x}\leq \eps, \tn{\y'-\y}\leq \eps$, obeys
\beq
m^{-1}\|\A\x',\A\x\|_H\leq \delta,~ m^{-1}\|\A\y',\A\y\|_H\leq \delta.\nn
\eeq
Combining everything and using the fact that $\ang(\x,\x'),\ang(\y,\y')\leq \pi^{-1}\eps$ (as angular distance is locally Lipschitz around $0$ with respect to the $\ell_2$ norm), under given conditions with probability $1-\exp(-C\delta^2m)$ we find that
\begin{align}
|m^{-1}\|\A\x,\A\y\|_H-\ang(\x,\y)|&\leq |\ang(\x,\y)-\ang(\x',\y)|+|\ang(\x',\y)-\ang(\x',\y')|\nn\\
&+|m^{-1}\|\A\x',\A\y'\|_H-\ang(\x',\y')|+m^{-1}\|\A\x',\A\x\|_H+m^{-1}\|\A\y',\A\y\|_H\nn\\
&\leq 2\pi^{-1}\eps+2\delta+\delta \leq 4\delta.\nn
\end{align}
For the second line we used the fact that
\beq
\|\x,\y\|_H-\|\x',\y\|_H\leq \|\x,\x'\|_H.\nn
\eeq
Using the adjustment $\delta'\leftrightarrow 4\delta$ we obtain the desired continuous binary embedding result. The only additional constraint to the ones in Theorem \ref{prop second} is the requirement $m\geq 2\delta^{-2}\log N_\eps$ which is one of the assumptions of Theorem \ref{thm main 1} thus we can conclude with the result.
\end{proof}
Observe that this result gives following bounds for $\delta$-binary embedding.
\begin{itemize}
\item For an arbitrary $K$, $m\geq \omega^2(K)\delta^{-4}\log\delta^{-1}$ samples are sufficient. This yields the corresponding statement of Theorem \ref{thm main 1}.
\item When $\Kh$ is a $d$ dimensional subspace, $m\geq \omega^2(K)\delta^{-2}\log\delta^{-1}$ samples are sufficient.
\end{itemize}
More generally, one can plug improved covering bounds for the scenarios $\Kh$ is the set of $d$ sparse vectors or set of rank $d$ matrices to show that for these sets $m\geq \omega^2(K)\delta^{-2}\log\delta^{-1}$ is sufficient. A useful property of these sets are the fact that $K-K$ is still low-dimensional for instance if $\Kh$ is the set of $d$ sparse vectors then the elements of $K-K$ are at most $2d$ sparse.

\subsubsection{Proof of Theorem \ref{optimal struct} via improved structured embeddings}
So far our subspace embedding bound requires a sample complexity of $\order{d\delta^{-2}\log\delta^{-1}}$ which is slightly suboptimal compared to the linear Johnson-Lindenstrauss embedding with respect to $\ell_2$ norm. To correct this, we need a more advanced discrete embedding result which requires a more involved argument. In particular we shall use Theorem \ref{improved embed} of the next section which is essentially a stronger version of the straightforward result Lemma \ref{outer lemma} when $\Kh$ is a structured set obeying \eqref{structured dependence}.
 
 
\begin{proof} Create an $\eps=\delta^{3/2}$ covering $\bke$ of $K$. In order for covering elements to satisfy the embedding bound \eqref{rule} Theorem \ref{improved embed} requires $m\geq C\delta^{-2}C(K)$. Next we need to ensure that local deviation properties still hold. In particular given $\x',\y'\in \bke$ and $\x,\y\in K$ we still have $\ang(\x,\x')\leq \pi^{-1}\eps\leq \pi^{-1}\delta$ and for the rest we repeat the proof of Theorem \ref{prop second} which ends up yielding the following conditions (after an application of Lemma \ref{lemma basic})
\beq
c\eps \delta\sqrt{\log \frac{1}{\delta}} m+\sqrt{m\delta}\omega(K_\eps)\leq \frac{m\delta^2}{8},~m\geq \frac{64}{\delta}\log N_{\eps}\nn
\eeq
Both of these conditions trivially hold when we use the estimates \eqref{structured dependence}, namely $\omega(K_\eps)\leq\eps C(K)$ and $\log N_\eps \leq C(K)\log\eps^{-1}$.
\end{proof}

\section{Optimal embedding of structured sets}\label{bin chain}
As we mentioned previously, naive bounds for subspaces require a sample complexity of $\order{d\delta^{-2}\log\delta^{-1}}$ where $d$ is the subspace dimension. On the other hand, for linear embedding it is known that the optimal dependence is $\order{d/\delta^2}$. We will show that it is in fact possible to achieve optimal dependence via a more involved argument based on ``generic chaining'' strategy. The main result of this section is summarized in the following theorem.
\begin{theorem} \label{improved embed}Suppose $K$ satisfies the bounds \eqref{structured dependence} for all $\eps>0$. There exists constants $c,c_1,c_2>0$ and an $\eps=c\delta^{3/2}$ covering $\bke$ of $K$ such that if $m\geq c_1\delta^{-2}C(K)$, with probability $1-10\exp(-c_2\delta^2m)$ all $\x,\y\in \bke$ obey
\beq
|m^{-1}\ham{\sgn{\A\x}}{\sgn{\A\y}}-\ang(\x,\y)|\leq \delta .\nn
\eeq
\end{theorem}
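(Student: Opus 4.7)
Proof plan. A direct union bound of Bernoulli concentration over the $\eps = c\delta^{3/2}$ cover $\bke$ requires $m \gtrsim \delta^{-2}\log|\bke| \sim \delta^{-2}C(K)\log\delta^{-1}$, losing a logarithmic factor. I propose to shave this via a generic-chaining argument that exploits the sharper Bernstein concentration available for \emph{close} pairs of points.

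Build a dyadic chain of covers $\bke_0 \subset \bke_1 \subset \cdots \subset \bke_L = \bke$ at scales $\eps_i = 2^{-i}\eps_0$, with $\eps_0$ a small absolute constant and $L \sim \log\delta^{-1}$ chosen so that $\eps_L \leq c\delta^{3/2}$. By \eqref{structured dependence}, $\log|\bke_i| \lesssim C(K)\,i$. Fix nearest-point maps $\pi_i:\bke \to \bke_i$ with $\tn{\x - \pi_i(\x)} \leq \eps_i$, and set $D(\x,\y) := m^{-1}\ham{\sgn{\A\x}}{\sgn{\A\y}} - \ang(\x,\y)$. The telescoping identity
\[
D(\x,\y) = D(\pi_0(\x),\pi_0(\y)) + \sum_{i=0}^{L-1}\bigl[D(\pi_{i+1}(\x),\pi_{i+1}(\y)) - D(\pi_i(\x),\pi_i(\y))\bigr]
\]
reduces the task to controlling a finite base term and a chain of increments. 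The base term ranges over the $|\bke_0|^2 \leq \exp(C\,C(K))$ coarsest pairs and is handled uniformly by $\delta$ via Hoeffding provided $m \gtrsim C(K)/\delta^2$.

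The crux is bounding each increment $V_i := D(\pi_{i+1}(\x),\pi_{i+1}(\y)) - D(\pi_i(\x),\pi_i(\y))$ uniformly over $\bke$. Writing $V_i = m^{-1}\sum_{j=1}^m \tilde\xi_j$ with centered summands $\tilde\xi_j \in [-1,1]$, each $\tilde\xi_j$ is nonzero only when a sign flip between the two scales of projection occurs for $\x$ or $\y$---events of probability $\lesssim \eps_i$ by the Gaussian-hyperplane interpretation, so $\mathrm{Var}(\tilde\xi_j) \lesssim \eps_i$. Bernstein's inequality then yields $\Pro(|V_i| > \delta_i) \leq 2\exp(-c\,m\delta_i^2/\eps_i)$, and since each $V_i$ is indexed by a quadruple of projections (with coarser entries determined by finer), the union-bound cost is $|\bke_{i+1}|^2 \leq \exp(C\,C(K)\,i)$. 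Choosing $\delta_i \asymp \sqrt{\eps_i}\bigl[\delta + \sqrt{C(K)(i+1)/m}\bigr]$ satisfies the per-level bound; the geometric decay $\sqrt{\eps_i} = 2^{-i/2}\sqrt{\eps_0}$ makes $\sum_i \delta_i \lesssim \sqrt{\eps_0}\bigl[\delta + \sqrt{C(K)/m}\bigr] \leq \delta$ once $\eps_0$ is a small absolute constant and $m \geq c_1 C(K)/\delta^2$.

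The main obstacle is establishing the variance proxy $\mathrm{Var}(\tilde\xi_j) \lesssim \eps_i$ for the chain increments: this Bernstein-type sharpening (unavailable to a naive Hoeffding bound over $\bke$) trades the $\log\delta^{-1}$ factor for a geometrically-convergent chain, yielding the optimal $m \gtrsim \delta^{-2}C(K)$ sample complexity of linear embeddings. A secondary technicality is that the chain moves through \emph{quadruples} of projections rather than single points, but since the coarser entries are determined by the finer ones the union-bound cardinality is only $|\bke_{i+1}|^2$, which is absorbed by the geometric decay.
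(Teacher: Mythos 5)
Your proposal follows essentially the same route as the paper's proof: a dyadic chaining over covers with $\log|\Cc_i|\lesssim C(K)\,i$, concentration of the per-level increments that exploits the fact that sign flips between consecutive scales occur with probability $\lesssim \eps_i$ (the paper splits the increment into five cross terms and uses its bespoke Lemma \ref{lem triple}, you apply Bernstein directly to the increment, which is equivalent), a geometrically summable choice of per-level deviations, and a Hoeffding union bound at the coarsest scale. The one step to make explicit is that your stated tail $\exp(-c\,m\delta_i^2/\eps_i)$ is only the sub-Gaussian regime of Bernstein and requires $\delta_i\lesssim \eps_i$ at every level; this is precisely the verification the paper carries out in \eqref{eps verify}, it is what forces the finest scale to be $\delta^{3/2}$ (more generally $\delta^{2-\alpha}$) rather than $\delta^2$, and with your choices it does hold for small $\delta$, so the argument goes through.
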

\begin{proof}
Let $\Cc_i$ be a $\frac{1}{2^i}$ $\ell_2$-cover of $K$. From structured set assumption \eqref{structured dependence}, for some constant $C_0>0$ the cardinality of the covering satisfies
\beq
\log |\Cc_i|\leq i C(K).\nn
\eeq
Consider covers $\Cc_i$ for $1\leq i\leq N=\lceil\log_2 \frac{1}{\eps}\rceil$. This choice of $N$ ensures that $\Cc_N$ is an $\eps$ cover.
 
 Given points $\x=\x_N,\y=\y_N\subset \Cc_N$, find $\x_{1},\dots, \x_{N-1},\y_1,\dots,\y_{N-1}$ in the covers that are closest to $\x_N,\y_N$ respectively. For notational simplicity, define
\begin{align}
&4d(\x,\y)=\tn{\sgn{\A\x}-\sgn{\A\y}}^2,\nn\\
&4d(\x,\y,\z)=\li\sgn{\A\x}-\sgn{\A\y},\sgn{\A\y}-\sgn{\A\z}\ri,\nn\\
&4d(\x,\y,\z,\w)=\li\sgn{\A\x}-\sgn{\A\y},\sgn{\A\z}-\sgn{\A\w}\ri.\nn
\end{align}
Each of $d(\x,\y),d(\x,\y,\z),d(\x,\y,\z,\w)$ are sum of $m$ i.i.d.~random variables that take values $+1,-1,0$. For instance consider $d(\x,\y,\z,\w)$. In this case, the random variables are of the form
\beq
\ab=\li\sgn{\g^T\x}-\sgn{\g^T\y},\sgn{\g^T\z}-\sgn{\g^T\w}\ri,~\text{where}~\g\sim\Nn(0,\Iden_n),\nn
\eeq
which is either $-1,0,1$. Furthermore, it is $0$ as soon as either $\x,\y$ or $\z,\w$ induces the same sign which means $\Pro(\ab\neq 0)\leq \min\{\ang(\x,\y),\ang(\z,\w)\}$.

Given points $\{\x_N,\dots,\x_1,\y_1,\dots,\y_N\}$ we have that
\begin{align}
d(\x_i,\y_i)=&d(\x_i,\x_{i-1})+d(\y_i,\y_{i-1})+2d(\x_i,\x_{i-1},\y_{i-1})+2d(\x_{i-1},\y_{i-1},\y_{i})+2d(\x_i,\x_{i-1},\y_{i-1},\y_i),\label{first line}\\
&+d(\x_{i-1},\y_{i-1}).\nn
\end{align}
The term $d(\x_{i-1},\y_{i-1})$ on the second line will be used for recursion. We will show that each of the remaining terms (first line) concentrate around their expectations. Since the argument is identical, to prevent repetitions, we will focus on $d(\x_i,\x_{i-1},\y_{i-1},\y_i)$.

Recall that $d(\x_i,\x_{i-1},\y_{i-1},\y_i)$ is sum of $m$ i.i.d.~random variables $\ab_i$ that takes values $\{-1,0,1\}$ where $\ab_i$ satisfies
\beq
\Pro(|\ab_i|=1)\leq \min\{\ang(\x_i,\x_{i-1}),\ang(\y_i,\y_{i-1})\}\leq \frac{2}{2^i}:=2\delta_i\nn
\eeq
as $\ang(\x,\y)\leq \tn{\x-\y}/2$. Assuming $\eps_i\leq {\delta_i}{}$ (will be verified at \eqref{eps verify}), for a particular quadruple $(\x_i,\x_{i-1},\y_{i-1},\y_i)$, applying Lemma \ref{lem triple} for $i\geq 4$ (which ensures $\mu/2\leq \Pro(|\ab_i|=1)\leq 1/6$), we have that
\beq
\Pro(|d(\x_i,\x_{i-1},\y_{i-1},\y_i)-\E[d(\x_i,\x_{i-1},\y_{i-1},\y_i)]|\geq \eps_i m)\leq 2\exp(-\frac{\eps_i^2 m}{4\delta_i}).\label{bound app}
\eeq
Pick $\eps_i=\sqrt{i}2^{-i/2}\delta$. Observe that, $\frac{\eps_i}{\delta_i}$ can be bounded as
\beq
\frac{\eps_i}{\delta_i}=\sqrt{i}\frac{2^{-i/2}}{2^{-i}}\delta=\sqrt{i}2^{i/2}\delta\leq \sqrt{N}2^{N/2}\delta\leq \sqrt{N}2^{N/2}2^{-2(N-1)/3}<1/4\label{eps verify}
\eeq
for $\eps$ (or $\delta$) sufficiently small (which makes $N$ large) where we used the fact that $2^{-(N-1)}\geq \eps=\delta^{3/2}$. Consequently the bound \eqref{bound app} is applicable.

This choice of $\eps_i$ yields a failure probability of $2\exp(-\frac{\eps_i^2 m}{4\delta_i})=2\exp(-i\delta^2m/4)$. Union bounding \eqref{bound app} over all quadruples we find that the probability of success for all $(\x_i,\x_{i-1},\y_{i-1},\y_i)$ is at least
\beq
1-2\exp(-i\delta^2m/4)\exp(4iC(K))\leq 1-2\exp(-i\delta^2m/8)\nn
\eeq
under initial assumptions. The deviation of the other terms in \eqref{first line} can be bounded with the identical argument. Define $\eta_i$ to be
\beq
\eta_i=d(\x_i,\x_{i-1})+d(\y_i,\y_{i-1})+2d(\x_i,\x_{i-1},\y_{i-1})+2d(\x_{i-1},\y_{i-1},\y_{i})+2d(\x_i,\x_{i-1},\y_{i-1},\y_i).\nn
\eeq
So far we showed that for all $\x_i,\y_i,\x_{i-1},\y_{i-1}$, with probability $1-10\exp(-i\delta^2m/8)$
\beq
|\eta_i-\E[\eta_i]|\leq 5\eps_i m=5\sqrt{i}2^{-i/2}\delta m.\nn
\eeq
Since $d(\x_i,\y_i)-d(\x_{i-1},\y_{i-1})=\eta_i$ applying a union bound over $4\leq i\leq N$ we find that
\beq
d(\x_N,\y_N)=\sum_{i=4}^N\eta_i+d(\x_3,\y_3)
\eeq
We treat $d(\x_3,\y_3)$ specifically as Lemma \ref{lem triple} may not apply. In this case, the cardinality of the cover is small in particular $\log |\Cc_3|\leq k(\log C_0+3)+\log L$ hence, we simply use Lemma \ref{outer lemma} to conclude for all $\x_3,\y_3$
\beq
|d(\x_3,\y_3)-\E[d(\x_3,\y_3)]|\leq \delta\nn
\eeq
with probability $1-\exp(-\delta^2m)$. Merging our estimates, and using $\E[d(\x_N,\y_N)]=\sum_{i=4}^N \E[\eta_i]+\E[d(\x_3,\y_3)]$, we find
\beq
|d(\x_N,\y_N)-\E[d(\x_N,\y_N)]|\leq \delta+\sum_{i=4}^N5\sqrt{i}2^{-i/2}\delta\leq c'\delta\nn
\eeq
for an absolute constant $c'>0$ with probability $1-10\sum_{i=1}^N\exp(-i\delta^2m/8)\leq 1-10\frac{\exp(-\delta^2m/8)}{1-\exp(-\delta^2 m/8)}$. Clearly, our initial assumption allows us to pick $\delta^2 m\geq 16$ so that $1-\exp(-\delta^2 m/8)\geq 0.5$ which makes the probability of success $1-20\exp(-\delta^2m/8)$. To obtain the advertised result simply use the change of variable $c'\delta\rightarrow \delta$.
\end{proof}
The remarkable property of this theorem is the fact that we can fix the sample complexity to $\delta^{-2}C(K)$ while allowing the $\eps$-net to get tighter as a function of the distortion $\delta$. We should emphasize that $\eps\sim \delta^{3/2}$ dependence can be further improved to $\eps\sim\delta^{2-\alpha}$ for any $\alpha>0$. We only need to ensure that \eqref{eps verify} is satisfied. The next result is a helper lemma for the proof of Theorem \ref{improved embed}.
\begin{lemma}\label{lem triple} Let $\{\x_i\}_{i=1}^m$ be i.i.d.~random variables taking values in $\{-1,0,1\}$. Suppose $\max\{\Pro(\x_1=-1)=p_-,\Pro(\x_1=1)\}=p_+\}\leq \mu/2$ for some $0\leq\mu\leq 1/3$. Then, whenever $\eps\leq \frac{\mu}{2}$
\beq
\Pro(|m^{-1}\sum_{i=1}^m\x_i-\E[m^{-1}\sum_{i=1}^m\x_i]|\leq \eps )\geq 1-2\exp(-\frac{\eps^2m}{4\mu}).\nn
\eeq

\end{lemma}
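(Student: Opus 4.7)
The plan is to view this as a standard Bernstein-type concentration bound for bounded random variables with small variance. Define the centered variables $y_i = x_i - \E x_i$; since $x_i \in \{-1,0,1\}$ and $\E x_i = p_+ - p_- \in [-\mu/2, \mu/2]$, we have $|y_i| \leq 2$. More importantly, the variance is small: $\Var(y_i) = \Var(x_i) \leq \E[x_i^2] = p_+ + p_- \leq \mu$, so $\sum_{i=1}^m \Var(y_i) \leq m\mu$. These two facts, boundedness and variance control, are exactly the ingredients that Bernstein's inequality needs.

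Applying Bernstein to the centered sum $S_m = \sum_{i=1}^m y_i$ yields
\[
\Pro\!\left(|S_m| \geq t\right) \leq 2\exp\!\left(-\frac{t^2/2}{m\mu + 2t/3}\right).
\]
Setting $t = \eps m$ and using the hypothesis $\eps \leq \mu/2$ to bound the denominator by $m\mu + 2\eps m/3 \leq (4/3)\,m\mu$, the right-hand side becomes $2\exp(-3\eps^2 m/(8\mu))$. Since $3/8 \geq 1/4$, this is at least as strong as the stated tail estimate, and a rescaling of the exponential constant delivers the lemma.

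The main (and quite minor) obstacle is simply invoking Bernstein with the right numerical constants and verifying that the restriction $\eps \leq \mu/2$ is what keeps the variance term dominant over the Bernstein correction $2t/3$. If one prefers to avoid quoting a black-box Bernstein bound, an equivalent route is to decompose $x_i = a_i - b_i$ where $a_i = \mathbf{1}\{x_i = 1\}$ and $b_i = \mathbf{1}\{x_i = -1\}$ are Bernoulli indicators with success probability at most $\mu/2$, and then apply a multiplicative Chernoff bound separately to $\sum a_i$ and $\sum b_i$, combining via a union bound. Either approach yields the claimed sub-Gaussian-with-variance-proxy-$\mu$ tail in the regime $\eps \leq \mu/2$; beyond this threshold, the Poisson-tail regime would start to dominate and a different bound would be required.
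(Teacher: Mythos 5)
Your Bernstein route is correct, and it is genuinely different from the paper's argument. You center the variables, use $|x_i-\E x_i|\le 2$ and $\Var(x_i)\le \E[x_i^2]=p_++p_-\le\mu$, and apply two-sided Bernstein with $t=\eps m$; the hypothesis $\eps\le\mu/2$ bounds the correction term $2t/3\le \mu m/3$, giving $2\exp\bigl(-\tfrac{3\eps^2 m}{8\mu}\bigr)$, which is already at least as strong as the stated $2\exp\bigl(-\tfrac{\eps^2 m}{4\mu}\bigr)$ since $3/8\ge 1/4$ (no rescaling of constants is actually needed). The paper instead proceeds in two conditional stages: it first applies a Chernoff bound (its Lemma \ref{modified sheriff}) to the count of nonzero entries $\sum_i|\x_i|$, then, conditioned on that count $c\le 1.5\mu m$, applies a second Chernoff/Hoeffding bound to the $\pm1$ signs among the nonzeros with the deviation level $\eps_2=\eps m/c$, and combines via a union bound. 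Your approach buys brevity and transparency — the variance proxy $\mu$ enters in one line rather than through a conditioning step and a careful choice of $\eps_2$ — at the cost of quoting Bernstein as a black box; the paper's argument is self-contained given its appendix Chernoff lemma, which is presumably why it is structured that way. One caveat on your fallback suggestion: splitting $x_i=a_i-b_i$ and applying the paper's Chernoff lemma to each indicator sum with deviation $\eps/2$ yields only $2\exp\bigl(-\tfrac{\eps^2 m}{8\mu}\bigr)$, i.e.\ a weaker constant than the one stated in the lemma, so if you want the literal constant $4\mu$ you should stick with the Bernstein argument (or the paper's conditional decomposition).
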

\begin{proof} Let us first estimate the number of nonzero components in $\{\x_i\}$. Using a standard Chernoff bound (e.g.~Lemma \ref{modified sheriff}), for $\eps<\mu/2$, we have that
\beq
\Pro(|m^{-1}\sum_{i=1}^m|\x_i|-(p_++p_-)|\leq \eps )\geq 1-\exp(-\frac{\eps^2m}{4\mu}).\label{union 1}
\eeq
Conditioned on $\sum_{i=1}^m|\x_i|=c\leq (\mu+\eps)m\leq 1.5\mu m$, the $c$ nonzero elements in $\x_i$ are $+1,-1$ with the normalized probabilities $\frac{p_+}{p_++p_-},\frac{p_-}{p_++p_-}$. Denoting these variables by $\{b_i\}_{i=1}^c$, and applying another Chernoff bound, we have that
\beq
\Pro(|\sum_{i=1}^c b_i-\E[\sum_{i=1}^c b_i]|\leq \eps_2 c)\geq 1-\exp(-2\eps_2^2 c).\nn
\eeq
Picking $\eps_2=\frac{\eps m}{c}$, we obtain the bound 
\beq
\Pro(|\sum_{i=1}^c b_i-\E[\sum_{i=1}^c b_i]|\leq \eps m)\geq 1-\exp(-2\eps^2 m^2/c)\label{union 2}
\eeq
which shows that $|\sum_{i=1}^c b_i-\frac{p_+-p_-}{p_++p_-}c|\leq \eps m$ with probability $1-\exp(-2\eps^2 m^2/c)$. Since $c\leq 2\mu m$, $1-\exp(-2\eps^2 m^2/c)\geq 1-\exp(-\eps^2 m/\mu)$. Finally observe that
\beq
|\frac{p_+-p_-}{p_++p_-}c-(p_+-p_-)m|\leq \frac{|p_+-p_-|}{p_++p_-}\eps m\leq\eps m\implies |\sum_{i=1}^c b_i-(p_+-p_-)m|\leq 2\eps m.\nn
\eeq
To conclude recall that $\sum_{i=1}^cb_i=\sum_{i=1}^m \x_i$ and $(p_+-p_-)m=\E[\sum_{i=1}^m \x_i]$, then apply a union bound combining \eqref{union 1} and \eqref{union 2}.
\end{proof}

\section{Local properties of binary embedding}\label{sec local}
For certain applications such as locality sensitive hashing, we are more interested with the local behavior of embedding, i.e.~what happens to the points that are around each other. In this case, instead of preserving the distance, we can ask for $\sgn{\A\x},\sgn{\A\y}$ to be close if and only if $\x,\y$ are close.

The next theorem is a restatement of the second statement of Theorem \ref{thm main 3} and summarizes our result on local embedding.
\begin{theorem} Given $0<\delta<1$, set $\eps=c\delta/\sqrt{\log\delta^{-1}}$. There exists $c,c_1,c_2>0$ such that if
\beq
m\geq c_1\max\{\frac{1}{\delta}\log N_\eps,\delta^{-3}\omega^2(K_\eps)\}\nn.
\eeq
with probability $1-\exp(-c_2\delta m)$, the following statements hold.
\begin{itemize}
\item For all $\x,\y\in K$ satisfying $\ang(\x,\y)\leq \eps$, we have $m^{-1}\|\sgn{\A\x},\sgn{\A\y}\|_H\leq \delta$.
\item For all $\x,\y\in K$ satisfying $m^{-1}\|\sgn{\A\x},\sgn{\A\y}\|_H\leq \delta/32$, we have $\ang(\x,\y)\leq \delta$.
\end{itemize}
\end{theorem}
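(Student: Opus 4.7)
The theorem splits into a forward statement (angular proximity implies Hamming proximity) and its qualitative converse. For the first statement, observe that on the unit sphere $\tn{\x-\y}=2\sin(\pi\ang(\x,\y)/2)\leq \pi\ang(\x,\y)$, so the hypothesis $\ang(\x,\y)\leq\eps$ yields $\tn{\x-\y}\leq\pi\eps$; absorbing the factor of $\pi$ into the constant $c$ in the definition of $\eps$, the conclusion is exactly Theorem \ref{prop second} under the displayed sample complexity.

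For the second statement I will prove the contrapositive: whenever $\ang(\x,\y)>\delta$, $m^{-1}\ham{\sgn{\A\x}}{\sgn{\A\y}}>\delta/32$. The key observation is that on the net one can afford a \emph{relative} (multiplicative) Chernoff bound, which is what produces the favorable $m\gtrsim\delta^{-1}\log N_\eps$ rather than $\delta^{-2}\log N_\eps$. Take a slightly finer net $\bar{K}_{\eps_0}$ of $K$ at scale $\eps_0=\eps/c_0$ for an absolute constant $c_0\geq 6$. For any fixed $\x',\y'\in\bar{K}_{\eps_0}$ with $p:=\ang(\x',\y')\geq 3\delta/4$, the quantity $m^{-1}\ham{\sgn{\A\x'}}{\sgn{\A\y'}}$ is the mean of $m$ i.i.d.\ Bernoulli$(p)$ variables, and the multiplicative Chernoff tail gives
\[\P\bigl(m^{-1}\ham{\sgn{\A\x'}}{\sgn{\A\y'}}\leq p/2\bigr)\leq \exp(-pm/8)\leq \exp(-3\delta m/32).\]
Union bounding over the $N_{\eps_0}^2$ pairs and using the stated sample complexity yields the uniform lower bound $m^{-1}\ham{\sgn{\A\x'}}{\sgn{\A\y'}}\geq 3\delta/8$ for all such pairs with probability $1-\exp(-c_2\delta m)$.

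To pass from the net back to all of $K$, invoke Theorem \ref{prop second} at the finer parameters $(\delta_0,\eps_0)=(\delta/c_0,\eps/c_0)$; this gives $m^{-1}\ham{\sgn{\A\x}}{\sgn{\A\x'}}\leq \delta/c_0$ for every $\x\in K$ paired with its nearest $\x'\in\bar{K}_{\eps_0}$. Given $\x,\y\in K$ with $\ang(\x,\y)>\delta$, the bound $\ang(\x,\x')\leq \tn{\x-\x'}/2\leq \eps_0/2$ together with the triangle inequality on geodesic distance gives $\ang(\x',\y')\geq \delta-\eps_0\geq 3\delta/4$, so the net lower bound applies. Combining via the Hamming triangle inequality,
\[m^{-1}\ham{\sgn{\A\x}}{\sgn{\A\y}}\geq \frac{3\delta}{8}-\frac{2\delta}{c_0}>\frac{\delta}{32}\]
once $c_0\geq 6$, completing the contrapositive.

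The main obstacle is bookkeeping: verifying that the finer net $\bar{K}_{\eps_0}$ (needed for the small-deviation step at distortion $\delta/c_0$) does not inflate the sample complexity beyond constants of the stated bound. Since $\eps_0=\Theta(\eps)$, one has $K_{\eps_0}\subseteq K_\eps$ and hence $\omega^2(K_{\eps_0})\leq \omega^2(K_\eps)$ for free; for the covering term, $\log N_{\eps_0}\leq O(\log N_\eps)$ holds for structured $K$ (where $\log N_\eps\sim C(K)\log\eps^{-1}$) directly and for general $K$ via Sudakov's bound $\log N_\eps\lesssim \omega^2(K)/\eps^2$ applied at both scales. Thus the constants $c_1,c_2$ can be enlarged so the stated sample complexity suffices simultaneously for the relative-Chernoff step and the Theorem \ref{prop second} invocation.
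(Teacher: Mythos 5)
Your proposal is correct and follows essentially the same route as the paper: the first bullet is an invocation of Theorem \ref{prop second} (after the chord/angle conversion), and the second bullet is the paper's argument in contrapositive rather than contradiction form --- a relative Chernoff lower bound on net pairs with significant angle (exactly the paper's helper lemma, which is why $m\gtrsim\delta^{-1}\log N_\eps$ suffices), transfer from the net to all of $K$ via Theorem \ref{prop second}, and a reverse triangle inequality on the Hamming metric. The only quibble is your bookkeeping claim that $\log N_{\eps_0}\lesssim\log N_\eps$ for general $K$ ``via Sudakov at both scales'' (Sudakov only upper-bounds covering numbers, so it cannot compare them); this is harmless because the constant $c$ in $\eps=c\delta/\sqrt{\log\delta^{-1}}$ is existentially quantified in the statement, so one may simply take the theorem's net scale to be your finer $\eps_0$ from the outset, which is precisely how the paper handles it (``for a sufficiently small $c>0$'').
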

\begin{proof} First statement is already proved by Theorem \ref{prop second}. For the second statement, we shall follow a similar argument. Suppose that for some pair $\x,\y\in K$ obeying $\ang(\x,\y)>\delta$ we have $\|\sgn{\A\x},\sgn{\A\y}\|_H\leq  \delta/32$. 
 
Consider an $\eps$ covering $\bke$ of $K$ where $\eps=c\delta/\sqrt{\log\delta^{-1}}$ for a sufficiently small $c>0$. Let $\x',\y'$ be the elements of the cover obeying $\tn{\x'-\x},\tn{\y'-\y}\leq \eps$ which also ensures the angular distance to be at most $\pi^{-1}\eps$. Applying Theorem \ref{prop second} under the stated conditions we can ensure that
\beq
m^{-1}\|\A\x,\A\x'\|_H,m^{-1}\|\A\y,\A\y'\|_H\leq \delta/20.\nn
\eeq
Next, since $\eps$ can be made arbitrarily smaller than $ \delta$ we can guarantee that $\ang(\x',\y')\geq \delta/2$. We shall apply a Chernoff bound over the elements of the cover to ensure that $\|\A\x'-\A\y'\|_H$ is significant for all $\x',\y'\in \bke$. The following version of Chernoff gives the desired bound.
\begin{lemma} Suppose $\{a_i\}_{i=1}^m$ are i.i.d.~Bernoulli random variables satisfying $\E[a_i]\geq \delta/2$. Then, with probability $1-\exp(-\delta m /16)$ we have that $\sum_{i=1}^m a_i\geq  \delta m/4$.
\end{lemma}
Applying this lemma to all pairs of the cover, whenever $m\geq 128\delta^{-1}\log N_\eps$ we find that $\|\A\x'-\A\y'\|_H\geq  \delta m/4$. 

We can now use the triangle inequality to achieve
\begin{align}
|m^{-1}\|\A\x,\A\y\|_H-\ang(\x,\y)|&\geq |m^{-1}\|\A\x',\A\y'\|_H-\ang(\x',\y')|\label{contrib 1}\\
&-[|\ang(\x,\y)-\ang(\x',\y)|+|\ang(\x',\y)-\ang(\x',\y')|]\label{contrib 2}\\
&-[|m^{-1}\|\A\x,\A\y\|_H-m^{-1}\|\A\x',\A\y\|_H|+|m^{-1}\|\A\x',\A\y\|_H-m^{-1}\|\A\x',\A\y'\|_H|]\label{contrib 3}
\end{align}
\eqref{contrib 1} is at least $\delta /4$, \eqref{contrib 2} is at most $c_0\eps$ and \eqref{contrib 3} is at most $\delta/10$ which ensures that $|m^{-1}\|\A\x,\A\y\|_H-\ang(\x,\y)|\geq \delta/8$ by picking $c$ small enough. This contradicts with the initial assumption. To obtain this contradiction, we required $m$ to be $m\geq c_1\max\{\delta^{-1}\log N_\eps,\delta^{-3}\omega^2(K_\eps)\}$ for some constant $c_1>0$.
\end{proof}

\section{Sketching for binary embedding}\label{sec:sketch}
In this section, we discuss preprocessing the binary embedding procedure with a linear embedding. Our goal is to achieve an initial dimensionality reduction that preserves the $\ell_2$ distances of $K$ with a linear map and then using binary embedding to achieve reasonable guarantees. In particular we will prove that this scheme works almost as well as the Gaussian binary embedding we discussed so far.

For the sake of this section, $\B\in\R^{m\times \ml}$ denotes the binary embedding matrix and $\F\in\R^{\ml\times n}$ denotes the preprocessing matrix that provides an initial sketch of the data. The overall sketched binary embedding is given by $\x\rightarrow \sgn{\A\x}$ where $\A=\B\F\in\R^{m\times n}$. We now provide a brief background on linear embedding.

\subsection{Background on Linear Embedding}
For a mapping to be a linear embedding, we require it to preserve distances and lengths of the sets.
\begin{definition} [$\delta$-linear embedding]\label{lin embed} Given $\delta\in (0,1)$, $\F$ is a $\delta$-embedding of the set $\Cc$ if for all $\x,\y\in\Cc$, we have that
\beq
|\tn{\F\x-\F\y}-\tn{\x-\y}|\leq \delta,~|\tn{\A\x}-\tn{\x}|\leq \delta.
\eeq
\end{definition}
Observe that if $\Cc$ is a subset of the unit sphere, this definition preserves the length of the vectors multiplicatively i.e.~obeys $|\|\A\x\|-\|\x\||\leq \delta \|\x\|$. We should point out that more traditional embedding results ask for 
\beq
|\tn{\F\x-\F\y}^2-\tn{\x-\y}^2|\leq \delta.\nn
\eeq
This condition can be weaker than what we state as it allows for $|\tn{\F\x-\F\y}-\tn{\x-\y}|\sim \sqrt{\delta}$ for small values of $\tn{\x-\y}$. However, Gaussian matrices allow for $\delta$-linear embedding with optimal dependencies. The reader is referred to Lemma $6.8$ of \cite{oymak2015sharp} for a proof.
\begin{theorem}\label{gauss embed} Suppose $\F\in\R^{\ml\times n}$ is a standard Gaussian matrix normalized by $\sqrt{\ml}$. For any set $\Cc\in \Bc^{n-1}$ whenever $\sqrt{\ml}\geq \omega(\Cc)+\eta+1$, with probability $1-\exp(-\eta^2/8)$ we have that
\beq
\sup_{\x\in \Cc}|\tn{\F\x}-\tn{\x}|\leq \ml^{-1/2}(\omega(\Cc)+\eta).\nn
\eeq
\end{theorem}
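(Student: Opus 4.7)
The plan is to combine a Gordon/Slepian Gaussian-process comparison, to control the expected supremum, with Borell--TIS concentration, to control the deviation around the mean. Write $\F=\Gb/\sqrt{\ml}$ where $\Gb\in\R^{\ml\times n}$ has i.i.d.~$\Nn(0,1)$ entries and set
\[
\Phi(\Gb):=\sup_{\x\in\Cc}\bigl|\,\tn{\Gb\x}-\sqrt{\ml}\,\tn{\x}\,\bigr|.
\]
After dividing by $\sqrt{\ml}$, it suffices to show that $\Phi(\Gb)\leq\omega(\Cc)+\eta+1$ with the claimed probability; the hypothesis $\sqrt{\ml}\geq\omega(\Cc)+\eta+1$ then absorbs the extra $+1$ into the stated right-hand side $\ml^{-1/2}(\omega(\Cc)+\eta)$.

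For the expectation, expand $\tn{\Gb\x}=\sup_{\vb\in\Sc^{\ml-1}}\vb^T\Gb\x$ and compare the centered bilinear Gaussian process $X_{\x,\vb}=\vb^T\Gb\x$ against the decoupled Gaussian process $Y_{\x,\vb}=\tn{\x}\,\g^T\vb+\h^T\x$ with $\g\sim\Nn(0,\Iden_{\ml})$, $\h\sim\Nn(0,\Iden_n)$ independent. The pairwise-increment inequalities required for Slepian's and Gordon's theorems hold by a direct variance calculation. Slepian applied to $\sup_\x\sup_\vb X$ gives the upper tail
\[
\E[\sup_{\x\in\Cc}\tn{\Gb\x}]\leq \E\tn{\g}\cdot\sup_{\x\in\Cc}\tn{\x}+\omega(\Cc),
\]
and rewriting $-\tn{\Gb\x}=\inf_\vb(-\vb^T\Gb\x)$ lets Gordon (in its min-max form) yield the matching lower-tail estimate $\E[\inf_{\x\in\Cc}(\tn{\Gb\x}-\tn{\x}\tn{\g})]\geq -\omega(\Cc)$. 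Combining these with the elementary chi-moment bound $|\E\tn{\g}-\sqrt{\ml}|\leq 1$ and $\sup_{\x\in\Cc}\tn{\x}\leq 1$ produces $\E[\Phi(\Gb)]\leq\omega(\Cc)+1$.

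Finally, I would apply Borell--TIS concentration. The map $\Gb\mapsto\Phi(\Gb)$ is $1$-Lipschitz in the Frobenius norm, because for each $\x\in\Cc$ we have $|\tn{\Gb\x}-\tn{\Gb'\x}|\leq\tn{(\Gb-\Gb')\x}\leq\fronorm{\Gb-\Gb'}\tn{\x}\leq\fronorm{\Gb-\Gb'}$, and suprema of $1$-Lipschitz functions remain $1$-Lipschitz. Borell--TIS then gives $\Phi(\Gb)\leq\E[\Phi(\Gb)]+\eta\leq\omega(\Cc)+\eta+1$ with probability at least $1-\exp(-\eta^2/8)$, which closes the argument. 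The main subtlety is orchestrating the two-sided Gordon comparison without assuming $\Cc$ is convex: Slepian handles the upper tail directly, but the lower tail requires the sup-inf reformulation so that Gordon's native min-max form becomes applicable. Tracking the bookkeeping constant $+1$ (originating from $|\E\tn{\g}-\sqrt{\ml}|\leq 1$) all the way through and matching it to the sample-complexity hypothesis is the other minor chore.
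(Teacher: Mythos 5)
The paper itself does not prove this statement; it defers to Lemma 6.8 of \cite{oymak2015sharp}, and your comparison-plus-concentration strategy is the standard route to results of this type, so the overall plan is sound. However, as written the expectation step does not control the quantity you need. Since $\Cc\subset\Bc^{n-1}$ rather than the sphere, the centering $\sqrt{\ml}\,\tn{\x}$ varies over $\Cc$ and must stay inside the supremum: the inequality $\E[\sup_{\x}\tn{\Gb\x}]\leq\E\tn{\g}\,\sup_{\x}\tn{\x}+\omega(\Cc)$ bounds only $\sup_{\x}\tn{\Gb\x}$, and one cannot deduce from it a bound on $\E[\sup_{\x}(\tn{\Gb\x}-\sqrt{\ml}\tn{\x})]$, because the supremand can be large at a point of small norm without affecting $\sup_{\x}\tn{\Gb\x}$. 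The comparison has to be applied to the shifted processes $\vb^T\Gb\x-\gamma\tn{\x}$ versus $\tn{\x}\g^T\vb+\h^T\x-\gamma\tn{\x}$ with $\gamma=\E\tn{\g}$ (legitimate because the same deterministic index function is added to both sides), which yields $\E\sup_{\x}[\tn{\Gb\x}-\gamma\tn{\x}]\leq\E|\tn{\g}-\gamma|+\omega(\Cc)$. Relatedly, your lower-tail display $\E[\inf_{\x\in\Cc}(\tn{\Gb\x}-\tn{\x}\tn{\g})]\geq-\omega(\Cc)$ is malformed: $\g$ is the comparison vector, independent of $\Gb$, and cannot appear in a standalone statement about $\Gb$; the centering there should likewise be the deterministic $\gamma\tn{\x}$. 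Finally, you never assemble the two one-sided expectation bounds into a bound on $\E[\sup_{\x}|\tn{\Gb\x}-\sqrt{\ml}\tn{\x}|]$ before invoking Borell--TIS (which you apply to exactly that two-sided supremum); either bound the expectation of the maximum of the two positive parts, or run Borell--TIS on each one-sided supremum and union bound, keeping track of how that affects the probability exponent.

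The second genuine gap is the claimed absorption of the $+1$. Even after the repairs above, your accounting gives $\sup_{\x}|\tn{\F\x}-\tn{\x}|\leq\ml^{-1/2}(\omega(\Cc)+\eta+1)$, and the hypothesis $\sqrt{\ml}\geq\omega(\Cc)+\eta+1$ only says this quantity is at most $1$; it does not convert the bound into the stated $\ml^{-1/2}(\omega(\Cc)+\eta)$, which is strictly smaller. The extra unit comes from $\E|\tn{\g}-\gamma|\leq 1$ together with the discrepancy $\sqrt{\ml}-\gamma\leq\ml^{-1/2}$, and eliminating it requires a sharper argument (this is presumably where the hypothesis and the slack between $\exp(-\eta^2/2)$ and the stated $\exp(-\eta^2/8)$ are actually spent in the cited proof); simply declaring it absorbed is a non sequitur. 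The Lipschitz/Borell--TIS step itself is fine: $\Gb\mapsto\sup_{\x\in\Cc}|\tn{\Gb\x}-\sqrt{\ml}\tn{\x}|$ is indeed $1$-Lipschitz in Frobenius norm since $\tn{\x}\leq 1$ on $\Cc$.
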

To achieve a $\delta$-linear embedding we can apply this to the sets $K-K$ and $K$ by setting $\ml=\order{\delta^{-2}(\omega(\Cc)+\eta)^2}$. 

Similar results can be obtained for other matrix ensembles including Fast Johnson-Lindenstrauss Transform \cite{oymak2015isometric}. A Fast JL transform is a random matrix $\F=\Sb\Db\Rb$ where $\Sb\in\R^{\ml\times n}$ randomly subsamples $m$ rows of a matrix, $\Db\in\R^{n\times n}$ is the normalized Hadamard transform, and $\Rb$ is a diagonal matrix with independent Rademacher entries. The recent results of \cite{oymak2015isometric} shows that linear embedding of arbitrary sets via FJLT is possible. The following corollary follows from their result by using the fact that $|\tn{\A\x}^2-\tn{\x}^2|\leq \delta^2\implies |\tn{\A\x}-\tn{\x}|\leq\delta$.
\begin{corollary} \label{cor FJLT}Suppose $K\subset\Sc^{n-1}$. Suppose $\ml\geq c(1+\eta)^2\delta^{-4}(\log n)^4\omega^2(K)$ and $\F\in\R^{\ml\times n}$ is an FJLT. Then, with probability $1-\exp(-\eta)$, $\F$ is a $\delta$-linear embedding for $K$.
\end{corollary}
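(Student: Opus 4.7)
The plan is to reduce the corollary to a direct invocation of the isometric FJLT embedding result of \cite{oymak2015isometric}, which is stated in terms of \emph{squared} norms, and then upgrade from squared to unsquared deviations via an elementary algebraic manipulation. The squared version says that under the same sample complexity bound, with probability $1-\exp(-\eta)$ one has $\sup_{\x\in\Cc}|\tn{\F\x}^2-\tn{\x}^2|\leq \delta^2$ for an appropriate set $\Cc$. So the task splits into two pieces: (i) showing the squared bound implies the linear-embedding condition of Definition \ref{lin embed}, and (ii) choosing the right set $\Cc$ so that length preservation on $K$ and distance preservation on $K-K$ both follow from a single application.

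For the algebraic step, I would record the following elementary observation: if $a,b\geq 0$ satisfy $|a^2-b^2|\leq \delta^2$, then $|a-b|\leq \delta$. Indeed, $|a-b|(a+b)=|a^2-b^2|\leq \delta^2$, so if $a+b\geq \delta$ the bound follows, while if $a+b<\delta$ then $|a-b|\leq a+b<\delta$ trivially. Applied with $a=\tn{\F\x}$, $b=\tn{\x}$ this turns $|\tn{\F\x}^2-\tn{\x}^2|\leq \delta^2$ into $|\tn{\F\x}-\tn{\x}|\leq \delta$, which is the length-preservation half of Definition \ref{lin embed}.

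For the distance-preservation half, I would apply the squared FJLT result to the (symmetric) difference set $\Cc = K\cup(K-K)$, or equivalently run the squared estimate twice, once on $K$ and once on $K-K$. Since $K\subset\Sc^{n-1}$, a standard bound gives $\omega(K-K)\leq 2\omega(K)$, so the Gaussian-width parameter only changes by an absolute constant, which is absorbed into the constant $c$ in the sample complexity $\ml\geq c(1+\eta)^2\delta^{-4}(\log n)^4\omega^2(K)$. Then for any $\x,\y\in K$ the vector $\x-\y$ lies in $K-K$, and the squared-to-unsquared lemma applied with $a=\tn{\F(\x-\y)}$, $b=\tn{\x-\y}$ yields $|\tn{\F\x-\F\y}-\tn{\x-\y}|\leq \delta$.

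The only real subtlety is making sure the result of \cite{oymak2015isometric} is applicable to the (non-spherical) set $K-K$, since their statement is phrased for subsets of the sphere; but this is standard — one either rescales to the sphere and absorbs the diameter (bounded by $2$ since $K\subset\Sc^{n-1}$) into constants, or one invokes a version of the squared estimate stated directly for bounded sets (which is how it is used in \cite{oymak2015isometric} themselves). Once that is in hand, combining the two applications and taking a union bound over the two failure events (still of probability $\exp(-\eta)$ after adjusting constants) completes the argument. The main obstacle, such as it is, is simply bookkeeping the constants and verifying that the $\omega(K-K)\leq 2\omega(K)$ reduction leaves the stated sample complexity unchanged.
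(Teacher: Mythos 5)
Your proposal is correct and follows essentially the same route as the paper, which simply cites the squared-norm FJLT result of \cite{oymak2015isometric} together with the implication $|\tn{\F\x}^2-\tn{\x}^2|\leq \delta^2\implies |\tn{\F\x}-\tn{\x}|\leq\delta$. The only difference is that you spell out the details the paper leaves implicit (the elementary square-to-unsquared argument, the application to $K-K$ with $\omega(K-K)\leq 2\omega(K)$, and the union bound), all of which are sound.
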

Unlike Theorem \ref{gauss embed}, this corollary yields $d\sim\delta^{-4}$ instead of $\delta^{-2}$. It would be of interest to improve such distortion bounds. 
\subsection{Results on sketched binary embedding}
With these technical tools, we are in a position to state our results on binary embedding.

\begin{theorem} \label{propo bin lin}Suppose $\F\in\R^{\ml\times m}$ is a $\delta$-linear embedding for the set $K\bigcup -K$. Then
\begin{itemize}
\item $\B\F$ is a $c\delta$-binary embedding of $K$ if $\B$ is a $\delta$-binary embedding for $\Sc^{\ml-1}$.
\item Suppose $\Kh$ is union of $L$ $d$-dimensional subspaces, then $\B\F$ is a $c\delta$-binary embedding with probability $\alpha$ if $\B$ is a $\delta$-binary embedding for union of $L$ $d$-dimensional subspaces with the same probability.
\end{itemize}
\end{theorem}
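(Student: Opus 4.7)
The plan is to reduce $\B\F$ acting on $K$ to $\B$ acting on normalized images in $\Sc^{\ml-1}$, paying only for the distortion introduced by $\F$. The first observation is that since $\sgn(\cdot)$ is invariant under multiplication by positive scalars and the linear embedding forces $\tn{\F\x}\in[1-\delta,1+\delta]$, one may replace $\F\x$ by $\x':=\F\x/\tn{\F\x}\in\Sc^{\ml-1}$ without altering the sign pattern, so that $\sgn{\B\F\x}=\sgn{\B\x'}$ and similarly for $\y$. Applying the $\delta$-binary embedding hypothesis on $\B$ to the unit vectors $\x',\y'$ then gives
\[
\left|\frac{1}{m}\|\sgn{\B\F\x},\sgn{\B\F\y}\|_H-\ang(\x',\y')\right|\leq \delta.
\]

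The remaining task is to show $|\ang(\x,\y)-\ang(\x',\y')|\leq O(\delta)$. I would proceed in two sub-steps. First, apply the $\delta$-linear embedding of $\F$ on $K\cup -K$ to the pairs $(\x,\y)$ and $(\x,-\y)$, combined with $\tn{\x'-\F\x}=|1-\tn{\F\x}|\leq\delta$ (and the analogous bound for $\y$) and the triangle inequality, to obtain the two distance bounds $|\tn{\x'-\y'}-\tn{\x-\y}|\leq 3\delta$ and $|\tn{\x'+\y'}-\tn{\x+\y}|\leq 3\delta$. Second, translate these into an angle bound using the two parametrizations $\ang(\ub,\vb)=\tfrac{2}{\pi}\arcsin(\tn{\ub-\vb}/2)$ and $\ang(\ub,\vb)=1-\tfrac{2}{\pi}\arcsin(\tn{\ub+\vb}/2)$, both valid on the unit sphere. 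Since $\tn{\x-\y}^2+\tn{\x+\y}^2=4$, at least one of $\tn{\x-\y}, \tn{\x+\y}$ is at most $\sqrt{2}$, and by the distance bounds the corresponding quantity for $(\x',\y')$ is at most $\sqrt{2}+3\delta$; on $[0,\sqrt{2}+3\delta]$ the derivative $(2\sqrt{1-t^2/4})^{-1}$ of $\arcsin(t/2)$ is bounded by an absolute constant once $\delta$ is small, so selecting the appropriate parametrization transfers the $O(\delta)$ distance bound into an $O(\delta)$ angle bound. The triangle inequality then yields the first statement with $c=O(1)$.

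For the second, structured statement the only change is in the first step. If $\Kh$ is a union of $L$ $d$-dimensional subspaces $V_1,\ldots,V_L$, then $\F(\Kh)=\bigcup_i\F(V_i)$ lies in a union of $L$ subspaces each of dimension at most $d$, so every normalized image $\x'$ belongs to the intersection of a union of $L$ $d$-dimensional subspaces with $\Sc^{\ml-1}$. It therefore suffices to invoke the $\delta$-binary embedding property of $\B$ only on this smaller set in Step~1, and since no additional randomness enters the argument, the success probability carries over verbatim.

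The main technical obstacle is the angle-transfer step. The natural parametrization $\ang(\ub,\vb)=\tfrac{1}{\pi}\arccos(\langle\ub,\vb\rangle)$ is only H\"older-$\tfrac12$ continuous in the inner product near $\pm 1$, so a direct inner-product argument would lose a square root and yield only an $O(\sqrt{\delta})$ distortion whenever $\x$ is nearly parallel or nearly antipodal to $\y$. The resolution is precisely the requirement that $\F$ linearly embed $K\cup -K$ rather than $K$ alone: this provides control of $\tn{\x+\y}$ in addition to $\tn{\x-\y}$, and switching between the two $\arcsin$ parametrizations keeps us away from the singular endpoint $t=2$ in every regime.
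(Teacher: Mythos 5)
Your proposal is correct and follows essentially the same route as the paper: normalize the images $\F\x/\tn{\F\x}$, use the $\delta$-linear embedding of $K\cup -K$ to bound the distance (and hence angle) distortion between the normalized images and the original points, then invoke the $\delta$-binary embedding of $\B$ on $\Sc^{\ml-1}$ (or on the union of subspaces containing $\operatorname{cone}(\F K)$) and conclude by the triangle inequality. Your switch between the two $\arcsin$ parametrizations via $\tn{\x+\y}$ is just a more explicit rendering of the paper's symmetry argument (replacing $\y$ by $-\y$ when $\ang(\x,\y)>\pi/2$), both of which exploit the $K\cup -K$ hypothesis in the same way.
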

\begin{proof} Given $\x,\y\in K$ define $\x_F=\frac{\F\x}{\|\F\x\|}$, $\y_F=\frac{\F\y}{\|\F\y\|}$ where $\A\x=\B\F\x$. Since $\F$ is a $\delta$-linear embedding for $K$, for all $\x,\y$  the following statements hold.
\begin{itemize}
\item $\max\{|\|\F\x\|-\|\x\||,|\|\F\y\|-\|\y\||,|\|\F\x-\F\y\|-\|\x-\y\||\}\leq \delta$.
\item $|\|\y_F-\x_F\|-\|\F\x-\F\y\||\leq \|\F\x-\x_F\|+ \|\F\y-\y_F\|\leq 2\delta$.
\end{itemize}
These imply $|\|\y_F-\x_F\|-\|\x-\y\||\leq 3\delta$ which in turn implies $|\ang(\x_F,\y_F)-\ang(\x,\y)|\leq c\delta$ using standard arguments in particular Lipschitzness of the geodesic distance between $0$ and $\frac{\pi}{2}$. We may assume the angle to be between $0$ and $\pi/2$ as the set $K\bigcup-K$ is symmetric and if $\ang(\x,\y)>\pi/2$ we may consider $\ang(\x,-\y)<\pi/2$.

If $\B$ is a $\delta$ binary embedding it implies that $|\ang(\x_F,\y_F)-m^{-1}\|\sgn{\A\x},\sgn{\A\y}\|_H|\leq \delta$. In turn, this yields $|\ang(\x,\y)-m^{-1}\|\sgn{\A\x},\sgn{\A\y}\|_H|\leq (c+1)\delta$ and overall map is $(c+1)\delta$-binary embedding.

For the second statement observe that if $\Kh$ is a union of subspaces so is $\text{cone}(\F K)$ and we are given that $\B$ is a $\delta$ binary embedding for union of subspaces with $\alpha$ probability. This implies that $|\ang(\x_F,\y_F)-\|\sgn{\B\x_F},\sgn{\B\y_F}\|_H|\leq \delta$ with probability $\alpha$ and we again conclude with $|\ang(\x,\y)-m^{-1}\|\sgn{\A\x},$ $\sgn{\A\y}\|_H|\leq (c+1)\delta$.
\end{proof}

Our next result obtains a sample complexity bound for sketched binary embedding.
\begin{theorem} There exists constants $c,C>0$ such that if $\B\sim \R^{m\times \ml}$ is a standard Gaussian matrix satisfying $m\geq c\delta^{-2}\ml$ and
\begin{itemize}
\item if $\F$ is a standard Gaussian matrix normalized by $\sqrt{\ml}$ where $\ml>c\delta^{-2}\omega^2(K)$, with probability $1-\exp(-C\delta^2 m)-\exp(-C\delta^2 \ml)$ $\x\rightarrow\sgn{\A\x}$ for $\A=\B\F$ is a $\delta$-binary embedding of $K$.
\item if $\F$ is a Fast Johnson-Lindenstrauss Transform where $d>c(1+\eta)^2\delta^{-4}(\log n)^4\omega^2(K)$, with probability $1-\exp(-\eta)-\exp(-C\delta^2m)$ $\x\rightarrow\sgn{\A\x}$ for $\A=\B\F$ is a $\delta$-binary embedding of $K$.
\end{itemize}
\end{theorem}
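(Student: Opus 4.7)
The plan is to obtain this as a direct composition of a linear embedding (for $\F$) with a Gaussian binary embedding (for $\B$), and then invoke Theorem \ref{propo bin lin} to glue the two stages together. I first rescale the target distortion by an absolute constant so that the final $c\delta$ coming out of Theorem \ref{propo bin lin} lands back on $\delta$; this only affects the constants $c,C$ in the hypothesis.

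For the outer matrix $\B$, I want to apply the subspace case of Theorem \ref{thm main 1} to the set $\Sc^{\ml-1}\subset\R^{\ml}$. Its conic hull is all of $\R^{\ml}$, which is a subspace, and $\omega^2(\Sc^{\ml-1})\asymp \ml$, so the sample complexity $m\geq c\delta^{-2}\ml$ in the hypothesis is exactly what Theorem \ref{thm main 1} requires, giving a $\delta$-binary embedding of $\Sc^{\ml-1}$ with failure probability $\exp(-c_2\delta^2 m)$. In particular $\B$ is then a $\delta$-binary embedding on any subset of $\Sc^{\ml-1}$, including $\F(K\cup -K)$ after normalization.

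For the inner matrix $\F$, I split by case. In the Gaussian case, Theorem \ref{gauss embed} applied to $K\cup -K$ (and also to $(K-K)$ for the pairwise bound in Definition \ref{lin embed}, using $\omega(K\cup -K)\lesssim\omega(K)$ and $\omega(K-K)\lesssim\omega(K)$) shows that for $\ml\gtrsim \delta^{-2}\omega^2(K)$, $\F$ is a $\delta$-linear embedding of $K\cup -K$ with failure probability $\exp(-C\delta^2\ml)$ after selecting $\eta\asymp\delta\sqrt{\ml}$. In the FJLT case, Corollary \ref{cor FJLT} provides the same $\delta$-linear embedding conclusion with failure probability $\exp(-\eta)$ under the stated lower bound on $\ml$. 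Either way, the linear-embedding hypothesis of Theorem \ref{propo bin lin} is met.

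Combining: with the two events above holding simultaneously (probabilities combine by a union bound, matching the two failure terms advertised in the theorem), Theorem \ref{propo bin lin} immediately yields that $\A=\B\F$ is a $c\delta$-binary embedding of $K$, which we rescale to a $\delta$-binary embedding by adjusting the constant in the hypotheses. I do not expect any serious obstacle here: the work is entirely bookkeeping, i.e.~(i) verifying that $\ml$ is large enough for Theorem \ref{gauss embed} / Corollary \ref{cor FJLT}, (ii) verifying that $m$ is large enough for the subspace case of Theorem \ref{thm main 1} applied to $\Sc^{\ml-1}$, and (iii) tracking the absolute constants through the rescaling of $\delta$. The only mildly delicate point is ensuring Theorem \ref{propo bin lin} is applied to a set on which $\B$ has been guaranteed to act as a binary embedding; using $\Sc^{\ml-1}$ as the ambient set bypasses any need to know $\F(K)$ more precisely and keeps the two stages decoupled.
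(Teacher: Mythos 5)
Your proposal is correct and follows essentially the same route as the paper: the paper also treats $\B$ as a $\delta$-binary embedding of the full sphere $\Sc^{\ml-1}$ (the subspace case, with $\omega^2(\Sc^{\ml-1})\asymp\ml$), verifies that $\F$ is a $\delta$-linear embedding via Theorem \ref{gauss embed} (Gaussian) or Corollary \ref{cor FJLT} (FJLT), and composes the two via Theorem \ref{propo bin lin} with a constant rescaling of $\delta$. Your additional bookkeeping (choosing $\eta\asymp\delta\sqrt{\ml}$ to get the $\exp(-C\delta^2\ml)$ failure probability, and applying the linear embedding to $K\cup -K$ and differences) just makes explicit what the paper leaves implicit.
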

Observe that for Gaussian embedding distortion dependence is $\delta^{-4}\omega^2(K)$ which is better than what can be obtained via Theorem \ref{thm main 1}. This is due to the fact that here we made use of the improved embedding result for subspaces. On the other hand distortion dependence for FJLT is $\delta^{-6}$ which we believe to be an artifact of Corollary \ref{cor FJLT}. We remark that better dependencies can be obtained for subspaces.

\begin{proof} The proof makes use of the fact that $\B$ is a $\delta$ binary embedding for $\R^{\ml}$ with the desired probability. Consequently, following Theorem \ref{propo bin lin}, we simply need to ensure $\F$ is a $\delta$ linear embedding. When $\F$ is Gaussian this follows from Theorem \ref{gauss embed}. When $\F$ is FJLT, it follows from the bound Corollary \ref{cor FJLT} namely $\ml\geq c(1+\eta)^2\delta^{-4}(\log n)^4\omega^2(K)$.
\end{proof}

\subsection{Computational aspects}\label{sec:compute}
Consider the problem of binary embedding a $d$-dimensional subspace $\Kh$ with $\delta$ distortion. Using a standard Gaussian matrix as our embedding strategy requires $\order{\delta^{-2}nd}$ operation per embedding. This follows from the $\order{mn}$ operation complexity of matrix multiplication where $m=\order{\delta^{-2}d}$.

For FJLT sketched binary embedding, setting $\ml=\order{\delta^{-4}(\log n)^4d}$ and $m=\order{\delta^{-2}d}$ we find that each vector can be embedded in $\order{n\log n+\ml m}\approx\order{n+\delta^{-6}d^2}$ up to logarithmic factors. Consequently, in the regime $d=\order{\sqrt{n}}$, sketched embedding strategy is significantly more efficient and embedding can be done in near linear time. The similar pattern arises when we wish to embed an arbitrary set $K$. The main difference will be the distortion dependence in the computation bounds. We omit this discussion to prevent repetition.

These tradeoffs are in similar flavor to the recent work by Yi et al. \cite{yi2015binary} that apply to the embedding of finite sets. Here we show that similar tradeoffs can be extended to the case where $K$ is arbitrary.

We shall remark that a faster binary embedding procedure is possible in practice via FJLT. In particular pick $\A=\Sb\Db\Gb\Db^*\Rb$ where $\Sb$ is the subsampling operator, $\Db$ is the Hadamard matrix and $\Gb$ and $\Rb$ are diagonal matrices with independent standard normal and Rademacher nonzero entries respectively \cite{yu2014circulant,le2013fastfood}. In this case, it is trivial to verify that for a given pair of $\x,\y$ we have the identity $\ang(\x,\y)=m^{-1}\E\|\A\x,\A\y\|_H$. While expectation trivially holds analysis of this map is more challenging and there is no significant theoretical result to the best of our knowledge. This map is beneficial since using FJLT followed by dense Gaussian embedding results in superlinear computation in $n$ as soon as $d\geq \order{\sqrt{n}}$. On the other hand, this version of Fast Binary Embedding has near-linear embedding time due to diagonal multiplications. Consequently, it would be very interesting to have a guarantee for this procedure when subspace $\Kh$ is in the nontrivial regime $d= \Omega({\sqrt{n}})$. Investigation of this map for both discrete and continuous embedding remains as an open future direction.
\section{Numerical experiments}\label{sec:numerics}
\begin{figure}
\hspace{-15pt}\centering
\begin{subfigure}[b]{0.475\linewidth} \centering\includegraphics[width=1.15\linewidth]{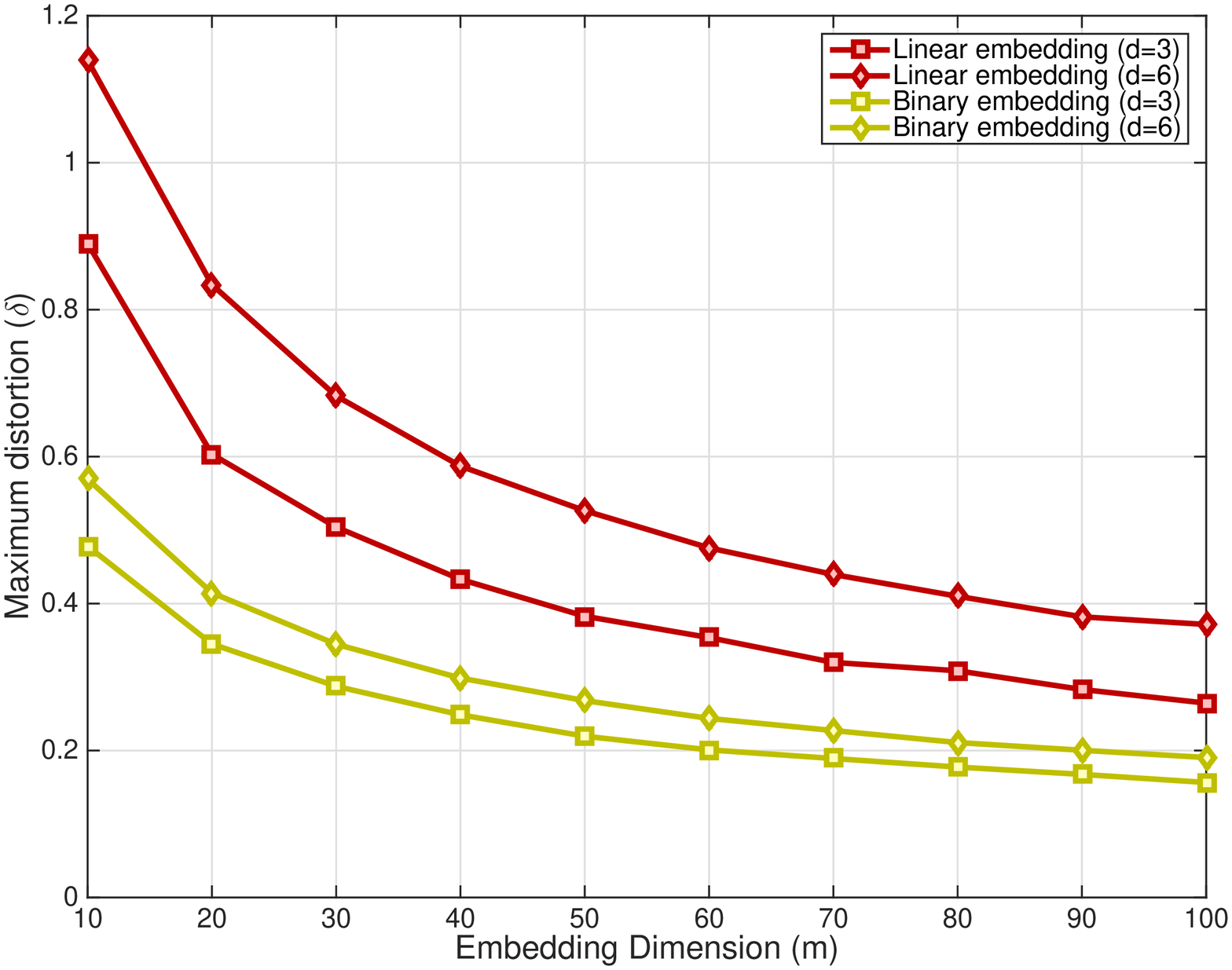} \caption{Binary vs linear embedding} \label{fig:unnnormed} 
\end{subfigure} ~~
\begin{subfigure}[b]{0.475\linewidth} \centering{\includegraphics[width=1.15\linewidth]{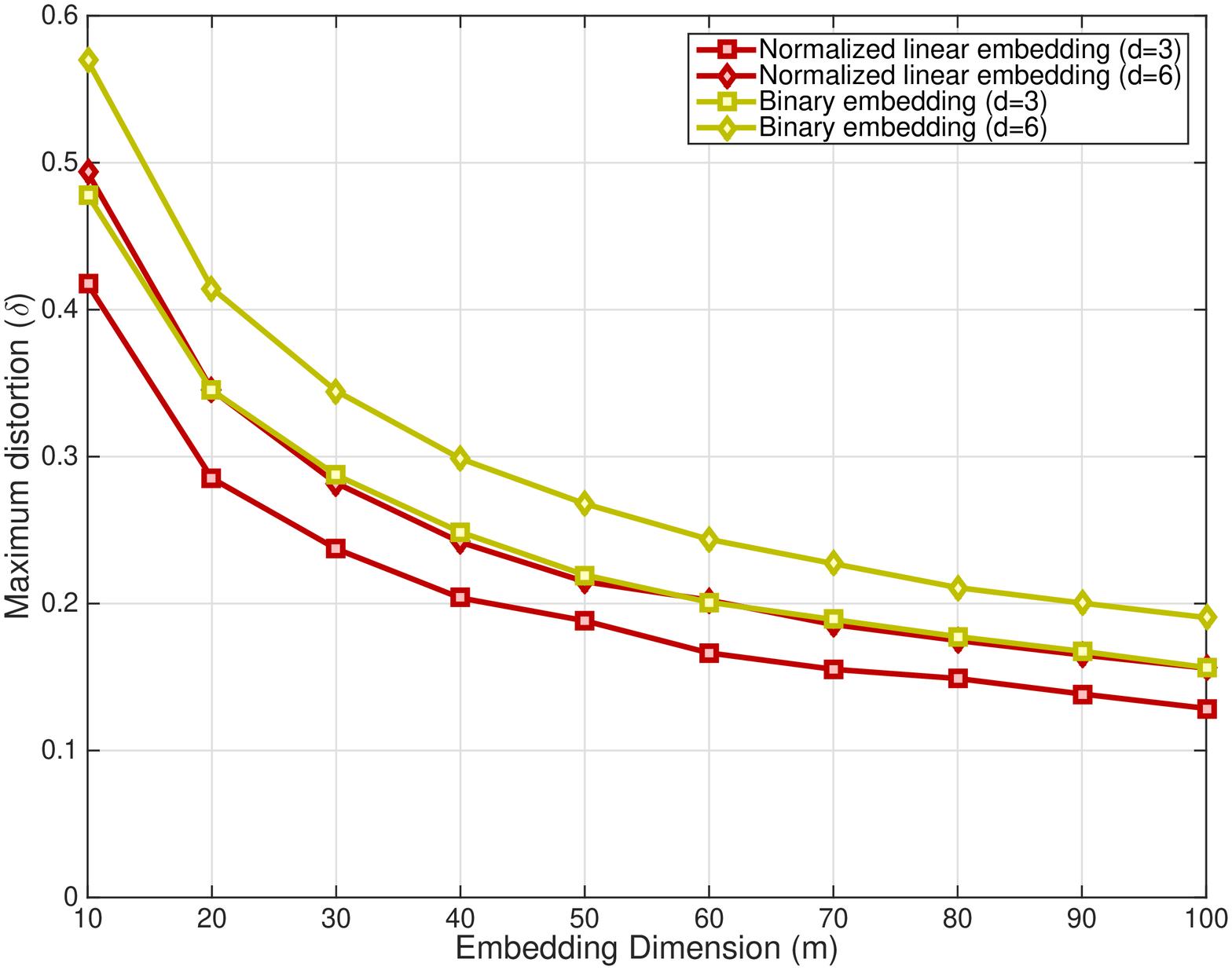}} \caption{Binary vs normalized linear embedding.} \label{fig:norm} \end{subfigure} 
\caption{(a) Comparison of binary and linear embedding for a Gaussian map as a function of subspace dimension and sample complexity. (b) We keep the same setup however use a normalized distortion function. With this change, linear and binary embedding becomes more comparable.}\label{fig 1}
\end{figure}
Next, we shall list our numerical observations on binary embedding. A computational difficulty of binary embedding for continuous sets is the fact that it is nontrivial to obtain distortion bounds. In particular given set $K$ and map $\A$ we would like to quantify the maximum distortion given by
\beq
\delta_{bin}=\sup_{\x,\y\in K} |m^{-1}\|\A\x,\A\y\|_H-\ang(\x,\y)|.\nn
\eeq 
To the best of our knowledge there is no guaranteed way of finding this supremum. For instance, for linear embedding we are interested in the bound
\beq
\delta_{lin}=\sup_{\x,\y\in K} |m^{-1}\tn{\A\x,\A\y}-\tn{\x,\y}|\nn
\eeq
which can be obtained by calculating minimum and maximum singular values of $\A$ restricted to $K$. When $K$ is a subspace, this can be done efficiently by studying the matrix obtained by projecting rows of $\A$ onto $K$.

To characterize the impact of set size on distortion, we sample $200$ points from a $d$-dimensional subspace for $d=3$ and $d=6$ where $n=128$. Clearly sampling finite number of points is not sufficient to capture the whole space however it is a good enough proxy for illustrating the impact of subspace dimension. We additionally vary the number of samples $m$ between $0$ and $100$. Figure \ref{fig:unnnormed} contrasts linear and binary embedding schemes where $\sqrt{m}\A$ is a standard Gaussian matrix. We confirm that sampling the points from larger subspace indeed results in a larger distortion for both cases. Interestingly we observe that distortion for binary embedding is smaller than linear. This is essentially due to the fact that cost functions are not comparable rather than their actual performance. Clearly linear embedding stores more information about the signal so we expect it to be more beneficial.

For a better comparison, we normalize the linear distortion function with respect to binary distortion so that if $\tn{\A\x,\A\y}=\|\A\x,\A\y\|_H=0$ we have $\delta_{bin}$ is same as normalized distortion $\delta_{n-lin}$. This corresponds to the function
\beq
\delta_{n-lin}=\frac{\ang(\x,\y)}{\tn{\x,\y}}\sup_{\x,\y\in K} |m^{-1}\tn{\A\x,\A\y}-\tn{\x,\y}|.\nn
\eeq
Figure \ref{fig:norm} shows the comparison of $\delta_{bin}$ and $\delta_{n-lin}$. We observe that linear embedding results in lower distortion but their behavior is highly similar. Figure \ref{fig 1} shows that linear and binary embedding performs on par and linear embedding does not provide a significant advantage. This is consistent with the main message of this work.


In Figure \ref{fig:fjlt} we compare Gaussian embedding with fast binary embedding given by $\A=\Sb\Db\Gb\Db^*\Rb$ as described in Section \ref{sec:compute}. We observe that Gaussian yields slightly better bounds however both techniques perform on par in all regimes. This further motivates theoretical understanding of fast binary embedding which significantly lags behind linear embedding. Sparse matrices are another strong alternative for fast multiplication and efficient embedding \cite{dasgupta2010sparse}. Figure \ref{fig:sparse} contrasts Gaussian embedding with sparse Gaussian where the entries are $0$ with probability $2/3$. Remarkably, the distortion dependence perfectly matches. Following from this example, it would be interesting to study the class of matrices that has the same empirical behavior as a Gaussian. For linear embedding this question has been studied extensively \cite{Universal,oymak2015universality} and it remains as an open problem whether results of similar flavor would hold for binary embedding.

\begin{figure}
\hspace{-15pt}\centering
\begin{subfigure}[b]{0.475\linewidth} \centering\includegraphics[width=1.15\linewidth]{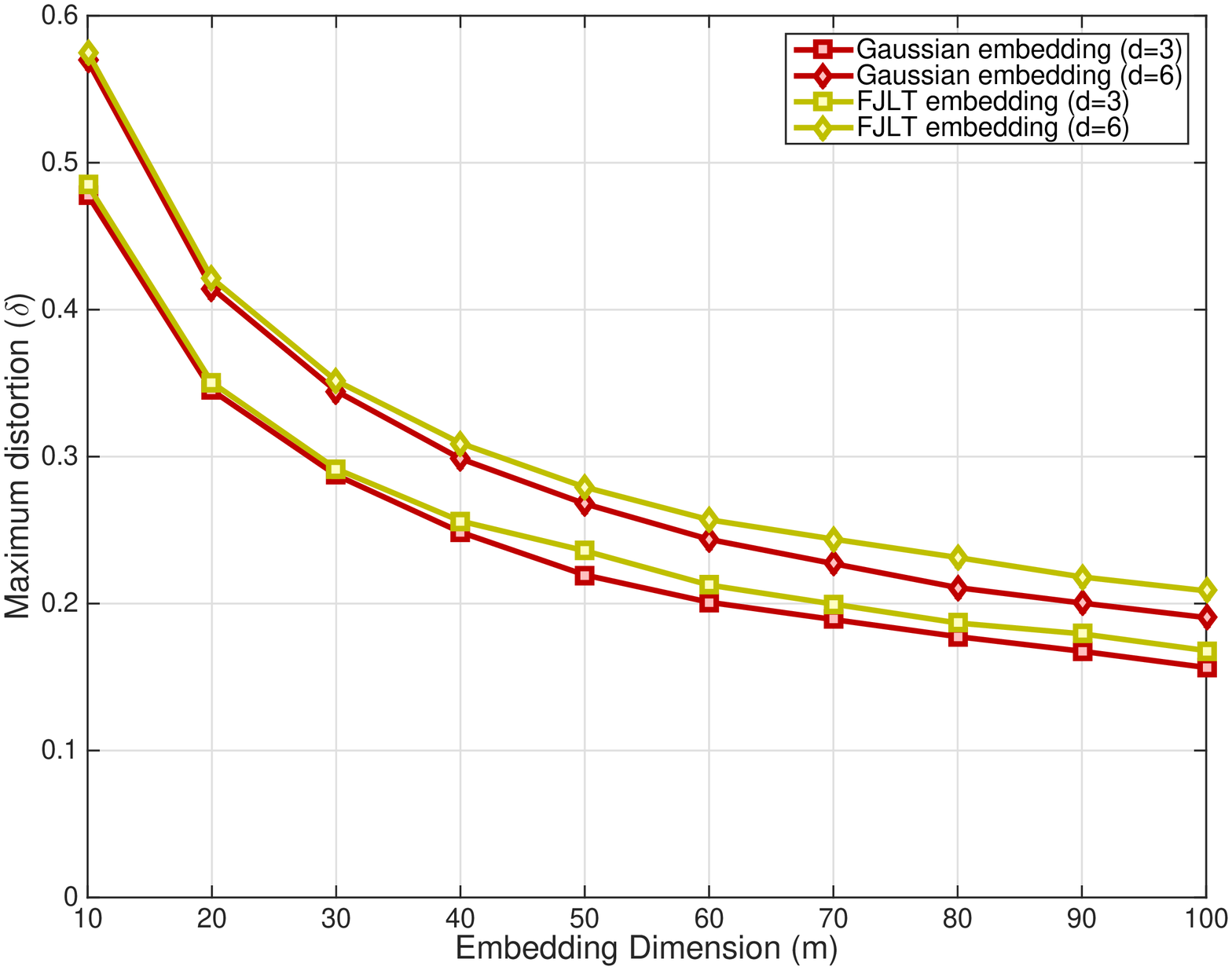} \caption{Gaussian vs FJLT for binary embedding.} \label{fig:fjlt} 
\end{subfigure} ~~
\begin{subfigure}[b]{0.475\linewidth} \centering{\includegraphics[width=1.15\linewidth]{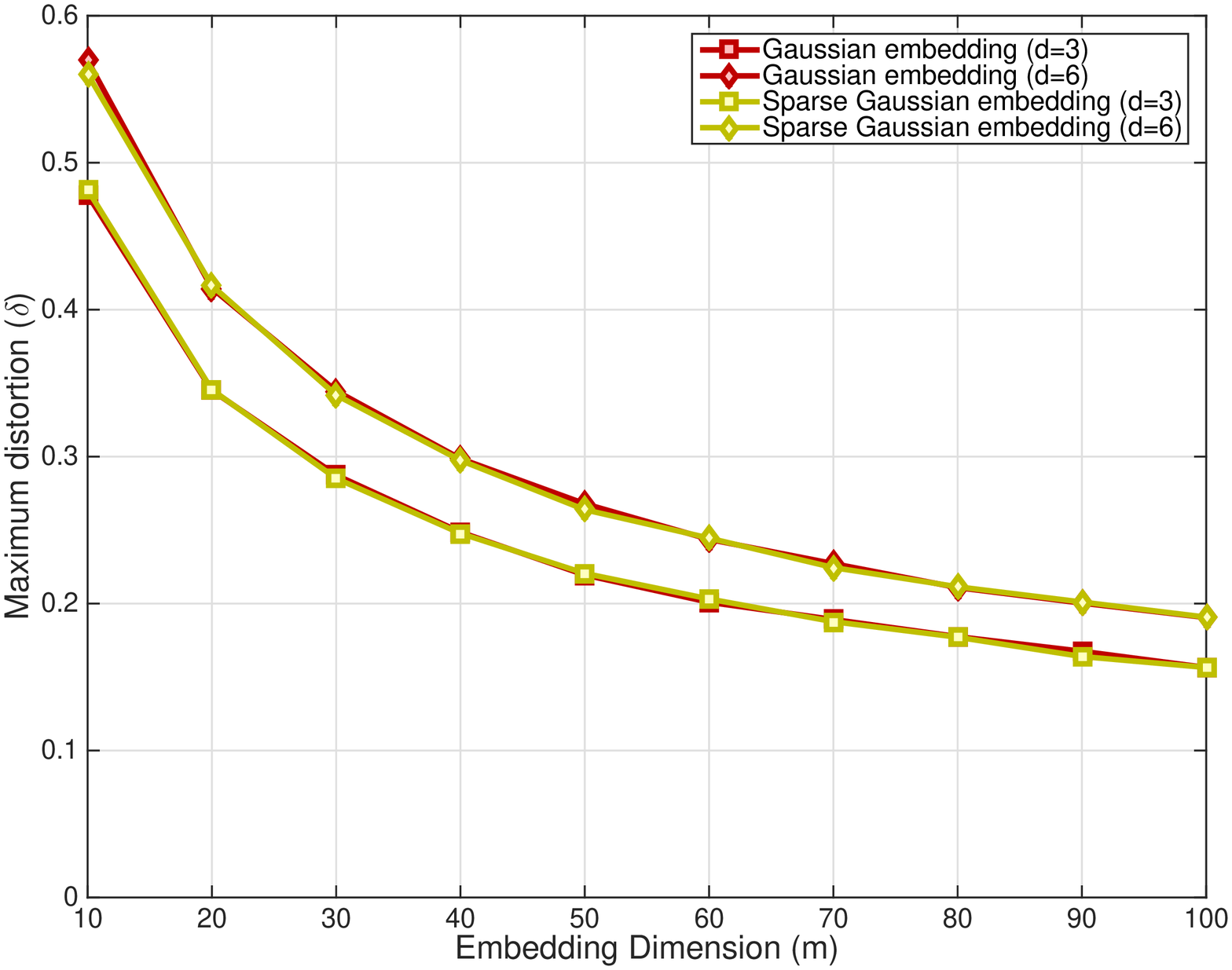}} \caption{Gaussian vs sparse Gaussian for binary embedding.} \label{fig:sparse} \end{subfigure} 
\end{figure}

\section*{Acknowledgments}
The authors would like to thank Mahdi Soltanolkotabi for helpful conversations. BR is generously supported by ONR awards N00014-11-1-0723 and N00014-13-1-0129, NSF awards CCF-1148243 and CCF-1217058, AFOSR award FA9550-13-1-0138, and a Sloan Research Fellowship. SO was generously supported by the Simons Institute for the Theory of Computing and NSF award CCF-1217058.

\bibliography{Bibfiles}
\bibliographystyle{plain}

\appendix
\section{Supplementary results}
\begin{lemma} \label{modified sheriff}Suppose $\{\ab_i\}_{i=1}^n$ are Bernoulli-$q$ random variables where $0\leq q\leq p$. Suppose $p<1/3$. For any $\eps\leq \frac{p}{2}$, we have that
\beq
\Pro(|\frac{\sum_i \ab_i}{n}-q|>\eps)\leq \exp(-\frac{\eps^2n}{4p}).\nn
\eeq
\end{lemma}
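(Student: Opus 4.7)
The estimate is a standard Bernstein/Chernoff-type tail bound for a sum of i.i.d.\ bounded random variables; the constraints $p<1/3$ and $\eps\le p/2$ serve only to clean up the exponent. My plan is to apply the exponential moment method separately to the upper and lower tails and choose the tilt parameter so that the bound collapses to $\exp(-\eps^2 n/(4p))$ directly.

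The first step is to bound the centered moment generating function. Writing $Y_i = \ab_i - q$, the elementary inequality $1+x\le e^x$ gives
\[ \E[e^{\lambda Y_i}] \;=\; e^{-\lambda q}\bigl(1-q+q e^{\lambda}\bigr) \;\le\; \exp\bigl(q(e^{\lambda}-1-\lambda)\bigr), \]
so by independence $\E[e^{\lambda\sum_i Y_i}]\le \exp\bigl(nq(e^\lambda-1-\lambda)\bigr)$. For the upper tail, Markov's inequality then yields
\[ \Pro\Bigl(\tfrac{1}{n}\textstyle\sum_i \ab_i - q > \eps\Bigr) \;\le\; \exp\bigl(-\lambda\eps n + nq(e^\lambda-1-\lambda)\bigr). \]
I would choose $\lambda := \eps/(2p)$; the assumption $\eps\le p/2$ forces $\lambda\in (0,1/4]$, a range on which one easily verifies $e^\lambda-1-\lambda \le \lambda^2$ by a one-line Taylor estimate. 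Combined with $q\le p$, this drives the exponent down to $-\eps^2 n/(2p) + np\cdot \eps^2/(4p^2) = -\eps^2 n/(4p)$ exactly.

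For the lower tail I would run the analogous calculation with $-\lambda$, using the uniform bound $e^{-\lambda}+\lambda-1 \le \lambda^2/2$ valid for all $\lambda\ge 0$. This gives an even stronger one-sided estimate, roughly $\exp(-\eps^2 n/(2p))$, so that after a union bound the two tails contribute at most $2\exp(-\eps^2 n/(4p))$. The main (and only) obstacle is the cosmetic factor $2$ coming from union-bounding the two tails against the clean exponent $-\eps^2 n/(4p)$ stated in the lemma. To absorb it, I would either sharpen the one-sided bounds by exploiting the slack in $e^\lambda-1-\lambda \le \lambda^2$ (the true constant is closer to $1/2$), using $q(1-q)\le q\le p$ to keep the variance proxy tight, so that the two one-sided bounds become $\exp(-c\eps^2 n/p)$ with $c > 1/4$, and then trade the excess decay for the $\log 2$ from the union bound; or else note that the cosmetic factor of $2$ never matters in any downstream application in the paper (where the lemma is invoked inside another exponential bound). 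No delicate concentration machinery is needed beyond the standard MGF method.
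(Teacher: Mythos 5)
Your proof is correct and takes a genuinely different route from the paper's. The paper runs the Chernoff bound in its information-theoretic form, bounding the two tails by $\exp(-nD(q\pm\eps\|q))$, and then needs a separate auxiliary lemma (Lemma \ref{lemma decrease}, proved via an integral representation of the KL divergence) to pass from the unknown parameter $q$ to the known upper bound $p$, before invoking a quadratic lower bound $D(p+\eps\|p)\geq \eps^2/(4p)$ valid for $\eps\leq p/2$. You instead work directly with the MGF estimate $\E[e^{\lambda(\ab_1-q)}]\leq \exp\bigl(q(e^{\lambda}-1-\lambda)\bigr)$, in which monotonicity in $q$ is immediate, so replacing $q$ by $p$ costs nothing and the KL-monotonicity lemma disappears; your computation with $\lambda=\eps/(2p)$ and $e^{\lambda}-1-\lambda\leq\lambda^2$ for $\lambda\leq 1/4$ is valid, and your argument does not even use the hypothesis $p<1/3$, which the paper needs for its KL comparison step. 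The one caveat is the factor of $2$ from adding the two tails: your first suggested fix (trading excess exponential decay against $\log 2$) does not work uniformly, since it requires $\eps^2 n/p$ to be bounded below by a constant; but the paper's own proof has exactly the same issue---it bounds each of the two tail terms by $\exp(-\eps^2 n/(4p))$ and silently drops the resulting factor of $2$---so your second observation, that the constant is immaterial in the lemma's downstream use inside Lemma \ref{lem triple}, is effectively the resolution the paper itself adopts, and your proposal matches the paper's level of precision while being more elementary and self-contained.
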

\begin{proof} For any $\eps>0$, standard Chenoff bound yields that
\beq
\Pro(|\frac{\sum_i \ab_i}{n}-q|>\eps)\leq \exp(-nD(q+\eps||q))+\exp(-nD(q-\eps||q))\nn
\eeq
where $D(\cdot)$ is the KL-divergence. Applying Lemma \ref{lemma decrease}, we have that
\beq
\exp(-D(q+\eps||q))\leq \exp(-D(p+\eps||p)),~\exp(-D(q-\eps||q))\leq \exp(-D(p-\eps||p)).\nn
\eeq
Finally, when $\frac{|\eps|}{p}\leq \frac{1}{2}$, we make use of the fact that $D(p+\eps||p)\geq \frac{\eps^2n}{4p}$.
\end{proof}
\begin{lemma} \label{lemma decrease}Given nonnegative numbers $p_1,p_2,q_1,q_2$, suppose $0.5>p_1>p_2$, $d=p_1-q_1>0$, and $p_1-p_2=q_1-q_2$. Then
\beq
D(q_1||q_2)\geq D(p_1||p_2),~D(q_2||q_1)\geq D(p_2||p_1).\nn
\eeq
\end{lemma}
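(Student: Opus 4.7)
The statement is a monotonicity property of the Bernoulli KL divergence: shifting both arguments leftward by the same amount $d$ (while staying below $1/2$) cannot decrease $D(\cdot\|\cdot)$. My plan is to parametrize the family, pass to an integral representation so that the derivative with respect to the shift parameter has a transparent sign, and then read off both inequalities at once.

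First, set $g = p_1 - p_2 = q_1 - q_2 > 0$ and define $h(b) = D(b+g\,\|\,b)$ for $b \in [0, 1-g]$. Then $D(p_1\|p_2) = h(p_2)$ and $D(q_1\|q_2) = h(p_2 - d)$, so the first inequality reduces to showing $h$ is non-increasing on the interval $[0, \tfrac{1}{2}-g]$. The starting point is the one-line calculation
\[
\frac{\partial}{\partial q} D(p\|q) \;=\; -\frac{p}{q} + \frac{1-p}{1-q} \;=\; \frac{q-p}{q(1-q)},
\]
which, since $D(p\|p) = 0$, yields $D(p\|q) = \int_q^p \frac{p-s}{s(1-s)}\,ds$ when $q < p$. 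Making the substitution $s = b + u$ gives the clean form
\[
h(b) \;=\; \int_0^g \frac{g-u}{(b+u)(1-b-u)}\,du.
\]

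Second, I would differentiate under the integral. Using $\frac{d}{db}\bigl[(b+u)(1-b-u)\bigr] = 1-2(b+u)$,
\[
h'(b) \;=\; -\int_0^g (g-u)\,\frac{1-2(b+u)}{\bigl[(b+u)(1-b-u)\bigr]^2}\,du.
\]
Under the hypothesis $p_1 < 1/2$ we have $b+u \le b+g = p_1 < 1/2$ throughout the integration, so $1-2(b+u) > 0$ and $g-u \ge 0$. Hence $h'(b) \le 0$, which gives $h(q_2) = h(p_2 - d) \ge h(p_2)$ and establishes the first inequality.

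For the second inequality I would repeat the same recipe with $\tilde h(b) := D(b\,\|\,b+g)$. The integral representation becomes
\[
\tilde h(b) \;=\; \int_0^g \frac{u}{(b+u)(1-b-u)}\,du,
\]
and differentiation under the integral produces exactly the same negative factor $1-2(b+u)$ in the numerator, with the nonnegative weight $u$ in place of $g-u$. Hence $\tilde h'(b) \le 0$ on $[0, \tfrac{1}{2}-g]$ and $D(q_2\|q_1) = \tilde h(q_2) \ge \tilde h(p_2) = D(p_2\|p_1)$. There is no serious obstacle here; the only thing to be careful about is that the hypothesis $p_1 < 1/2$ is used crucially to guarantee $1-2(b+u) > 0$ throughout the range of integration, which is what makes the sign of $h'$ and $\tilde h'$ unambiguous.
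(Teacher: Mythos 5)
Your argument is correct and is essentially the paper's: both rest on the integral representation $D(p\|q)=\int_q^p\frac{p-s}{s(1-s)}\,ds$ together with the monotonicity of $s(1-s)$ on $[0,1/2]$, the paper comparing the integrands pointwise after the change of variables $x\mapsto x+d$ while you differentiate in the shift parameter $b$, which is just the infinitesimal form of the same comparison. The only cosmetic difference is in the second inequality, where the paper passes to $D(p_2\|p_1)=D(1-p_2\|1-p_1)$ and works on the other side of $1/2$, whereas you integrate $D(b\|b+g)$ directly; both routes are fine.
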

\begin{proof} For $p_2<p_1\leq 1/2$, using the definition of KL divergence, we have that
\beq
D(p_1||p_2)=\int_{p_2}^{p_1}\frac{p_1-x}{x(1-x)}dx=\int_{q_2}^{q_1}\frac{p_1-(x+d)}{(x+d)(1-(x+d))}dx=\int_{q_2}^{q_1}\frac{q_1-x}{(x+d)(1-(x+d))}dx.\nn
\eeq
Now, observe that for $x+d<1/2$
\beq
(x+d)(1-(x+d))\geq x(1-x)\implies \frac{q_1-x}{(x+d)(1-(x+d))}\leq \frac{q_1-x}{x(1-x)},\nn
\eeq
which yields $D(p_1||p_2)\leq D(q_1||q_2)$. The same argument can be repeated for $D(p_2||p_1)$. We have that
\beq
D(p_2||p_1)=\int_{1-p_1}^{1-p_2}\frac{1-p_2-x}{x(1-x)}dx=\int_{1-q_1}^{1-q_2}\frac{1-p_2-(x-d)}{(x-d)(1-(x-d))}dx=\int_{1-q_1}^{1-q_2}\frac{1-q_2-x}{(x-d)(1-(x-d))}dx.\nn
\eeq
This time, we make use of the inequality $(x-d)(1-(x-d))\geq x(1-x)$ for $x-d\geq 1/2$ to conclude.
\end{proof}
\begin{lemma} \label{q tail}Let $Q(a)=\Pro(|g|\geq  a)$ for $g\sim\Nn(0,1)$. There exists constants $c_1,c_2>0$ such that for $\delta>c_1$ we have that
\beq
\int_{Q^{-1}(\delta)}^\infty t\sqrt{2/\pi}\exp(-t^2/2)dt\leq c_2\delta \sqrt{\log\delta^{-1}}.\nn
\eeq
\end{lemma}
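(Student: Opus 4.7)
The plan is to evaluate the integral exactly and then bound $a := Q^{-1}(\delta)$ in terms of $\delta$ using standard Gaussian tail estimates (Mills ratio). Interpreting the hypothesis as $\delta < c_1$ for a small absolute constant $c_1$ (so that $\log \delta^{-1} > 0$ and $a$ is at least, say, $1$).

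First, substitute $u = t^2/2$ to evaluate the left-hand side in closed form:
\beq
\int_{a}^\infty t\sqrt{2/\pi}\,\exp(-t^2/2)\,dt = \sqrt{2/\pi}\,\exp(-a^2/2).\nn
\eeq
So the whole task reduces to showing $\exp(-a^2/2) \leq c\,\delta\sqrt{\log\delta^{-1}}$ where $a = Q^{-1}(\delta)$.

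Next, I would use the two-sided Mills inequality: for $a\geq 1$,
\beq
\sqrt{2/\pi}\,\frac{a}{a^2+1}\,\exp(-a^2/2) \leq Q(a) \leq \sqrt{2/\pi}\,\frac{1}{a}\,\exp(-a^2/2).\nn
\eeq
Substituting $Q(a)=\delta$, the lower bound rearranges (using $a^2+1\leq 2a^2$ for $a\geq 1$) to
\beq
\exp(-a^2/2) \leq \sqrt{\pi/2}\,\frac{a^2+1}{a}\,\delta \leq \sqrt{2\pi}\,a\,\delta.\nn
\eeq
Thus it only remains to control the size of $a$ itself.

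For that, I would use the upper bound in Mills, which gives $\delta \leq \sqrt{2/\pi}\,\exp(-a^2/2)$ (since $a\geq 1$), so $a^2 \leq 2\log(1/\delta) + \log(2/\pi)$. For $\delta$ smaller than some absolute constant, this yields $a \leq c'\sqrt{\log\delta^{-1}}$. Combining with the previous display,
\beq
\exp(-a^2/2) \leq \sqrt{2\pi}\,a\,\delta \leq c''\,\delta\sqrt{\log\delta^{-1}},\nn
\eeq
and multiplying by $\sqrt{2/\pi}$ gives the advertised bound with some absolute constant $c_2>0$.

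No step looks delicate: the integral is elementary, and the two uses of Mills' inequality give matched constants. The only thing to be careful with is ensuring that the threshold $c_1$ is small enough that (i) $a\geq 1$ so the Mills bounds apply, and (ii) the $\log(2/\pi)$ error term is absorbed into a constant factor when bounding $a$ by $\sqrt{\log\delta^{-1}}$.
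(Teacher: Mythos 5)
Your proof is correct and takes essentially the same route as the paper's: evaluate the integral in closed form as $\sqrt{2/\pi}\,\exp(-a^2/2)$ with $a=Q^{-1}(\delta)$, then sandwich $\delta=Q(a)$ between standard Gaussian tail bounds, using one side to get $a\lesssim\sqrt{\log\delta^{-1}}$ and the other to get $\exp(-a^2/2)\lesssim\delta a$. Your reading of the hypothesis as $\delta$ below a small constant (so that $a\geq 1$) matches what the paper actually assumes in its proof.
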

\begin{proof} Set $\gamma=Q^{-1}(\delta)$. Observe that
\beq
\sqrt{2/\pi}\int_{\gamma}^\infty t\exp(-\frac{t^2}{2})dt=\sqrt{2/\pi}\exp(-\gamma^2/2).\label{q part 2}
\eeq 
We need to lower bound the number $\gamma>0$. Choose $\delta$ sufficiently small to ensure $\gamma>1$. Using standard lower and upper bounds on the $Q$ function,
\beq
 \frac{1}{\sqrt{2\pi}}\frac{1}{\gamma}\exp(-\frac{{\gamma}^2}{2})\geq Q(\gamma)=\delta\geq \frac{1}{2\sqrt{2\pi}}\frac{1}{\gamma}\exp(-\frac{{\gamma}^2}{2}).\nn
\eeq
The left hand side implies that $\gamma\leq \sqrt{2\log 1/\delta}$ as otherwise the left-hand side would be strictly less than $\delta$. Now, using the right hand side
\beq
\delta \sqrt{2\log 1/\delta}\geq \delta\gamma \geq \frac{1}{2\sqrt{2\pi}}\exp(-\frac{\gamma^2}{2}).\nn
\eeq
This provides the desired upper bound $\exp(-\frac{\gamma^2}{2})\leq C\delta\sqrt{\log\frac{1}{\delta}}$ for $C=4\sqrt{\pi}$. Combining this with \eqref{q part 2}, we can conclude.
\end{proof}
The following lemma is related to the order statistics of a standard Gaussian vector.
\begin{lemma}\label{append k+} Consider the setup of Lemma \ref{gauss k+}. There exists constants $c_1,c_2,c_3$ such that for any $\delta>c_1$, we have that
\beq
\E[\sum_{i=1}^{\delta n} \tilde{\g}_i]\leq c_2\delta n\sqrt{\log \delta^{-1}}\nn
\eeq
whenever $n>c_3\delta^{-1}$.
\end{lemma}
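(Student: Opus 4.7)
My plan is to prove the bound via a simple Lagrangian-style upper envelope for the top-$k$ order statistics, combined with the tail estimate already established in Lemma \ref{q tail}.

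The starting observation is the following deterministic inequality: for any threshold $\gamma \geq 0$ and any integer $k \geq 0$,
\beq
\sum_{i=1}^{k} \tilde{\g}_i \leq k\gamma + \sum_{i=1}^n (|g_i| - \gamma)_+. \nn
\eeq
Indeed, letting $i_1,\dots,i_k$ be the indices of the $k$ largest values of $|g_i|$, we may write $|g_{i_j}| = \gamma + (|g_{i_j}|-\gamma)$, use $|g_{i_j}|-\gamma \leq (|g_{i_j}|-\gamma)_+$ pointwise, and then extend the sum over all $n$ indices. Taking expectation with $k = \lfloor \delta n\rfloor \leq \delta n$ yields
\beq
\E[\sum_{i=1}^{\delta n}\tilde{\g}_i] \leq \delta n \gamma + n\,\E[(|g|-\gamma)_+], \quad g\sim\Nn(0,1). \nn
\eeq

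Now I choose the threshold $\gamma = Q^{-1}(\delta)$ so that $\Pro(|g|\geq \gamma)=\delta$. By direct computation,
\beq
\E[(|g|-\gamma)_+] = \int_{\gamma}^{\infty} t\sqrt{2/\pi}\,e^{-t^2/2}\,dt \;-\; \gamma\,\delta \;\leq\; \int_{\gamma}^{\infty} t\sqrt{2/\pi}\,e^{-t^2/2}\,dt. \nn
\eeq
Lemma \ref{q tail} bounds this integral by $c_2\,\delta\sqrt{\log \delta^{-1}}$ for $\delta$ sufficiently small. Moreover, the proof of Lemma \ref{q tail} already establishes the matching estimate $\gamma = Q^{-1}(\delta) \leq \sqrt{2\log\delta^{-1}}$ for small $\delta$. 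Plugging both bounds back in,
\beq
\E[\sum_{i=1}^{\delta n}\tilde{\g}_i] \;\leq\; \delta n \sqrt{2\log \delta^{-1}} \;+\; c_2\,n\,\delta\sqrt{\log \delta^{-1}} \;\leq\; c_2'\,\delta n\sqrt{\log \delta^{-1}}, \nn
\eeq
which is the claimed bound (absorb constants into $c_2$).

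There is no real obstacle here: the envelope inequality decouples ``paying $\gamma$ per selected entry'' from ``paying for the tail mass above $\gamma$,'' and both contributions happen to scale as $\delta n\sqrt{\log\delta^{-1}}$ precisely because $\gamma \asymp \sqrt{\log\delta^{-1}}$ at the critical quantile. The only mild point to note is that the condition $n \geq c_3\delta^{-1}$ is not needed for this envelope argument; it would only be required if one instead pursued a concentration-based proof that bounds $\#\{i : |g_i|>\gamma\}$ around its mean $\delta n$. The envelope approach sidesteps that entirely by giving a one-sided bound directly in expectation.
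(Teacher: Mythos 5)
Your proof is correct, and it takes a genuinely different route from the paper's. The paper argues by conditioning on the number $t$ of entries exceeding the $2\delta$-quantile $Q^{-1}(2\delta)$: a Chernoff bound shows $t\approx 2\delta n$ with probability $1-2\exp(-\delta n/8)$, on that event the exceedances are i.i.d.\ truncated Gaussians whose mean is controlled via Lemma \ref{q tail}, and the complementary low-probability event is handled by a crude bound on the top order statistic --- this last step is precisely where the hypothesis $n>c_3\delta^{-1}$ is consumed (the paper sets $n=\beta\delta^{-1}$ with $\beta$ large so that the bad-event contribution $c'\delta n(\sqrt{\log n}+\sqrt{\delta n})\exp(-\delta n/8)$ is absorbed). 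You replace all of this bookkeeping with the deterministic envelope $\sum_{i\leq k}\tilde{\g}_i\leq k\gamma+\sum_{i}(|g_i|-\gamma)_+$ at the quantile $\gamma=Q^{-1}(\delta)$, whose expectation is computed exactly and bounded by the same Lemma \ref{q tail} together with $\gamma\leq\sqrt{2\log\delta^{-1}}$ (which is available inside the proof of Lemma \ref{q tail}, or follows directly from the tail bound $Q(a)\leq e^{-a^2/2}$). What your approach buys is that no concentration, conditioning, or bad-event analysis is needed, and --- as you correctly observe --- the condition $n>c_3\delta^{-1}$ becomes superfluous: the paper only needs it because of the route it takes. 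What the paper's route buys is high-probability control of the number of large entries, but since the lemma and its sole use (inside Lemma \ref{slepian}, which is again an expectation bound, with concentration applied later via Gaussian Lipschitz concentration of the supremum) only require the expectation, your simpler argument fully suffices; both arguments agree that the two contributions scale as $\delta n\sqrt{\log\delta^{-1}}$ because the critical threshold sits at $\gamma\asymp\sqrt{\log\delta^{-1}}$.
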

\begin{proof} Let $t$ be the number of entries of $\g$ obeying $|\g_i|\geq \gamma_{2\delta}=Q^{-1}(2\delta)$. First we show that $t$ is around $2\delta n$ with high probability.
\begin{lemma} We have that $\Pro(|t- 2\delta n|\geq \delta n)\leq 2\exp(-\delta n/8)$.
\end{lemma}
\begin{proof} This again follows from a Chernoff bound. In particular $t=\sum_{i=1}^na_i$ where $\Pro(a_i=1)=2\delta$ and $\{a_i\}_{i=1}^n$ are i.i.d. Consequently $\Pro(|\sum_{i=1}^na_i-2\delta n|>t)\leq 2\exp(-\delta n/8)$.
\end{proof}
 Conditioned on $t$, the largest $t$ entries are i.i.d.~and distributed as a standard normal $g\sim \Nn(0,1)$ conditioned on $g\geq Q^{-1}(2\delta)$. Applying Lemma \ref{q tail}, this implies that
 \beq
 \E[\sum_{i=1}^{\delta n}\tilde{g}_i]\leq  \E[\sum_{i=1}^{t}\tilde{g}_i]\leq tc\sqrt{\delta^{-1}}\leq 3cn\delta \sqrt{\log\delta^{-1}}.\label{est1}.
 \eeq
 The remaining event occurs with probability $2\exp(-\delta n/8)$. On this event, for any $t\geq 0$, we have that
 \beq
 \Pro(\tilde{\g}_1\geq \sqrt{2\log n}+t)\leq \exp(\delta n/8)\exp(-t^2/2)\label{est2}
 \eeq
which implies $\E[\tilde{\g}_1]\leq c'(\sqrt{\log n}+\sqrt{\delta n})$ and $ \E[\sum_{i=1}^{\delta n}\tilde{g}_i]\leq c'\delta n(\sqrt{\log n}+\sqrt{\delta n})$. Combining the two estimates \eqref{est1} and \eqref{est2} yields
\begin{align}
\E[\sum_{i=1}^{\delta n}\tilde{\g}_i]&\leq 3cn\delta \sqrt{\log\delta^{-1}}+c'\delta n(\sqrt{\log n}+\sqrt{\delta n}) \exp(-\delta n/8).\nn
\end{align}
Setting $n=\beta \delta^{-1}$, we obtain 
\beq
\E[\frac{1}{\delta n}\sum_{i=1}^{\delta n}\tilde{g}_i]\leq 3c \sqrt{\log\delta^{-1}}+c'(\sqrt{\log \delta^{-1}+\log \beta}+\sqrt{\beta}) \exp(-\beta/8).\nn
\eeq
We can ensure that the second term is $\order{\sqrt{\log\delta^{-1}}}$ by picking $\beta$ to be a large constant to conclude with the result.
%
\end{proof}
\begin{lemma} [Slepian variation]\label{slep var} Let $\A\in\R^{m\times n}$ be a standard Gaussian matrix. Let $\g\in\R^n,\h\in\R^m,g$ be standard Gaussian vectors. Given compact sets $\Cc_1\subset\R^n,\Cc_2\subset\R^m$ we have that
\beq
\E[\sup_{\vb\in\Cc_1,\ub\in\Cc_2} \ub^*\A\vb+\tn{\ub}\tn{\vb} g]\leq \E[\sup_{\vb\in\Cc_1,\ub\in\Cc_2} \vb^*\g\tn{\ub}+\ub^*\h\tn{\vb}].\nn
\eeq
\end{lemma}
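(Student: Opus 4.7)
The plan is to recognize this as an instance of Slepian's comparison lemma applied to two centered Gaussian processes indexed by the product set $\Cc_1\times\Cc_2$. Define
\[
X_{\ub,\vb} = \ub^*\A\vb + \tn{\ub}\tn{\vb}\, g, \qquad Y_{\ub,\vb} = \tn{\ub}\,\vb^*\g + \tn{\vb}\,\ub^*\h,
\]
both of which are centered Gaussian. Slepian's lemma (in its standard form) asserts that if these two processes have equal marginal variances and the increments of $Y$ dominate those of $X$ in $L^2$, then $\E\sup X \le \E\sup Y$. So the task reduces to checking these two conditions.

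First I would compute the covariances. Using independence of the entries of $\A$ and the independence of $\g$, $\h$, $g$, a direct calculation gives
\[
\E[X_{\ub,\vb}X_{\ub',\vb'}] = (\ub^*\ub')(\vb^*\vb') + \tn{\ub}\tn{\vb}\tn{\ub'}\tn{\vb'},
\]
\[
\E[Y_{\ub,\vb}Y_{\ub',\vb'}] = \tn{\ub}\tn{\ub'}(\vb^*\vb') + \tn{\vb}\tn{\vb'}(\ub^*\ub').
\]
Setting $\ub'=\ub$, $\vb'=\vb$ in both expressions yields $\E X_{\ub,\vb}^2 = \E Y_{\ub,\vb}^2 = 2\tn{\ub}^2\tn{\vb}^2$, which verifies the equal-variance hypothesis.

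For the increment comparison, abbreviate $a=\ub^*\ub'$, $A=\tn{\ub}\tn{\ub'}$, $b=\vb^*\vb'$, $B=\tn{\vb}\tn{\vb'}$. Since marginal variances agree, the increment inequality is equivalent to $\E[X_{\ub,\vb}X_{\ub',\vb'}] \ge \E[Y_{\ub,\vb}Y_{\ub',\vb'}]$, which reads $ab+AB \ge aB+bA$, i.e.\ $(A-a)(B-b)\ge 0$. This is immediate from the Cauchy--Schwarz inequality, which gives $a\le A$ and $b\le B$. The only subtle point is remembering that we want the inequality $\E(X-X')^2 \le \E(Y-Y')^2$, which here goes in the correct direction because the covariance of $X$ is the larger one.

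Having verified both hypotheses, I would invoke Slepian's lemma on the process indexed by $\Cc_1\times\Cc_2$ to conclude $\E\sup_{(\ub,\vb)} X_{\ub,\vb}\le \E\sup_{(\ub,\vb)} Y_{\ub,\vb}$, which is the advertised bound. The only obstacle worth flagging is the trivial one of making sure one invokes Slepian in the right direction (larger increments give a larger expected supremum); otherwise the proof is essentially a bookkeeping exercise plus a single application of Cauchy--Schwarz.
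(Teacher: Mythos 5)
Your proposal is correct and follows essentially the same route as the paper: compare the two centered Gaussian processes on $\Cc_1\times\Cc_2$, check equal variances $2\tn{\ub}^2\tn{\vb}^2$, verify the covariance gap $(\tn{\ub}\tn{\ub'}-\li\ub,\ub'\ri)(\tn{\vb}\tn{\vb'}-\li\vb,\vb'\ri)\geq 0$ via Cauchy--Schwarz, and invoke Slepian. The only cosmetic difference is that the paper states Slepian for finite index sets and then appeals to a standard covering argument for the compact sets, a routine step your write-up leaves implicit.
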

\begin{proof} Consider the Gaussian processes $f(\vb,\ub)=\ub^*\A\vb+\tn{\ub}\tn{\vb} g$ and $g(\vb,\ub)=\vb^*\g\tn{\ub}+\ub^*\h\tn{\vb}$. We have that
\begin{align}
&\E[f(\vb,\ub)^2]=\E[g(\vb,\ub)^2]=2\tn{\ub}^2\tn{\vb}^2,\nn\\
&\E[f(\vb,\ub)f(\vb',\ub')]-\E[g(\vb,\ub)g(\vb',\ub')]\geq (\tn{\ub}\tn{\ub'}-\li\ub,\ub'\ri)(\tn{\vb}\tn{\vb'}-\li\vb,\vb'\ri)\geq 0.\nn
\end{align}
Consequently Slepian's Lemma yield $\E[\sup_{\vb\in\Cc_1,\ub\in\Cc_2} f]\leq \E[\sup_{\vb\in\Cc_1,\ub\in\Cc_2} g]$ for finite sets $\Cc_1,\Cc_2$. A standard covering argument finishes the proof.
\end{proof}
\end{document}